\documentclass[letterpaper,twocolumn,10pt, dvipsnames]{article}
\usepackage{usenix2019_v3}

\usepackage{times}
\usepackage{epsfig}
\usepackage{graphicx}
\usepackage{amsmath}
\usepackage{amssymb}
\usepackage{amsthm}

\usepackage{algorithm}
\usepackage{algorithmic}

\newcommand{\greenup}{\textcolor{Green}{$\uparrow$}}

\usepackage[]{xcolor}
\usepackage{sectsty}
\usepackage{lipsum}%

\usepackage{amsmath}
\usepackage{amsfonts}       %
\usepackage{booktabs}       %
\usepackage{multirow}
\usepackage{multicol}       %
\usepackage{caption}
\usepackage{subcaption}

\newcommand{\argmax}{\mathop{\mathrm{argmax}}}
\newcommand{\set}[1]{\left\{ #1 \right\}}
\newcommand{\serr}{\widehat{\gerr}}
\newcommand{\gerr}{\operatorname{err}}

\newcommand{\beq}{\begin{eqnarray*}}
\newcommand{\eeq}{\end{eqnarray*}}
\newcommand{\beqn}{\begin{eqnarray}}
\newcommand{\eeqn}{\end{eqnarray}}
\newcommand{\ben}{\begin{enumerate}}
\newcommand{\een}{\end{enumerate}}
\newcommand{\bit}{\begin{itemize}}
\newcommand{\eit}{\end{itemize}}
\newcommand{\paren}[1]{\left( #1 \right)}
\newcommand{\R}{\mathbb{R}}
\newcommand{\E}{\mathop{\mathbb{E}}}

\renewcommand{\P}{\mathop{\mathbb{P}}}

\newcommand{\etal}{\textit{et al}.}

\newcommand{\hide}[1]{}

\newtheorem{theorem}{Theorem}

\usepackage{float}

\usepackage{xspace}
\newcommand\mycolor{\color{black}\xspace}
\newcommand\mycolorgreen{\color{black}\xspace}

\usepackage[capitalize]{cleveref}
\crefformat{section}{\S#2#1#3} %
\crefformat{subsection}{\S#2#1#3}
\crefformat{subsubsection}{\S#2#1#3}

\Crefname{table}{Table}{Tables}
\crefname{table}{Tab.}{Tabs.}

\usepackage{tikz}
\usepackage{amsmath}

\usepackage{fancyhdr}

\begin{document}

\date{}

\fancyhf{}
\cfoot{\thepage}

\lhead{Accepted to the 33rd USENIX Security Symposium}

\title{\Large \bf Splitting the Difference on Adversarial Training}

\author{
{\rm Matan Levi}\\
Ben-Gurion University of the Negev\\
matanle@post.bgu.ac.il
\and
{\rm Aryeh Kontorovich}\\
Ben-Gurion University of the Negev\\
karyeh@bgu.ac.il
} %

\maketitle
\thispagestyle{fancy}
\newpage
\begin{abstract}

The existence of adversarial examples points to a basic weakness of deep neural networks. One of the most effective defenses against such examples, adversarial training, entails training models with some degree of robustness, usually at the expense of a degraded natural accuracy. Most adversarial training methods aim to learn a model that finds, for each class, a common decision boundary encompassing both the clean and perturbed examples. In this work, we take a fundamentally different approach by treating the perturbed examples of each class as a separate class to be learned, effectively splitting each class into two classes: ``clean'' and ``adversarial.'' This split doubles the number of classes to be learned, but at the same time considerably simplifies the decision boundaries. We provide a theoretical plausibility argument that sheds some light on the conditions under which our approach can be expected to be beneficial. Likewise, we empirically demonstrate that our method learns robust models while attaining optimal or near-optimal natural accuracy, e.g., on CIFAR-10 we obtain near-optimal natural accuracy of $95.01\%$ alongside significant robustness across multiple tasks. The ability to achieve such near-optimal natural accuracy, while maintaining a significant level of robustness, makes our method applicable to real-world applications where natural accuracy is at a premium. 
As a whole, our main contribution is a general method that confers a significant level of robustness upon classifiers with only minor or negligible degradation of their natural accuracy.


\end{abstract}

\section{Introduction}

Despite their success in a wide variety of challenging tasks, Neural Networks are brittle when faced with
small, imperceptible perturbations to their input; these are commonly referred to as \textit{adversarial examples}, which 
will, with high probability, alter the neural network's classification \cite{szegedy2013intriguing, goodfellow2014explaining, tabacof2016exploring, kurakin2016adversarial, moosavi2016deepfool, carlini2017towards, tramer2017ensemble, carlini2017adversarial, dong2018boosting, xie2019improving, rony2019decoupling}. 
Early 
methods for defense 
against such attacks
were soon broken by
stronger adversaries \cite{athalye2018obfuscated}; 
subsequently, 
{\em adversarial training}
emerged as 
one of
the most effective defenses
\cite{szegedy2013intriguing, goodfellow2014explaining, madry2017towards, zhang2019theoretically}.
These adversarial training techniques aim to learn robust models by solving a min-max optimization problem.

While the inner maximization searches for worst-case adversarial examples during training, and then augments the training data with them, the outer minimization optimizes across model parameters given 
natural and adv. examples.

\begin{figure*}
  \centering
  \begin{subfigure}{0.14\linewidth}
    \label{fig:dbat_1_new}
    \includegraphics[width=0.97\linewidth]{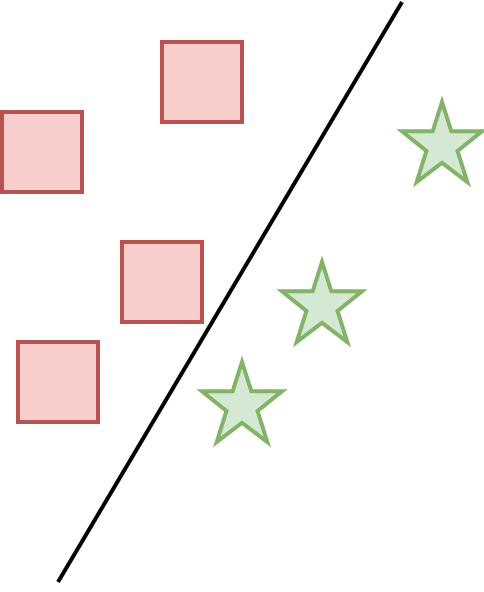}
    \caption{}
  \end{subfigure}
  \hfill
  \begin{subfigure}{0.19\linewidth}
    \label{fig:dbat_2_new}
    \includegraphics[width=0.97\linewidth]{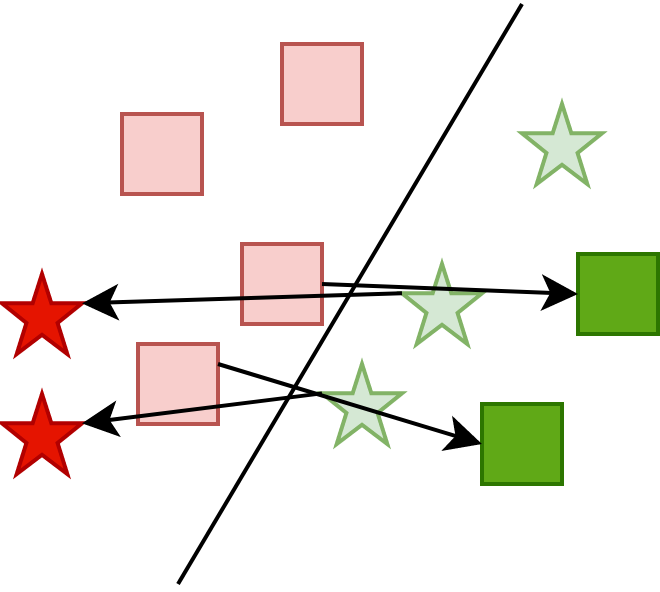}
    \caption{}
  \end{subfigure}
  \hfill
  \begin{subfigure}{0.26\linewidth}
    \label{fig:dbat_3_new}
    \includegraphics[width=0.97\linewidth]{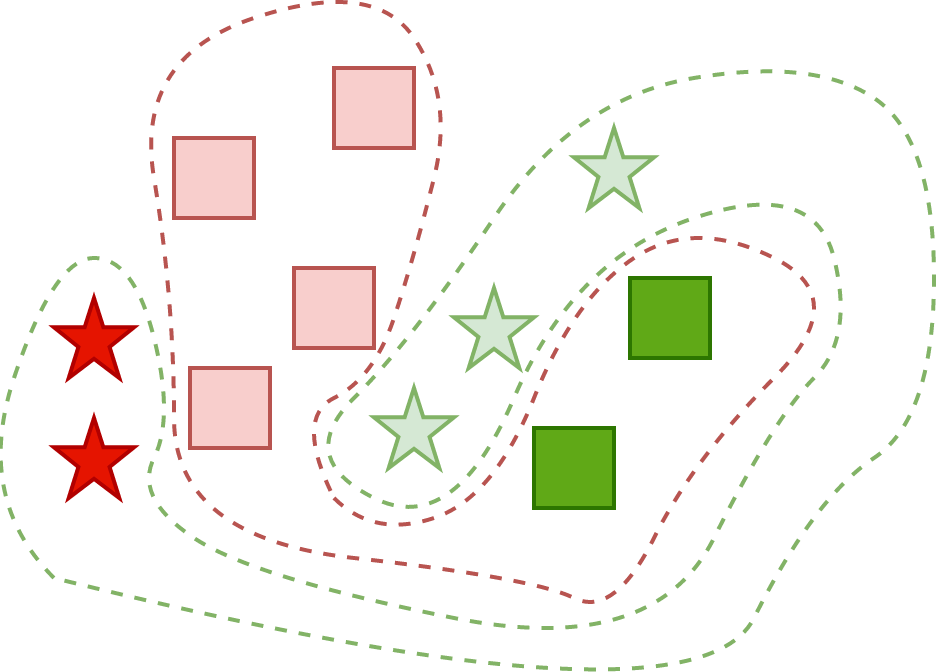}
    \caption{}
  \end{subfigure}
  \hfill
  \begin{subfigure}{0.22\linewidth}
    \label{fig:dbat_4_new}
    \includegraphics[width=0.97\linewidth]{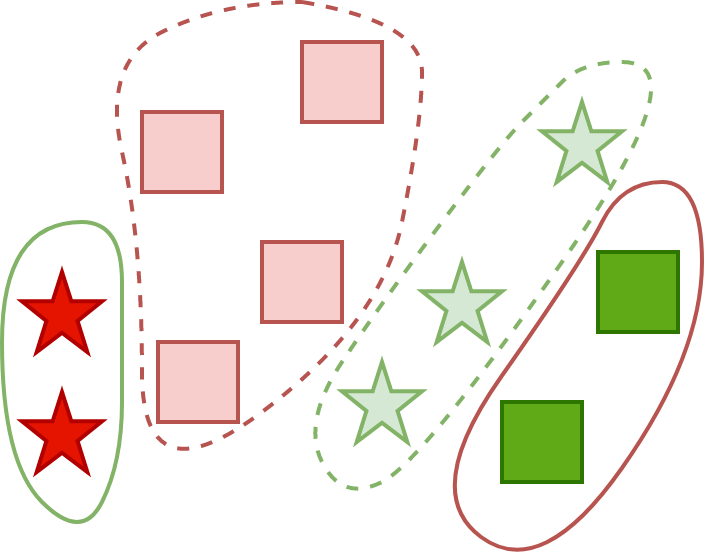}
    \caption{}
  \end{subfigure}
  \caption{Conceptual illustration of our method compared to standard adversarial training. (a) standard training decision boundary (b) generation of adversarial examples that cross the decision boundary (c) standard adversarial training that tries to learn boundaries for both clean and adversarial examples of each class (d) ``splitting the difference''
  (our method). The dashed lines represent the original classes' learned boundaries. The solid lines represent the adversarial classes learned by DBAT. Standard adversarial training learns complex shared boundaries for the two classes, while our method learns four much simpler boundaries.}
  \label{concept}
\end{figure*}

Usually, in standard adversarial training methods, each generated adversarial example is annotated with the source class label. Some works also attach a domain label ({\em clean} or {\em adversarial}) for enhanced techniques, such as using an additional domain classifier \cite{levi2021domain}, adversarial examples detection \cite{carlini2017adversarial, tramer2021detecting}, etc. On the contrary, we hypothesize that adversarial examples generated from a given source class induce a totally distinct class distribution. Therefore, in case one wishes to avoid significant natural accuracy degradation, adversarial training should be adjusted to take these additional classes into account during training. Overall, we make the following contributions:

\begin{itemize}
  \item We introduce a novel approach for training robust models,
  which departs from the established paradigm of attempting to learn a common
  decision boundary for each natural class and its adversarially perturbed version.
Rather, we claim that for each class, 
the adversarial perturbations induce a distinct distribution on the examples,
so much so that it makes more sense to learn it as a separate label, rather than attempting
to shoehorn it into the original one.
Thus, our method doubles the number of classes but ends up learning much simpler decision
boundaries; we provide both theory and experiments in evidence of the efficacy of this trade-off (more below).
To our knowledge, this approach of ``splitting the difference'' (which we formally dub {\em Double boundary adversarial training}, DBAT), is completely novel in the adversarial training setting.

\item 
We perform a comprehensive
battery of
experiments
to
demonstrate that our approach learns robust models 
while also achieving the highest reported natural accuracy, with a significant margin across different datasets. 
This optimal or near-optimal natural accuracy makes our method applicable for real-world applications (autonomous vehicles, face recognition systems, healthcare monitoring, and diagnosis, etc.) that cannot sacrifice natural accuracy for robustness. 
We stress that our aim is not to compete with the state-of-the-art models on
robustness, but rather a general-purpose technique for
endowing a classifier with a significant level of robustness,
while only incurring a minor degradation of natural accuracy. 
\footnote{Our source code is available on  \href{https://github.com/matanle51/Splitting-the-Difference-on-Adversarial-Training}{Github}.}

\item In \cref{sec:theory}, we 
provide a rudimentary plausibility argument
to shed some theoretical light
on
the statistical trade-off presented by DBAT: an increased number of classes to learn, but with much simpler boundaries.

\end{itemize}


\section{Related work}

Since the discovery of adversarial examples by \cite{szegedy2013intriguing}, a wide range of defenses were proposed to enhance robustness. Among these, adversarial training \cite{goodfellow2014explaining, madry2017towards} emerged as one of the most successful methods to train robust models. Madry \etal \cite{madry2017towards} proposed a technique, commonly referred to as standard Adversarial Training (AT), to minimize the cross entropy loss only on adversarial examples with respect to the original class labels.
Throughout the years, standard adversarial training was enhanced in various ways \cite{bai2021recent} -- with changes in the regularization terms \cite{kurakin2016atscale, zhang2019theoretically, wang2019improving, kannan2018adversarial, goldblum2020adversarially, lee2020adversarial, levi2021domain}, model ensemble  \cite{tramer2017ensemble, pang2019improving, yang2020dverge}, adversarial training with adaptive attack budget \cite{ding2018mma, cheng2020cat}, curriculum adversarial training \cite{cai2018curriculum, zhang2020attacks, wang2019convergence}, utilizing out-of-distribution data \cite{lee2021removing}, applying Stochastic Weight Averaging (SWA) \cite{izmailov2018averaging} to flatten the adversarial loss landscape \cite{gowal2020uncovering, chen2020robust}, 
adapting adversarial training to model weights using Adversarial Weight Perturbation (AWP) \cite{wu2020adversarial, tsai2021formalizing},
and combining adversarial training with data augmentation techniques \cite{gowal2021improving, rebuffi2021fixing, rebuffi2021data} and synthetically generated data \cite{sehwag2021robust, pang2022robustness, wang2023better, xu2023exploring}.

Other lines of research include theoretically certified approaches \cite{cohen2019certified, raghunathan2018certified, sinha2017certifiable, raghunathan2018semidefinite, wong2018scaling, wong2018provable, gowal2018effectiveness}, computationally efficient adversarial training \cite{shafahi2019adversarial, wong2020fast, andriushchenko2020understanding, zhang2019you, sriramanan2021towards}, robust overfitting and possible mitigations \cite{rice2020overfitting}, semi/un-supervised adversarial training \cite{carmon2019unlabeled, uesato2019labels, zhai2019adversarially}, 
adversarial self-training and pre-training \cite{jiang2020robust, chen2020adversarial}, incorporating domain adaptation alongside adversarial training \cite{song2018improving, levi2021domain}, and  robust model architecture and custom building blocks \cite{xie2019intriguing, xie2019feature, zhang2019defense, wang2022removing}. 
\mycolorgreen
Specifically, normalizer-free robust training (NoFrost) \cite{wang2022removing} suggested removing all batch normalization (BN) layers from the network during AT, but this approach was shown to have a negative effect on the robustness against stronger attacks. \footnote{See: \href{https://github.com/amazon-science/normalizer-free-robust-training/issues/2}{AutoAttack reduces accuracy of NoFrost}.}
\normalcolor

Some well-known methods include 
\cite{zhang2019theoretically}, who proposed the method TRADES, which uses the Kullback-Leibler (KL) divergence as a regularization term to push the decision boundary away from the data. 

\mycolorgreen
Most related methods to DBAT are ones that suggest changes to the regularization terms of adversarial training (AT) with the goal of reducing natural accuracy degradation in AT. A recent work by \cite{cui2021learnable}, named \textit{LBGAT}, 
used an additional Mean Square Error (MSE) regularization term between the logits of a natural model, alongside the robust. In \cite{cheng2020cat}, authors suggested a work named Customized Adversarial Training (\textit{CAT}) which adaptively customizes the perturbation level and the corresponding label for each training sample, but was later shown to suffer from obfuscated gradients \cite{sitawarin2021sat}. Another work \cite{rade2021reducing} suggested Helper Adversarial Training (\textit{HAT}), which attempts to mimic the discriminative features learned by standard trained networks to improve the accuracy of clean samples with the goal of improving natural accuracy. Recently, Universal Inverse Adversarial Training (\textit{UIAT}) was suggested by \cite{dong2023enemy} to encourage the model to produce similar output probabilities for an adversarial example and its ``inverse adversarial'' counterpart, where the counterpart is generated by maximizing the likelihood in the neighborhood of the natural example. Additionally, the authors of \cite{wang2023generalist} recently suggested \textit{Generalist}, which consists of two base learners separately trained within their respective fields and a
global learner that aggregates the parameters of base learners during the training process. The parameters of base learners are collected and combined to form a global learner at intervals during the
training process.
\normalcolor

In contrast to all of the aforementioned methods, our work suggests a fundamentally different approach.
While other methods treated the generated adversarial examples of a given class as additional instances of that same class when learning the class boundaries (or even used only the adversarial examples), 
our method acknowledges the fact that adversarial examples induce additional class distributions on the source dataset, which essentially doubles the number of classes in the dataset, and therefore these examples should be treated as additional classes of the dataset. We underline that our goal is not to improve robust accuracy compared to the current state-of-the-art in robustness, but rather to equip models with a significant level of robustness, while keeping their natural accuracy as high as possible.

\section{Double Boundary Adversarial Training}

In this section, we introduce our approach for training robust models, 
\textit{Double Boundary Adversarial Training}, DBAT. A conceptual illustration is presented in Figure \ref{concept}, \mycolorgreen and is supported empirically in Appendix \ref{distance-decision}, where we test if such a case is possible by calculating the distance histogram of random examples to the decision boundary.
\normalcolor

\subsection{Motivation behind Double Boundary Adversarial Training}
Tsipras \etal \cite{tsipras2018robustness} argued that robustness may be at odds with natural accuracy, and usually the trade-off is inherent. We 
concur
that this 
indeed
is
typically
the case when adversarial examples are assigned to the same class as the natural examples they were generated from. However, when separating the adversarial examples from their source, and generating new parallel adversarial classes, we may be able to maintain natural accuracy while still achieving significant robustness.

Therefore, we suggest an alternative approach to learning robust models. Our main hypothesis is that natural examples and their adversarial counterparts should not necessarily be assigned to the same class. Instead, for each class, we learn an additional counterpart adversarial class, which will be assigned to the adversarial examples. In essence, the number of classes in the dataset is doubled.

In other words, instead of learning shared boundaries for both natural examples and their adversarial counterparts, where adversarial examples are expected to reside in the same class as their natural counterparts, we suggest treating adversarial examples as additional classes in the dataset. 

While other methods can be thought of as modifying existing boundaries, DBAT learns boundaries for completely new dynamically generated adversarial classes.
We hypothesize that this behavior creates a trade-off, where on one hand, DBAT does not induce significant changes 
to existing boundaries for natural classes in terms of complexity, and keeps them smoother (as we empirically demonstrate on the synthetic dataset experiment in \cref{synthetic} and Figure \ref{toy-viz}) - which mitigates the drop in natural accuracy. But on the other hand - it's a more challenging task to learn completely new classes, which in turn can impact robustness in some tasks.

We highlight that our aim is not to compete with the state-of-the-art on robustness, but rather to find a general-purpose technique that equips classifiers with a significant level of robustness with only minor or neglectable degradation of their natural accuracy.

\subsection{Training procedure}
During the training process, our goal is to learn additional classes, one for each in the original class set.
Given a dataset $\mathcal{S}={\{(x_i,y_i)\}}_{i=1}^n$ with $\mathcal{C}$ classes $\mathcal{Y}=\{0,1,...,\mathcal{C}-1\}$, we define a new class space $\mathcal{Y}_{\small{DBAT}}=\{0,1,...,\mathcal{C}-1, \mathcal{C},\mathcal{C}+1, ..., \mathcal{C} \cdot 2-1\}$ where class label $\mathcal{C}+k,  k\geq0$ is the label of the adversarial examples generated for class $k$.

For each natural example $(x_i,y_i)$, DBAT generates an adversarial example $x_i'$ using targeted-PGD with a random target.
\textit{Then, the adversarial example is assigned with the adversarial class $y_i+\mathcal{C}$ which corresponds to the natural class $y_i$ of the natural example}. To summarize, for each natural example $(x_i,y_i)$, we generate adversarial example and assign it the corresponding adversarial class, $(x_i', y_{i}+\mathcal{C})$. Algorithm \ref{train-algo} 
describes the training procedure.

\noindent
{\bf Remark.}
We 
note that
targeted-PGD typically does not outperform untargeted-PGD when used with standard adversarial training methods. That said, 
we argue
that this observation is not applicable to our setting.
First, since we aim to learn {\em new} generic adversarial classes, it stands to reason that class diversity will be conducive to generalization.
Therefore, using random targeted-PGD mitigates the scenario where adversarial examples generated by untargeted attacks
for a given class focus on small/specific regions of the manifold.
Additionally, using untargeted PGD, attacks can potentially be directed to the adversarial class corresponding to the natural one. To avoid the later, one can use untargeted PGD only on the original classes, by adding a projection back to the original classes during the optimization.
Finally, we also experimented with using Least-Likely targeted-PGD. See Appendix \ref{targeted-pgd} for results comparison.

\begin{algorithm}[tb]
   \caption{DBAT Training}
   \label{train-algo}
\begin{algorithmic}
   \STATE {\bfseries Input:} $\mathcal{S}={\{(x_i,y_i)\}}_{i=1}^n$ with $\mathcal{C}$ classes, and model $f_{\theta}$
   \STATE {\bfseries Parameters:} Batch size $m$, perturbation size $\epsilon$, attack step size $\tau$, current iteration index $k$ (zero-initialized), and learning rate $\alpha$ 
   \REPEAT 
   \STATE Fetch mini-batch $X_{s}={\{x_j\}}_{j=1}^m$, $Y_{s}={\{y_j\}}_{j=1}^m$
   \STATE Initialize $X^{'}=\{\}, Y^{'}=\{\}$
   \FOR{$j=1$ {\bfseries to} $m$ (in parallel)}
    \STATE   { 

    $\textit{\# Generate an adv. example}$

     \STATE { $y'_{j} = \text{Select random label uniformly from}
  \enspace \{0,1,...,\mathcal{C}-1, \mathcal{C},..., \mathcal{C} \cdot 2-1\}/\{j, j+\mathcal{C}\}$}
    
    $x'_{j} = \text{targeted-PGD}(x_j,y'_{j},\epsilon,\tau, f_{\theta})$

    $\textit{\# Save the adv. example with the adv. class label}$
    
    $X^{'} = X^{'} \cup \set{x'_j}$
    
    $Y^{'} = Y^{'} \cup \set{y_j + \mathcal{C}}$
  }
  \ENDFOR
  
  \STATE $\theta = \theta - \alpha\cdot\nabla_{\theta}\ell(X_{s}\cup X^{'}, Y_{s}\cup Y^{'})$
  
\STATE $\theta' = \dfrac{\theta'\cdot k + \theta}{k+1}$
\STATE $k = k + 1$
  \UNTIL{stopping criterion is met}
\end{algorithmic}
\end{algorithm}

\subsection{Inference procedure}
\label{inference}
At inference time, the model will output a probability vector $v$ of size $|v| = 2\cdot \mathcal{C}$ which corresponds to the double number of classes used during training. However, the dataset originally has only $\mathcal{C}$ classes.
Therefore, as our final class prediction, we use the following formula:
\begin{equation} 
    v^{*} = (\max (v_0,v_{\mathcal{C}}), ..., \max(v_{\mathcal{C}-1},v_{2\cdot\mathcal{C}-1})),
\end{equation} 
\begin{equation} 
    \text{predicted class} = \argmax_{0 \leq i \leq \mathcal{C}}{v_{i}^{*}}.
\end{equation} 

In other words, the final class prediction is taken as the class with the maximum probability. If this class is one of the adversarial classes, we return to its natural counterpart.

\subsection{Illustrating DBAT's Decision Boundaries using a Synthetic Dataset}
\label{synthetic}
In Figure \ref{toy-viz} we 
illustrate how DBAT can learn simpler and smoother decision boundaries by 
applying
it
to
a synthetic dataset. We 
exhibit
the decision boundaries for standard AT and DBAT. 
The dataset is composed of isotropic Gaussian blobs with a cluster standard deviation of 0.1 and two features generated using the \verb+make_blobs+ from \cite{scikit-learn}.  
The number of samples for each blob is 10,000. 
The adversary was given a budget of $\epsilon=1.2$ optimized for six steps with a step size of $0.2$. This enables some of the samples to cross the decision boundary. As can be seen in Figure \ref{fig:dbat}, our method learns much smoother and simpler decision boundaries as compared to standard adversarial training in Figure \ref{fig:at}. 

\begin{figure*}[ht]
  \centering
  \begin{subfigure}{0.33\linewidth}
    \centering
    \includegraphics[width=0.8\linewidth]{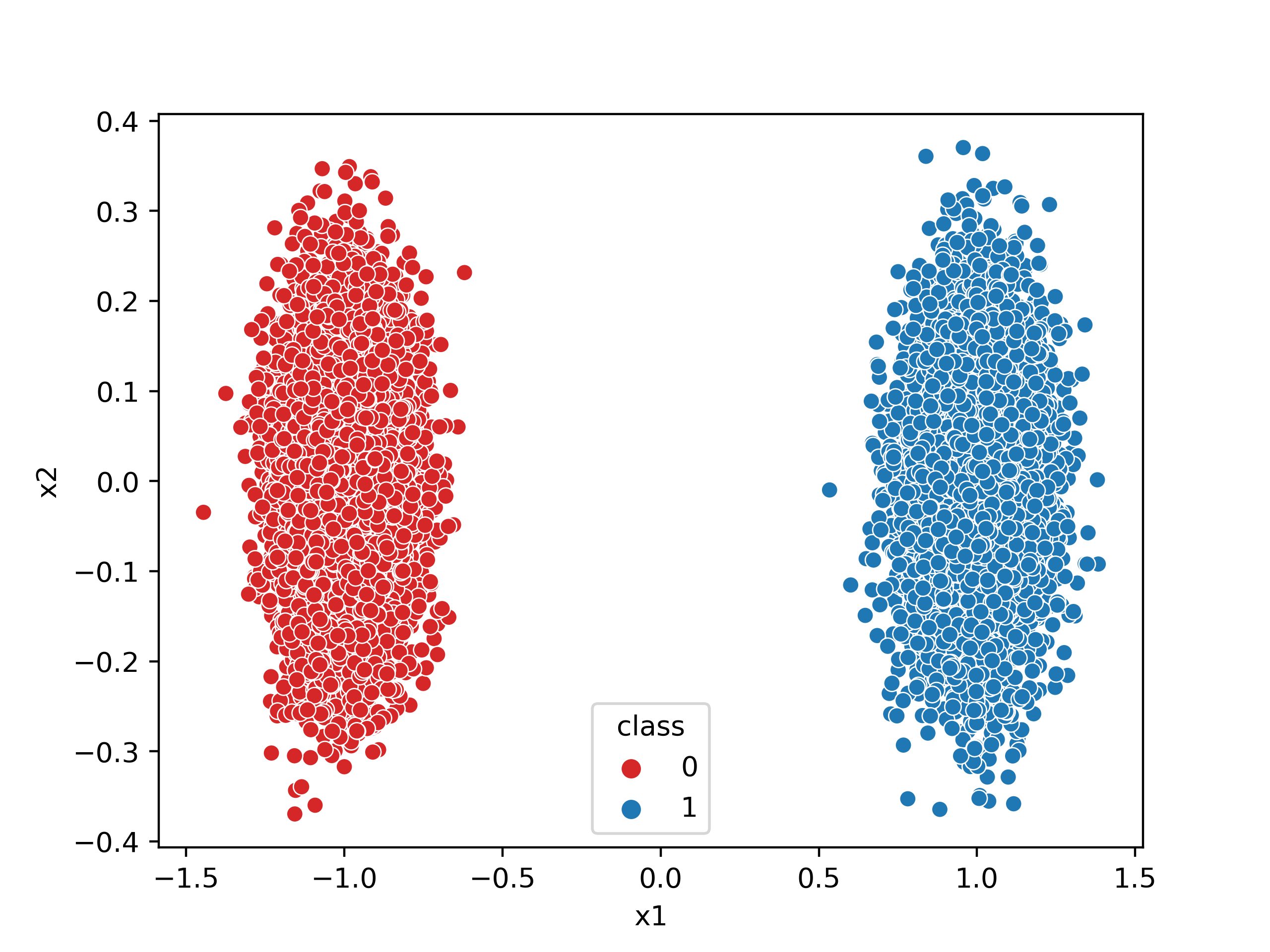}
    \vskip -0.05in
    \caption{Isotropic Gauss. blobs (boundary $x_1=0$)}
    \label{fig:ds}
  \end{subfigure}
  \hfill
  \begin{subfigure}{0.33\linewidth}
    \centering
    \includegraphics[width=0.8\linewidth]{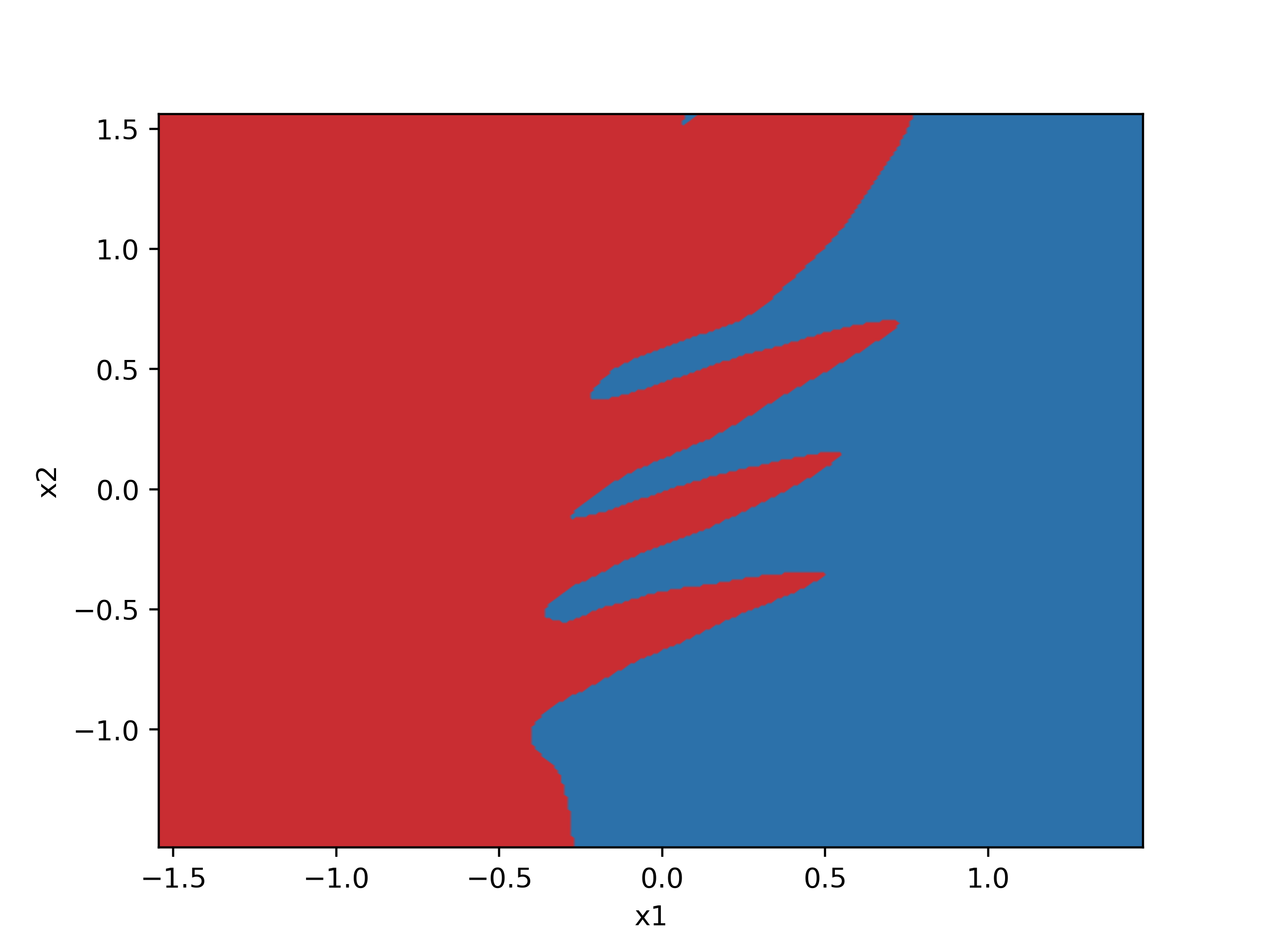}
    \vskip -0.05in
    \caption{Standard AT decision boundary}
    \label{fig:at}
  \end{subfigure}
  \hfill
  \begin{subfigure}{0.33\linewidth}
    \centering
    \includegraphics[width=0.8\linewidth]{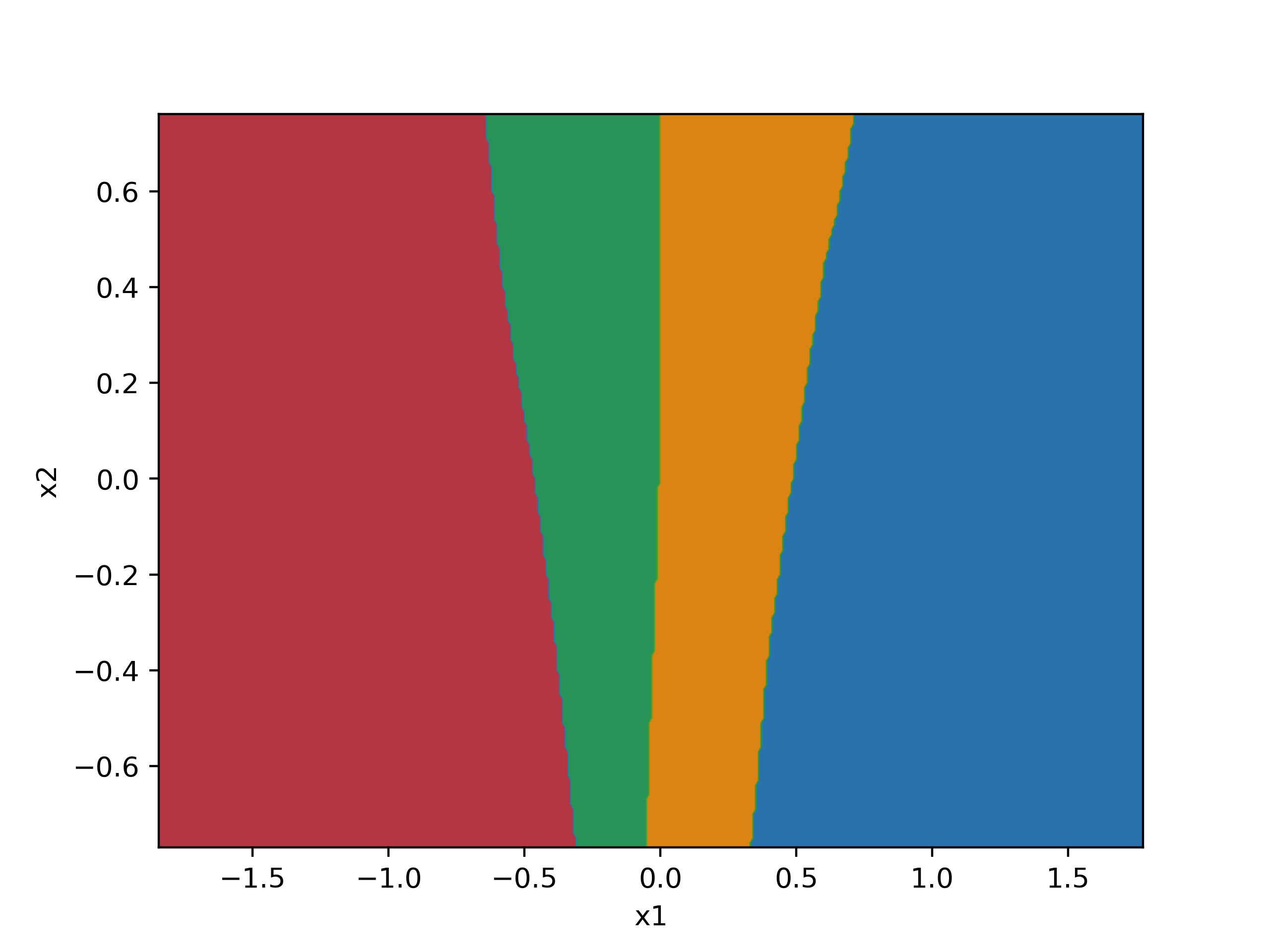}
    \vskip -0.05in
    \caption{\textbf{DBAT} decision boundaries}
    \label{fig:dbat}

  \end{subfigure}
  \caption{Synthetic dataset viz. on 2-classes dataset (a) of two 2D features each. Adversary: 6-step $\ell_\infty$-PGD, $\epsilon=1.2$, $\delta=0.2$.}
  \label{toy-viz}
\end{figure*}


\begin{figure*}[ht]
\mycolor
  \centering
  \begin{subfigure}{0.3\textwidth}
     \centering
    \includegraphics[width=\linewidth]{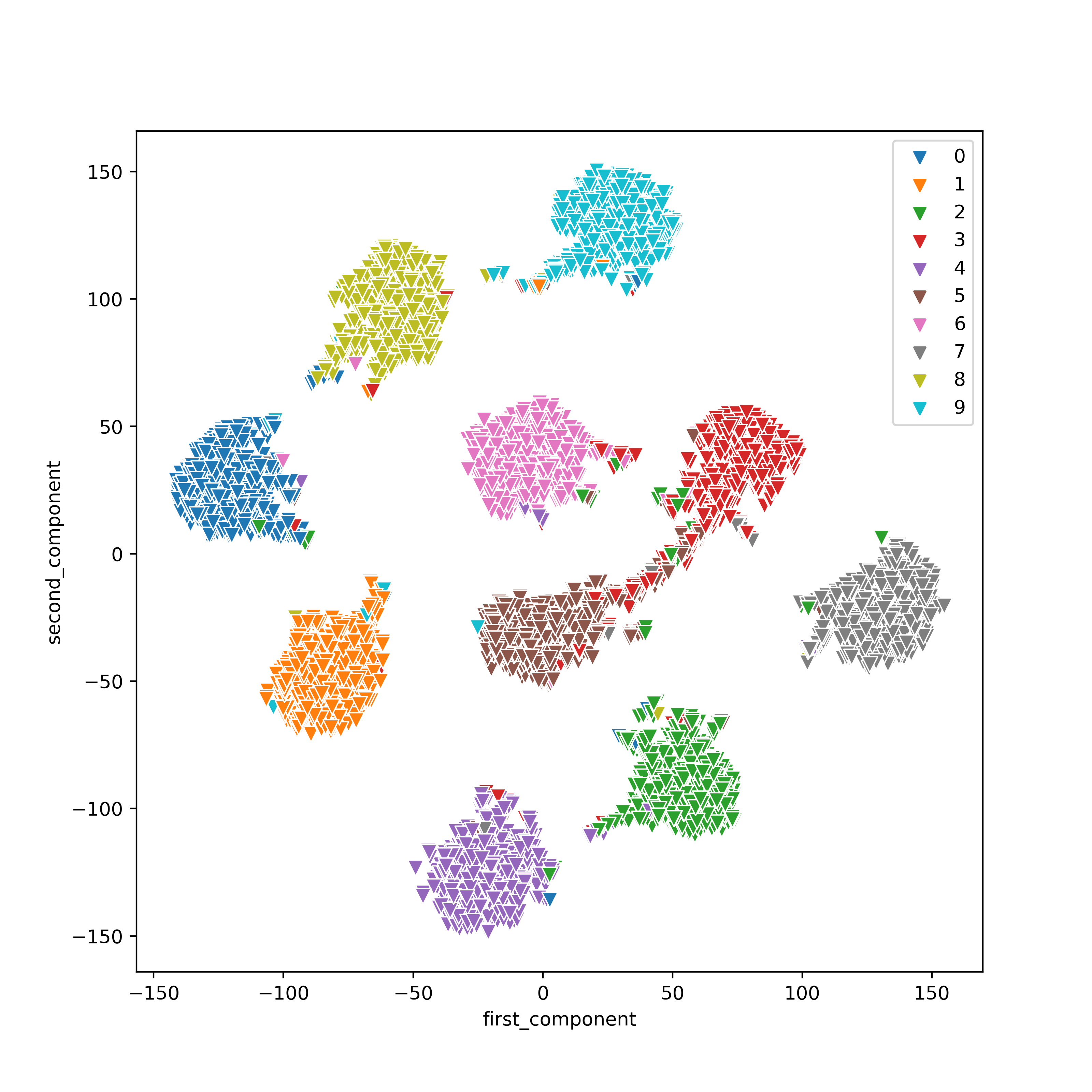}
    \caption{\mycolor DBAT logits for natural examples and original classes}
  \end{subfigure}
  \hspace{0.07\textwidth}
  \begin{subfigure}{0.3\textwidth}
     \centering
    \includegraphics[width=\linewidth]{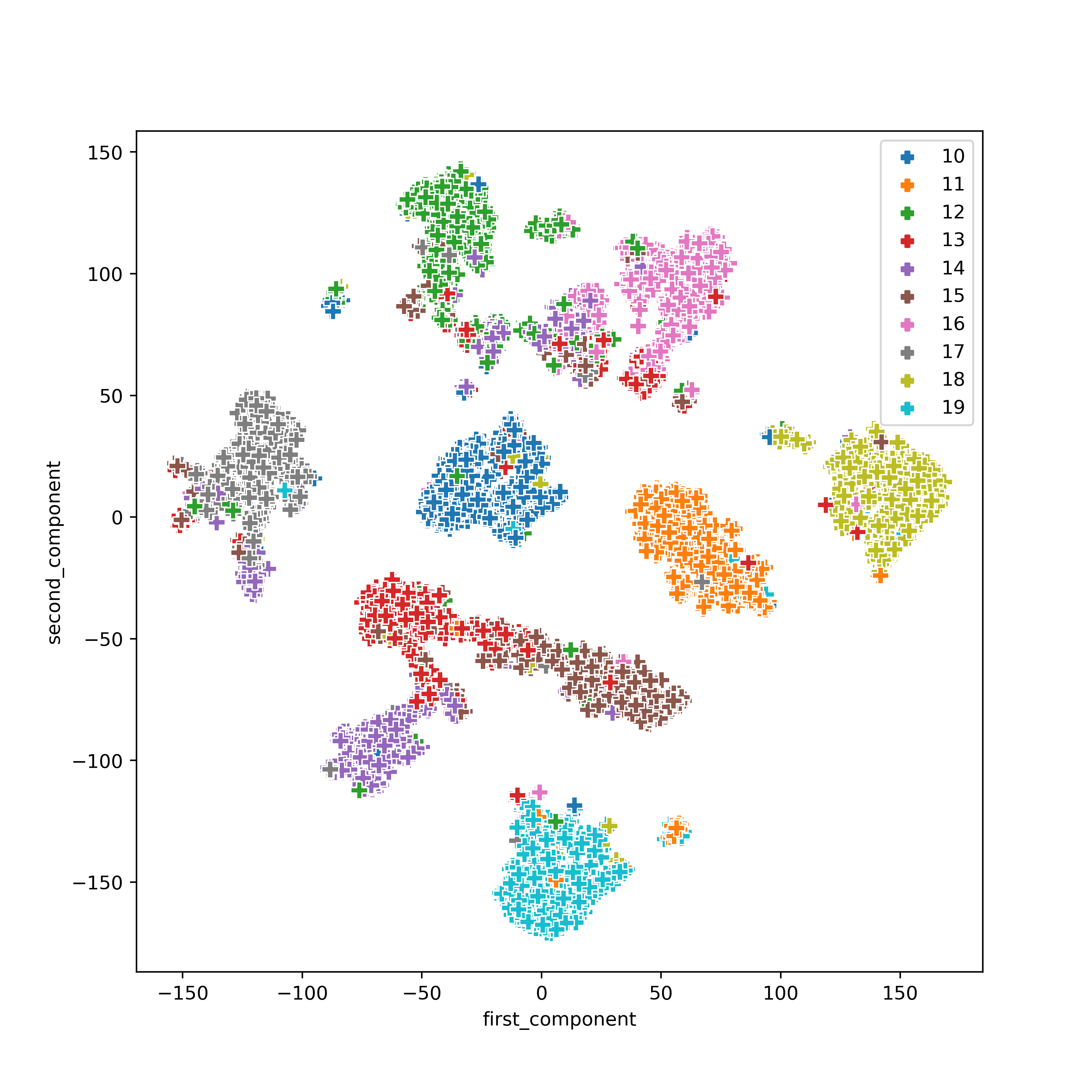}
    \caption{\mycolor DBAT logits for adv. examples on newly generated adv. classes}
  \end{subfigure}
  \hspace{0.07\textwidth} \\
  \begin{subfigure}{0.3\textwidth}
   \centering
    \includegraphics[width=\linewidth]{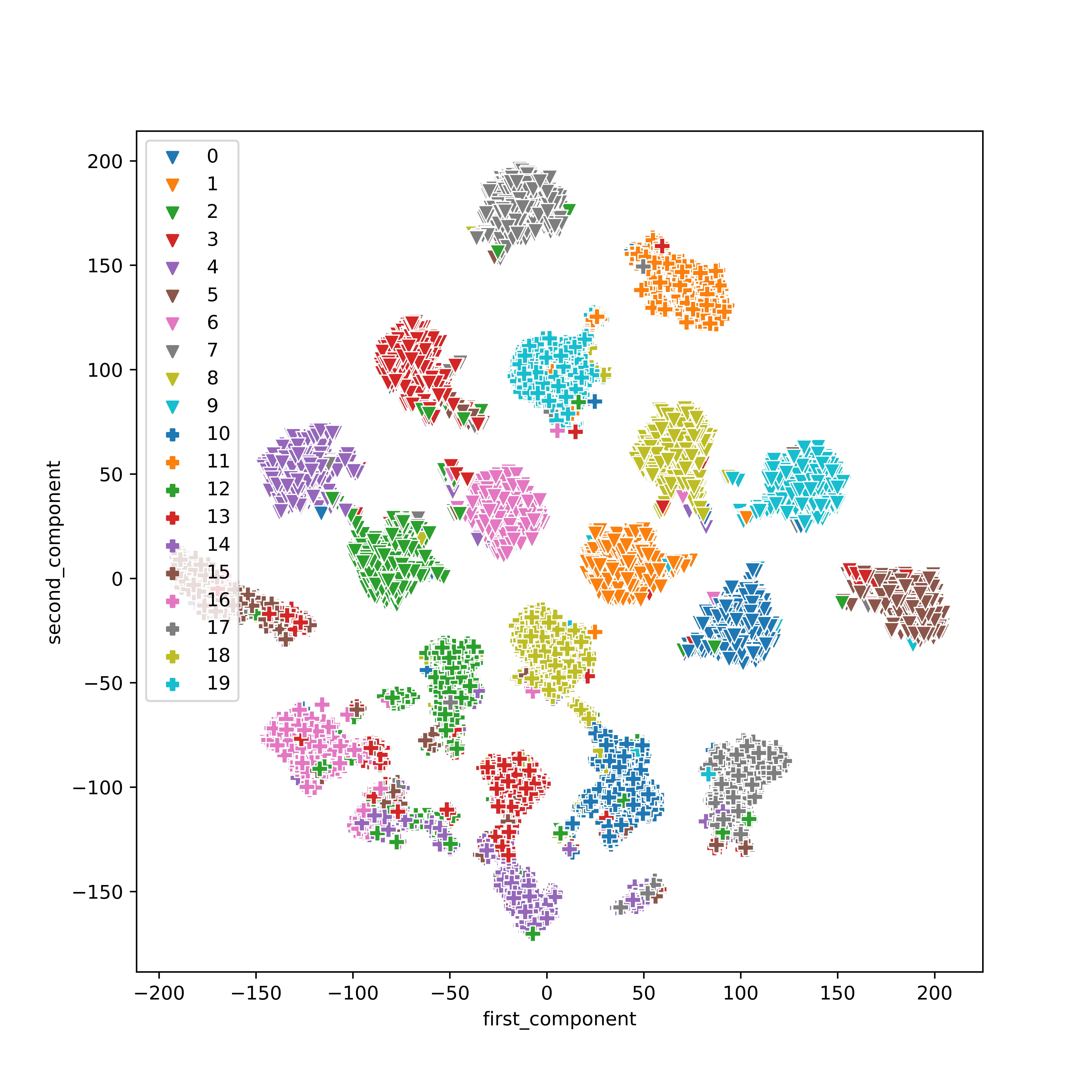}
    \caption{\mycolor DBAT logits for both natural and adv. examples on all classes}
  \end{subfigure}
  \hspace{0.07\textwidth}
  \begin{subfigure}{0.3\textwidth}
   \centering
    \includegraphics[width=\linewidth]{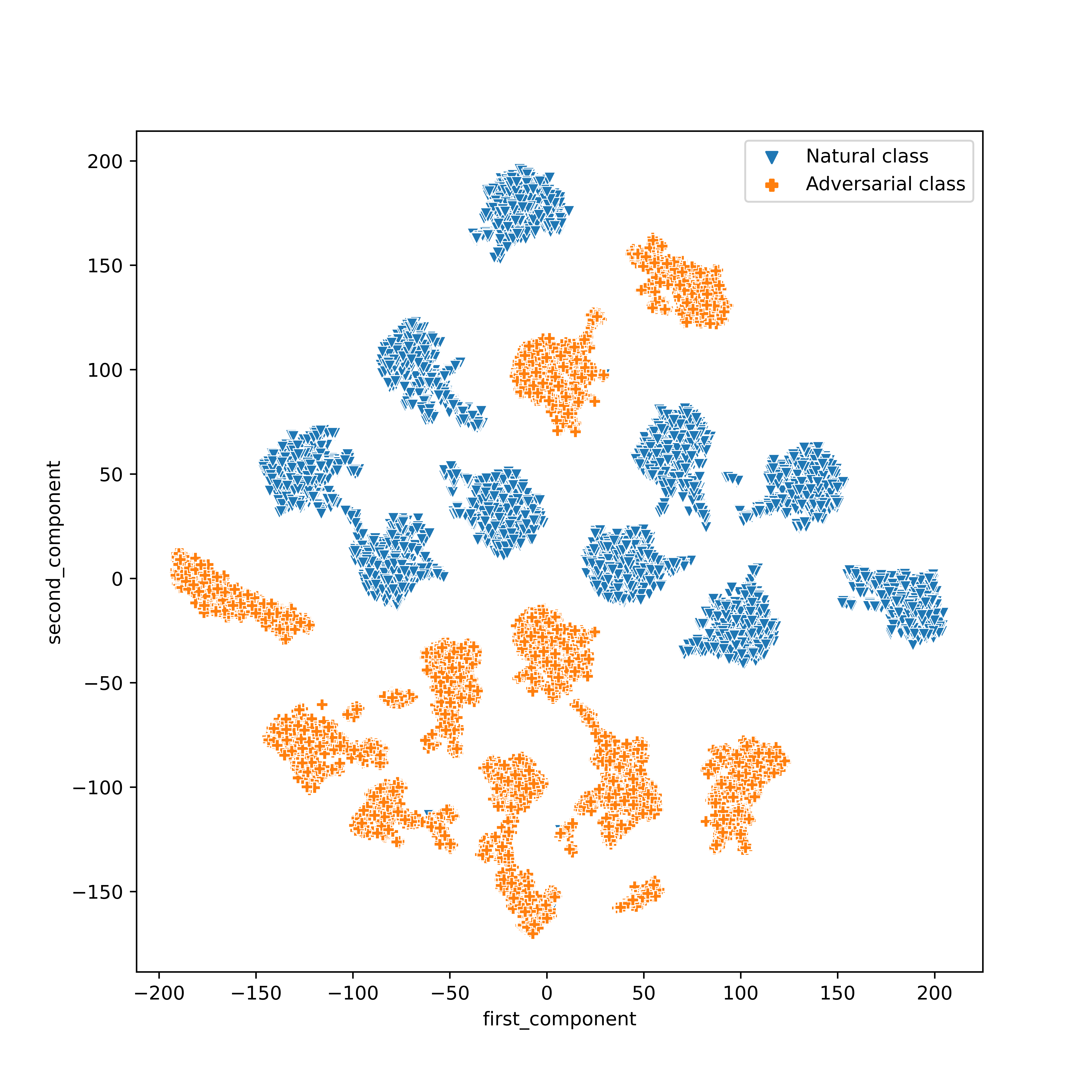}
    \caption{\mycolor DBAT logits in two colors for natural (blue) and adv. examples (orange).}
  \end{subfigure}
  \caption{\mycolor Visualizing the logits of DBAT on CIFAR-10 test set using T-SNE \cite{van2008visualizing} with two components on the model output for (a) natural examples (b) adversarial examples with their new generated adversarial classes (c) combined 2-D visualization of both natural and adversarial examples with all 20 classes (same color for natural class and its adversarial counterpart). (d) same plot as in c, but colored in two colors to observe the separation between natural and adversarial examples.  We can observe the strong separation between classes obtained by DBAT, for both original and newly generated classes. Interestingly, adversarial and natural examples almost don't mix, and the majority of mismatches within each cluster are from the same domain (adversarial/natural).}
  \label{tsne}
\end{figure*}

\mycolor

\section{Theoretical analysis}
\label{sec:theory}

We provide a theoretical plausibility argument for the empirical success of our approach, in the following somewhat idealized setting. 
We identify a phenomenon, which we term the {\em DBAT advantage}, which, when applicable, justifies the use of our technique.

Here we assume familiarity with
the basic notions of PAC learning, such as
the {\em sample error} of a hypothesis,
$\serr(h)$, its {\em generalization error},
$\gerr(h)$, and the Vapnik-Chervonenkis (VC) dimension of a concept class;
these may all be found, e.g., in
\cite{MR1741038}.
Suppose that one trains a $k$-multiclass classifier by reducing it to $k$ binary classification problems via the standard $1$-vs-all method
(i.e., a separate in-class/out-class binary classifier is trained for each of the $k$ classes). 
Suppose for simplicity that
each of the classifiers is trained using the
same concept class $H$ of VC-dimension $V$.
If $h_i$ is the classifier trained for the $i$th class on a sample of size $n$
with sample error $\serr(h_i)$, then
the agnostic PAC bound \cite[Theorem 4.9]{MR1741038}
implies that with probability at least $1-\delta$,
\beqn
\textstyle
\label{eq:vc-bd}
\gerr(h_i)-\serr(h_i)
&\le&
c\paren{
\sqrt{\paren{V+\log(1/\delta)}/n}
},
\eeqn
where 
$\gerr(h_i)$ is the 
generalization error
and $c>0$ is a universal constant.
\paragraph{Claim.}
The following form of
\eqref{eq:vc-bd} to holds for all of the $k$ classifiers,
with probability at least $1-\delta$,
simultaneously:
\begin{equation}
\begin{aligned}
\textstyle
\label{eq:vc-bd-union}
\max_{1\le i\le k}
\gerr(h_i)-\serr(h_i)
\le
c\paren{
\sqrt{\paren{V+\log(k/\delta)}/n}
}
\end{aligned}
\end{equation}
\paragraph{Proof.}
One sets $\delta'=\delta/k$, which guarantees, with probability at least
$1-\delta'$, a generalization error of at most
$c\sqrt{(V+\log(1/\delta'))/n}$
for each class individually, and hence, by a union bound,
for all classes simultaneously, with probability of at least
$1-k\delta'=1-\delta$.
$\blacksquare$

But now suppose that we can express
each $h\in H$ as a union of two simpler concepts:
$h=h_1\cup h_2$, where $h_1,h_2\in H'$, and the latter
has VC-dimension, say, $V/2$.
In this case, we can 
formulate the learning problem
as a $2k$-multiclass classification problem, over the concept class $H'$. By assigning $V/2$ and $2k$, the corresponding bound
in \eqref{eq:vc-bd-union} will now behave as:

\begin{equation}
    \small
    \sqrt{({V/2+\log(2k/\delta)}){n}}
\end{equation}

--- which, for constant $\delta$ and large $V$,
constitutes considerable savings in sample complexity.
The improvement in sample complexity will be even more significant as we consider
$\ell$-fold (rather than just $2$-fold) unions of basic concepts:
$h=h_1\cup h_2\cup\ldots\cup h_\ell$, $h_i\in H'$.
We will refer to this phenomenon 
--- in which decreasing hypothesis complexity while increasing
the number of classes reduces the overall sample complexity ---
as the {\bf DBAT advantage}, 
and discuss it in greater detail
below.

\theoremstyle{plain}
 \newtheorem{claim}{Claim}

We will illustrate this phenomenon in some detail
on the natural example of halfspaces and Euclidean balls in $\R^d$. Since providing the requisite background on Vapnik-Chervonenkis (VC) theory
(in particular: shattering, VC-dimension)
is beyond the scope of the paper,
we refer the reader to 
\cite{mohri2018foundations}.

\paragraph{Halfspaces.}

\begin{claim}
If $H'$ is the collection of all 
homogeneous (going through the origin)
halfspaces 
$\R^d$,
then the VC-dimension 
of $H'$ is $d+1$.
\end{claim}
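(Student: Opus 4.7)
The plan is to establish the claim by the standard two-sided argument for VC-dimensions: a lower bound exhibited by an explicit shattered configuration of $d+1$ points, and a matching upper bound ruling out shattering of any $d+2$ points via Radon's theorem.

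First I would handle the lower bound $\textrm{VC}(H') \ge d+1$ by producing a set $T=\{x_1,\ldots,x_{d+1}\}\subset \R^d$ that is shattered by $H'$. For each target labeling $\sigma\in\{-1,+1\}^{d+1}$ the task is to exhibit $w\in\R^d$ with $\sgn(\langle w,x_i\rangle)=\sigma_i$ for every $i$, which reduces to feasibility of a strict linear system. The cleanest way to guarantee feasibility simultaneously for all $2^{d+1}$ sign patterns is to use the homogenization correspondence: choose points that, up to a fixed linear change of coordinates, look like an affinely independent configuration with an appended coordinate equal to $1$. Under this choice, a homogeneous halfspace in $\R^d$ realizes an arbitrary affine halfspace in one lower dimension, and affine independence of $d+1$ points in general position makes every dichotomy realizable. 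The verification is routine linear algebra once the witnesses are chosen.

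Next I would turn to the upper bound $\textrm{VC}(H')\le d+1$ via Radon's theorem: any set of $d+2$ points in $\R^d$ admits a partition $T=S_1\sqcup S_2$ whose convex hulls share a common point $p\in\textrm{conv}(S_1)\cap\textrm{conv}(S_2)$. Suppose for contradiction that some $(d+2)$-point set were shattered, and pick the Radon partition of it. Consider the dichotomy that assigns $+1$ to $S_1$ and $-1$ to $S_2$. Any $w$ realizing this dichotomy must satisfy $\langle w,x\rangle\ge 0$ on $S_1$ and $\langle w,x\rangle<0$ on $S_2$; by linearity these inequalities extend to the convex hulls of $S_1$ and $S_2$, so the shared point $p$ would have to satisfy $\langle w,p\rangle\ge 0$ and $\langle w,p\rangle<0$ simultaneously, a contradiction.

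The main obstacle I anticipate is the lower bound rather than the upper bound. For \emph{affine} halfspaces the usual witnesses (e.g.\ the origin together with the standard basis) obviously work; but for \emph{homogeneous} halfspaces one must verify that the defining hyperplane can always be pushed through the origin while still realizing an arbitrary dichotomy on the chosen points. The homogenization lift is the mechanism that makes this work, and the whole argument hinges on choosing the $d+1$ witnesses so that the relevant coefficient matrix is invertible. Once that is set up, both directions go through by elementary means.
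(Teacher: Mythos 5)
There is a genuine problem here, and it begins with the statement itself: as literally written the claim is false, and the paper's own proof in fact concludes the opposite of what the claim asserts. Homogeneous halfspaces in $\R^d$ have VC-dimension exactly $d$, not $d+1$; it is \emph{general} (affine) halfspaces that have VC-dimension $d+1$. The paper's proof says exactly this --- it cites the value $d+1$ for general halfspaces and then argues that restricting to halfspaces through the origin costs one degree of freedom, ending with ``i.e., $d$'' --- so the ``$d+1$'' in the claim statement is evidently a typo. Your proposal takes the stated value $d+1$ at face value and tries to prove it, and that is where it breaks.

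Concretely, your lower bound cannot be carried out. You propose to shatter $d+1$ points by writing them, after a linear change of coordinates, as an affinely independent configuration with an appended coordinate equal to $1$, so that homogeneous halfspaces in $\R^d$ act as affine halfspaces one dimension down. But points of the form $(y_i,1)$ have $y_i\in\R^{d-1}$, where at most $d$ points can be affinely independent and where affine halfspaces have VC-dimension $(d-1)+1=d$; the homogenization lift therefore buys you shattering of $d$ points, not $d+1$. The dimension count also shows up in your linear system: each dichotomy imposes $d+1$ sign constraints on a weight vector $w$ with only $d$ free coordinates, so the coefficient matrix you want to invert is $(d+1)\times d$ and cannot be invertible. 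The obstruction is not an artifact of your particular construction: any $d+1$ vectors in $\R^d$ satisfy a nontrivial linear dependence $\sum_i a_i x_i = 0$, and the sign pattern $y_i=\sgn(a_i)$ on the support of this dependence is unrealizable, since any homogeneous $w$ realizing it would give $0=\sum_i a_i\langle w,x_i\rangle>0$. This linear-dependence argument is the correct ``homogeneous'' analogue of your Radon step (your Radon bound of $d+1$ is true but not tight for this class) and yields the sharp upper bound $d$; pairing it with the shattered set $\{e_1,\ldots,e_d\}$ (take $w=\sum_i\sigma_ie_i$) proves the corrected statement, VC-dimension $=d$, which is also what the downstream claims in the paper actually use.
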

\begin{proof}
It is shown in \cite[Example 3.2]{mohri2018foundations}
that the VC-dimension of {\em general}
halfspaces in $\R^d$ is $d+1$. 
The restriction that the halfspace
contain the origin can be ensured by
translating any shattered set
$\{x_1,\ldots,x_{d+1}\}$
by $x_1$
to obtain the shattered set
$\{x_2-x_1,\ldots,x_{d+1}-x_1\}$
of size $d$. This shows that homogeneous
hyperplanes have VC-dimension $1$ less than
the general ones, i.e., $d$.
\end{proof}

\begin{claim}
\label{union-halfspaces}
For $H'$ as above
(the collection of all homogeneous halfspaces 
in
$\R^d$),
the set of all $2$-fold unions of concepts from $H'$
will have VC-dimension at least twice
that of $H$. 
\end{claim}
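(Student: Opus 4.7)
The plan is to exhibit, for each dimension $d$, an explicit set of $2d$ points in $\R^d$ that is shattered by the family of $2$-fold unions of concepts in $H'$. Since the previous claim established that the VC-dimension of $H'$ equals $d$, producing such a shattered set immediately yields the desired lower bound of $2d$.

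My proposed shattered set is $S = \{e_1,\ldots,e_d,-e_1,\ldots,-e_d\}$, where $e_1,\ldots,e_d$ are the standard basis vectors of $\R^d$. Given an arbitrary target subset $T \subseteq S$, I would construct two normal vectors $w_1,w_2\in\R^d$ one coordinate at a time, choosing the pair $(w_{1,i},w_{2,i})$ solely from which of the antipodal pair $\{e_i,-e_i\}$ belongs to $T$: take $(1,-1)$ if both belong to $T$, $(1,1)$ if only $e_i$ does, $(-1,-1)$ if only $-e_i$ does, and $(0,0)$ if neither does. Setting $h_j := \{x\in\R^d : \langle w_j,x\rangle > 0\}$, the verification reduces to the coordinate-wise observation that $e_i \in h_1 \cup h_2$ iff $\max(w_{1,i},w_{2,i}) > 0$, while $-e_i \in h_1 \cup h_2$ iff $\min(w_{1,i},w_{2,i}) < 0$. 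A short case check in each of the four configurations confirms that $(h_1\cup h_2)\cap S = T$, and since $T$ was arbitrary, $S$ is shattered.

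The only subtlety worth flagging — and what I would call out as the main (though minor) obstacle — is the ``neither'' case, where both $w_{1,i}$ and $w_{2,i}$ are assigned zero, forcing us to rely on the convention that the strict homogeneous halfspace $\{\langle w,x\rangle > 0\}$ excludes its boundary hyperplane, so that $w_j=0$ yields the empty halfspace. This is the standard convention and can simply be stated at the outset of the proof; alternatively, if one prefers closed halfspaces, a mild perturbation of $S$ (e.g.\ sliding each $\pm e_i$ slightly off the axis by independent small vectors in a fresh direction) recovers the same shattering argument without any degenerate case. Either way, the construction goes through, giving VC-dimension at least $2d$, as claimed.
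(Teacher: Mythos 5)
Your construction is correct and, at heart, takes the same route as the paper's proof: exhibit an explicit set of $2d$ shattered points, with the two halfspaces of the union sharing the work between two antipodal groups of $d$ points each. The paper's version is more rigid --- it asks for $d$ points in the positive orthant shattered by halfspaces that label the whole negative orthant negative, and symmetrically for the other group, so that $h_1$ is responsible for one group and $h_2$ for the other --- whereas your coordinate-wise assignment lets both $w_1$ and $w_2$ act on both groups (e.g.\ when only $-e_i\in T$ you set $w_{1,i}=-1$, so $h_1$ picks up a point of the ``negative'' group). Your four-case verification is immediate, so this is a perfectly good and arguably more explicit instantiation. One caveat concerns your closing remark. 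The degenerate case you flag is genuine (and is present, unacknowledged, in the paper's construction as well, which likewise needs $w=0$, or the convention $\sgn(0)=-1$, to realize the all-empty labeling), and your first fix --- treating $w=0$ as the all-negative concept --- is the standard and sufficient resolution. The alternative perturbation fix, however, does not handle $T=\emptyset$: if $w\neq 0$ and $\langle w,p\rangle\le 0$ for every $p$ in a finite set whose conic hull is all of $\R^d$, then $\langle w,x\rangle\le 0$ for all $x\in\R^d$, forcing $w=0$; since small perturbations of $\{\pm e_1,\ldots,\pm e_d\}$ still positively span $\R^d$, any union of two nonzero homogeneous halfspaces must contain at least one of the perturbed points. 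Perturbation only cures the boundary ties in the nonempty cases, so the sign convention should be stated as the actual fix and the perturbation remark dropped or weakened.
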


\begin{proof}
For the lower bound,
it suffices
to find $d$ points in the positive orthant
shattered by a set of $2^{d}$ homogeneous
halfspaces $H_1\subset H'$, 
as well as another set of 
$d$ points in the negative orthant
shattered by another set of $2^{d}$ 
homogeneous
halfspaces $H_2\subset H'$, 
such that each $h\in H_1$ labels the negative orthant negative,
while each $h\in H_2$ labels the positive orthant positive.
Evidently, the set of pairwise unions of $h_1\in H_1$
and $h_2\in H_2$ shatters the combined set of $2d$ points.
\end{proof}

This example illustrates that $2$-fold unions
of simple classifiers can double the VC
dimension of the hypothesis class.
In the more general case of $\ell$-fold unions of hyperplanes,
it is known
\cite{DBLP:journals/jmlr/CsikosMK19}
that the VC-dimension is $\Omega(\ell d\log \ell)$,
so the increase in sample complexity is even more significant.
Moreover, 
\cite{DBLP:journals/jmlr/CsikosMK19} showed that
this continues to be true
for many other kinds of Boolean aggregations:
intersections, XORs etc.

\newpage
\paragraph{Euclidean balls.}

\begin{claim}
\label{vc-dim-balls}
The VC-dimension of Euclidean balls in
$
\R^d
$
is $d+1$.
\end{claim}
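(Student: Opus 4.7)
The claim splits into two directions, which I would handle separately. For the lower bound $\mathrm{VCdim} \ge d+1$, I plan to exhibit $d+1$ concrete points that are shattered by Euclidean balls. A clean choice is the origin together with the standard basis, $\{0, e_1, \ldots, e_d\}$, which are affinely independent. For each subset $S$ of these points, I would build a ball containing exactly $S$ by placing the center near the centroid of $S$ but displaced along the ``direction orthogonal'' to the missing vertices so that it sits strictly closer to every point of $S$ than to any non-member; the radius is then picked between these two distance scales. Affine independence of $\{0,e_1,\ldots,e_d\}$ guarantees that such a separating center always exists, and a short distance calculation verifies it. This yields shattering and hence the lower bound.

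For the upper bound $\mathrm{VCdim} \le d+1$, I plan to use the classical paraboloid lift $\phi(x) = (x, \|x\|^2) \in \R^{d+1}$. The identity
\[
\|x-c\|^2 - r^2 \;=\; \langle (-2c,\, 1),\, \phi(x)\rangle + (\|c\|^2 - r^2)
\]
shows that membership of $x$ in the ball $B(c,r)$ is equivalent to an affine halfspace condition on the lifted point $\phi(x)$, whose weight vector has a \emph{prescribed} final coordinate equal to $1$. Consequently, every dichotomy of $x_1,\ldots,x_n \subset \R^d$ realized by a Euclidean ball is realized by a halfspace applied to $\phi(x_1),\ldots,\phi(x_n) \subset \R^{d+1}$.

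Invoking the known bound $\mathrm{VCdim}(\text{halfspaces in }\R^{d+1}) = d+2$ naively yields only $\mathrm{VCdim}(\text{balls in }\R^d) \le d+2$, which is one off. To tighten this to $d+1$, I would exploit the fact that the constraint ``last coordinate of the weight vector equals $1$'' removes one degree of freedom; equivalently, the lifted points are all confined to the paraboloid $z = \|x\|^2$, a codimension-one surface in $\R^{d+1}$. Concretely, for any candidate set of $d+2$ points, I would consider the $(d+2)\times(d+2)$ matrix whose rows are the augmented lifts $(x_i,\,\|x_i\|^2,\,1)$, extract a nontrivial kernel vector, and use its sign pattern to produce a Radon-type partition that is demonstrably incompatible with any ball labeling. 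This rules out shattering of $d+2$ points.

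The main obstacle is precisely the last step: the paraboloid lift only gives the coarse bound $d+2$ for free, and recovering the sharp bound $d+1$ requires squeezing out the one linear constraint forced by the fixed final coordinate. Once that constraint is packaged correctly---either via the Radon partition sketched above or, equivalently, by noting that admissible weight vectors form a $(d+1)$-dimensional affine family rather than the full $(d+2)$-dimensional space of halfspaces---the claimed equality $\mathrm{VCdim} = d+1$ follows by combining with the lower bound.
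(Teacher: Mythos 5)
Your strategy is sound and genuinely different from the paper's on both halves. For the lower bound the paper reduces to halfspaces: any finite set shattered by halfspaces is shattered by balls of sufficiently large radius, and $d+1$ points can be so shattered; you instead shatter the explicit affinely independent set $\{0,e_1,\ldots,e_d\}$ by a direct construction. Both work; yours is more self-contained, while the paper's ball-to-halfspace picture is deliberately set up for reuse in the later claims about unions of balls. For the upper bound the paper invokes Radon's theorem and argues that two mutually exclusive balls realizing a Radon partition would force the two parts to be separable by a hyperplane (the radical hyperplane of the two balls --- a step the paper leaves implicit). Your paraboloid lift makes exactly that linear algebra explicit and, via the observation that the admissible weight vectors form a $(d+1)$-dimensional affine family, reaches the sharp bound by what is essentially Dudley's theorem.

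One step of your upper-bound sketch fails as literally written: for $d+2$ points in general position the $(d+2)\times(d+2)$ matrix with rows $(x_i,\|x_i\|^2,1)$ is nonsingular (already for $d=1$ it is a Vandermonde matrix), so there is no nontrivial kernel vector to extract. The repair is standard: drop the $\|x_i\|^2$ column and take a nonzero $\lambda\in\R^{d+2}$ with $\sum_i\lambda_i x_i=0$ and $\sum_i\lambda_i=0$, which always exists because the $d+2$ vectors $(x_i,1)$ lie in $\R^{d+1}$. Writing the ball's defining function as $f(x)=\|x\|^2+\langle a,x\rangle+b$ with $a=-2c$ and $b=\|c\|^2-r^2$, one gets $\sum_i\lambda_i f(x_i)=\sum_i\lambda_i\|x_i\|^2$, a constant independent of the ball; a ball realizing the dichotomy $\{i:\lambda_i>0\}$ forces this constant to be strictly negative, while one realizing the complementary dichotomy forces it to be strictly positive (both sign classes are nonempty since $\sum_i\lambda_i=0$ and $\lambda\neq 0$), a contradiction. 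With that correction your argument is complete and yields the claimed $d+1$.
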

\begin{proof}
This is a well-known fact,
which we prove for completeness
and also because the argument will be useful
in the sequel. 

Both the upper and lower bounds
on the VC-dimension of balls rely on the fact
that locally, these act like halfspaces:
any two finite sets separated by a halfspace
can also be separated by a ball of large enough
radius (see Figure \ref{theory_ball_example}).

This argument is enough to establish the lower bound: any set that is shattered by general halfspaces is also shattered by Euclidean balls (see Figure \ref{theory2_balls}),
and we know from 
\cite[Example 3.2]{mohri2018foundations}
that such a set can be as large as $d+1$.

For the upper bound, we invoke
Radon's theorem 
\cite[Theorem 3.4]{mohri2018foundations}:
Any set $S$ of $d+2$ points in $\R^d$ can be partitioned into two subsets $S_1$ and $S_2$ such
that the convex hulls of $S_1$ and $S_2$ intersect.
Such a partition will be called a {\em Radon partition}.
Suppose, for a contradiction, that
the Euclidean balls shatter some set $S$
of $d+2$ points. Then there exists a
Radon partition of these into $S_1$ and $S_2$.
But shattering means that some ball $B_1$ contains
$S_1$ and not $S_2$, while another ball $B_2$
contains $S_2$ but not $S_1$.

This means that $S_1$ and $S_2$ must be separable by a hyperplane.
We conclude that the Euclidean balls cannot shatter
any more points that the halfspaces, which is 
at most $d+1$.
\end{proof}

\begin{figure}[h]
\centering
\begin{minipage}[b]{.45\columnwidth}
\centerline{\includegraphics[width=0.45\columnwidth]{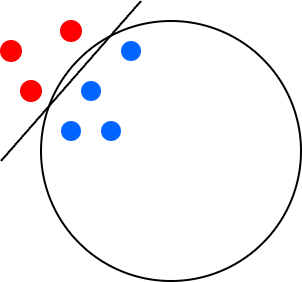}}
\caption{
\mycolorgreen Two finite sets separated by a halfspace can also be separated
by a ball of large enough radius.}
\label{theory_ball_example}
\end{minipage}\qquad
\begin{minipage}[b]{.45\columnwidth}
\centerline{\includegraphics[width=0.6\columnwidth]{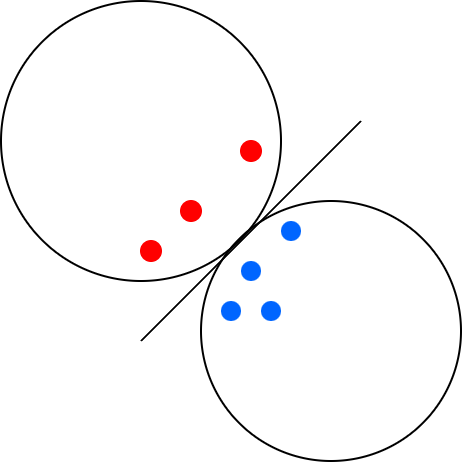}}
\caption{\mycolorgreen
A finite set that is shattered by balls is also shattered by halfspaces.}
\label{theory2_balls}
\end{minipage}
\end{figure}

\begin{claim}
If $H'$ 
is the collection of the Euclidean balls in $\R^d$
(with VC-dimension $d+1$, as shown above),
and
$H$
is the set of all $2$-fold unions of concepts from $H'$,
then the VC-dimension of $H$
is at least $2d$
\end{claim}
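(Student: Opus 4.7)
The plan is to mimic the halfspace argument from Claim \ref{union-halfspaces}, exploiting the fact (already established in the proof of Claim \ref{vc-dim-balls}) that balls can locally emulate halfspaces. The strategy is to construct two well-separated clusters of $d$ points each, each shatterable independently by Euclidean balls, and arranged so that the shattering balls used for one cluster never reach the other. Then any labeling of the combined $2d$ points can be realized by a 2-fold union of balls.

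First I would fix a set $P=\{p_1,\ldots,p_d\}\subset\R^d$ that is shattered by Euclidean balls. Such a $P$ exists because the VC-dimension of balls is $d+1$, which already exceeds $d$. For each subset $S\subseteq P$, let $B_S$ be a ball witnessing the dichotomy, i.e., $B_S\cap P = S$. There are at most $2^d$ such balls, each bounded, so the entire family $\{B_S: S\subseteq P\}$ is contained in some bounded region, say a ball $\mathcal{R}$ of radius $R$ centered at the origin.

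Next I would choose two translation vectors $v_1,v_2\in\R^d$ with $\|v_1-v_2\|>4R$, so that the translated regions $\mathcal{R}+v_1$ and $\mathcal{R}+v_2$ are disjoint. Define the clusters $P_i=P+v_i$ for $i=1,2$; their union $P_1\cup P_2$ has $2d$ points (disjoint for $v_1\ne v_2$). For any subset $T_i\subseteq P_i$, the translated ball $B_{T_i-v_i}+v_i$ meets $P_i$ in exactly $T_i$ and lies inside $\mathcal{R}+v_i$, hence is disjoint from $P_{3-i}$. Given an arbitrary dichotomy of $P_1\cup P_2$, specified by its positive subsets $T_1\subseteq P_1$ and $T_2\subseteq P_2$, the 2-fold union
\[
(B_{T_1-v_1}+v_1)\;\cup\;(B_{T_2-v_2}+v_2)\;\in\;H
\]
intersects $P_1\cup P_2$ in exactly $T_1\cup T_2$, so $H$ realizes this dichotomy. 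As the dichotomy was arbitrary, $P_1\cup P_2$ is shattered by $H$, proving $\mathrm{VCdim}(H)\ge 2d$.

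The main obstacle is the potential for \emph{cross-contamination}: a ball chosen to carve out a subset of one cluster might inadvertently contain points of the other cluster (in contrast to the halfspace case, where one can cleanly use opposite orthants and orient each halfspace to ignore the opposite region). I would address this precisely by the bounded-support argument above — because the at most $2^d$ shattering balls for a single cluster are uniformly bounded, translating the two clusters sufficiently far apart guarantees disjointness of the relevant ball families from the opposite cluster, and the union argument goes through.
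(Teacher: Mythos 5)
Your proof is correct, and its overall strategy matches the paper's: build two $d$-point clusters, each shattered by balls, positioned so that neither cluster's witnessing balls can touch the other cluster, and then realize any dichotomy of the combined $2d$ points by a $2$-fold union. Where you differ is in the mechanism. The paper routes through halfspaces: it reuses the two orthant-separated $d$-point sets shattered by homogeneous halfspaces from the halfspace claim, converts each halfspace into a large ball, and then dilates the points away from the origin to force the two families of $2^d$ balls apart. You instead work with balls directly: you take any $d$-point set shattered by balls (which exists since the VC-dimension of balls is $d+1$), observe that the finitely many witnessing balls are jointly contained in a bounded region, and translate two copies of the configuration far enough apart that the regions are disjoint. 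Your version is a bit more self-contained and sidesteps a subtlety in the paper's route --- the halfspace-to-ball conversion produces balls whose radii must be chosen before one can argue they stay clear of the opposite cluster, whereas your boundedness-then-translate argument handles the ``cross-contamination'' issue cleanly and explicitly, including the empty-subset ball. The paper's version, for its part, keeps the two ball claims visibly parallel to the two halfspace claims, which is its expository point. Both arguments are valid and yield the same bound of $2d$.
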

\begin{proof}
The argument proceeds by the same
reduction from balls to halfspaces
employed in the 
proof of the lower bound
in
Claim~\ref{vc-dim-balls}:
any set that can be shattered by halfspaces
can also be shattered by balls.
Now,
as in the proof of
Claim~\ref{union-halfspaces},
we construct two disjoint sets 
shattered by
homogeneous halfspaces,
consisting of $d$ points each,
in the positive and negative orthants, respectively.
Each is also shattered by balls,
and by dilating the points sufficiently
far from the origin, we can ensure that
the $2^d$ balls shattering the positive-orthant
set are disjoint from their counterparts shattering
the negative-orthant set.
Thus, the $2$-fold unions of Euclidean balls
shatter a set of size $2d$.
\end{proof}

The above discussion was more of a proof-of-concept illustration, since VC-dimension is not a particularly practical tool in analyzing deep neural networks with a large number of weights. In Appendix \ref{sec:rade}, we show that the thrust of our point continues to hold for the Rademacher complexity as well, which is far more practical as far as providing finite-sample generalization bounds \cite{BartlettFT17,yin2019rademacher}. Using the analysis of \cite{foster-rakhlin19}, 
we show that the Rademacher complexity of $\ell$-fold unions grows with $\ell$ roughly as $\sqrt \ell$.

An additional qualification of our
plausibility argument is 
that adversarial loss is distinct from the 0-1 loss discussed above.
This is indeed a limitation of our analysis, although in some instances it is possible to control
adversarial risk via a VC-type analysis
\cite[Theorem 2]{attias2019improved}.
Finally, an implicit assumption we have made above is that the adversarial perturbations are {\em non-adaptive}: the adversary has fixed a (possibly, stochastic) perturbation function in advance of seeing any data 
---
e.g., a neural network trained on a hold-out set, similar to black-box settings. This lets us argue that the examples continue to be iid, under a new (unknown, perturbed) distribution. This assumption, while
not entirely realistic, is often made to facilitate analysis \cite{attias2019improved}.

Modulo these qualifications, the above discussion provides evidence
that
when training $k$ classifiers
from a concept class with 
high complexity,
it may be advantageous to decompose them into unions
of simpler classifiers.
The blow-up of the number of classifiers is more than
compensated in the reduction of classifier complexity.

\normalcolor

\section{Experiments} \label{experiments}


To emphasize the advantage of Double Boundary Adversarial Training, we conduct extensive evaluations. The evaluation process of DBAT includes white-box and black-box settings, Auto-Attack, natural corruptions \cite{hendrycks2018benchmarking}, unforeseen adversaries, and ablation studies. All results are averaged over 5 runs while omitting one standard deviation. These evaluations demonstrate that the results obtained
are not a consequence of what is commonalty referred to as \textit{obfuscated gradients}~\cite{athalye2018obfuscated}.

We compare our method to some of the most well-known adversarial training methods -- Standard AT \cite{madry2017towards}, and TRADES \cite{zhang2019theoretically}, alongside related work -- LBGAT \cite{tsai2021formalizing}, Generalist \cite{wang2023generalist}, CAT \cite{cheng2020cat}, HAT \cite{rade2021reducing}, and UIAT \cite{dong2023enemy}.
Our evaluation starts with the common CIFAR-10 benchmark. In \cref{corruptions} and \cref{generalization}, we demonstrate the generalization of our method to other datasets by experiments on CIFAR-100 \cite{krizhevsky2009learning} and SVHN \cite{netzer2011reading}.
We use the WRN-34-10 \cite{zagoruyko2016wide} architecture for CIFAR-10 and CIFAR-100, and the PreAct ResNet-18 for SVHN. 
As suggested in \cite{rebuffi2021data}, we combine Stochastic Weight Averaging (SWA) \cite{izmailov2018averaging}, and Cutout \cite{devries2017improved}  with window length eight. We used "concatenated batches" as suggested by \cite{sitawarin2021improving}.  Attacks are generated using $\ell_{\infty}$-PGD with $\epsilon=8/255$, and perturbation step size 1/255 for 10 attack steps. Full experiment settings are detailed in Appendix \ref{exp-details}.

\subsection{Threat model}
\label{threat-model}
Our trained model
outputs a vector whose dimension is
twice
the number of classes in the dataset. 
That is, one-half of the 
coordinates
corresponds to the original classes, while the second half corresponds to the new adversarial classes.
Therefore, when considering the adversary's capabilities, specifically for untargeted white-box attacks, we need to explicitly define how the optimization is done.
Recall that the aggregation/projection described in \cref{inference} takes place only at inference time. Moreover, since the projection is not part of the computation graph, the defender can switch it to any desired metric (max, mean, median, log, exp, etc.) at any time during inference, without updating the network.
Therefore, there are three possible adversaries: 

First, the most basic adversary is one who does not utilize any projection function while attacking. 

Second, a more advanced (and perhaps most realistic) adversary, 
is
one who knows that the defender is utilizing a projection function, but does not have inference time access to the defender 
(e.g., a model was published at model zoos), and needs to conjecture the projection function while attacking.

Third, and most powerful adversary (although the least realistic one), is one who can 
access not only the entire network parameters but also real-time access to the defender's system and projection function at any given time during inference.

\begin{table}[h]
\mycolor

\caption{Natural, PGD
, and Auto-Attack (AA) robust accuracy against the fully adaptive white-box perfect knowledge adversary (i.e., "Inference real-time access"). The attack is an $\ell_{\infty}$ attack on CIFAR-10 with WRN-34-10. Similar to our method, the presented results are for models that \textbf{do not utilize additional data} during training. The Natural method refers to a model trained using standard training under the same hyper-parameters settings.
Green represents the best natural accuracy among the robust models. Orange represents the second-best natural accuracy among the robust models. The range alongside the green arrow represents the natural accuracy improvement compared to the other methods.}
\label{nat_aa}
\begin{center}
\begin{sc}
\begin{tabular}{lcccr}
\toprule
Method & Natural Acc. & PGD & AA \\
\midrule
\underline{DBAT} (Ours) & {\color{Green} \textbf{95.01} (\greenup 4--10.1\%)} & 54.61 &
40.08 \\
AT & 85.10 & 54.46 & 51.52 \\
TRADES & 84.92 & 55.56 & 53.08 \\
LBGAT  & 88.22 & 54.31 & 52.86 \\
Generalist & {\color{YellowOrange} 91.03} & 56.92 & 52.91 \\
HAT &  84.86 & 52.30 & 48.85 \\
UIAT & 85.01 & 54.63 & 49.11 \\
CAT & 89.61 & 73.38 & 34.78 \\
\midrule
Natural & 95.43 & 0 & 0 \\
\bottomrule
\end{tabular}
\end{sc}
\end{center}
\normalcolor
\end{table}

\begin{figure*}[ht]
  \centering
  \begin{subfigure}{0.62\columnwidth}
    \centering
    \includegraphics[width=\linewidth]{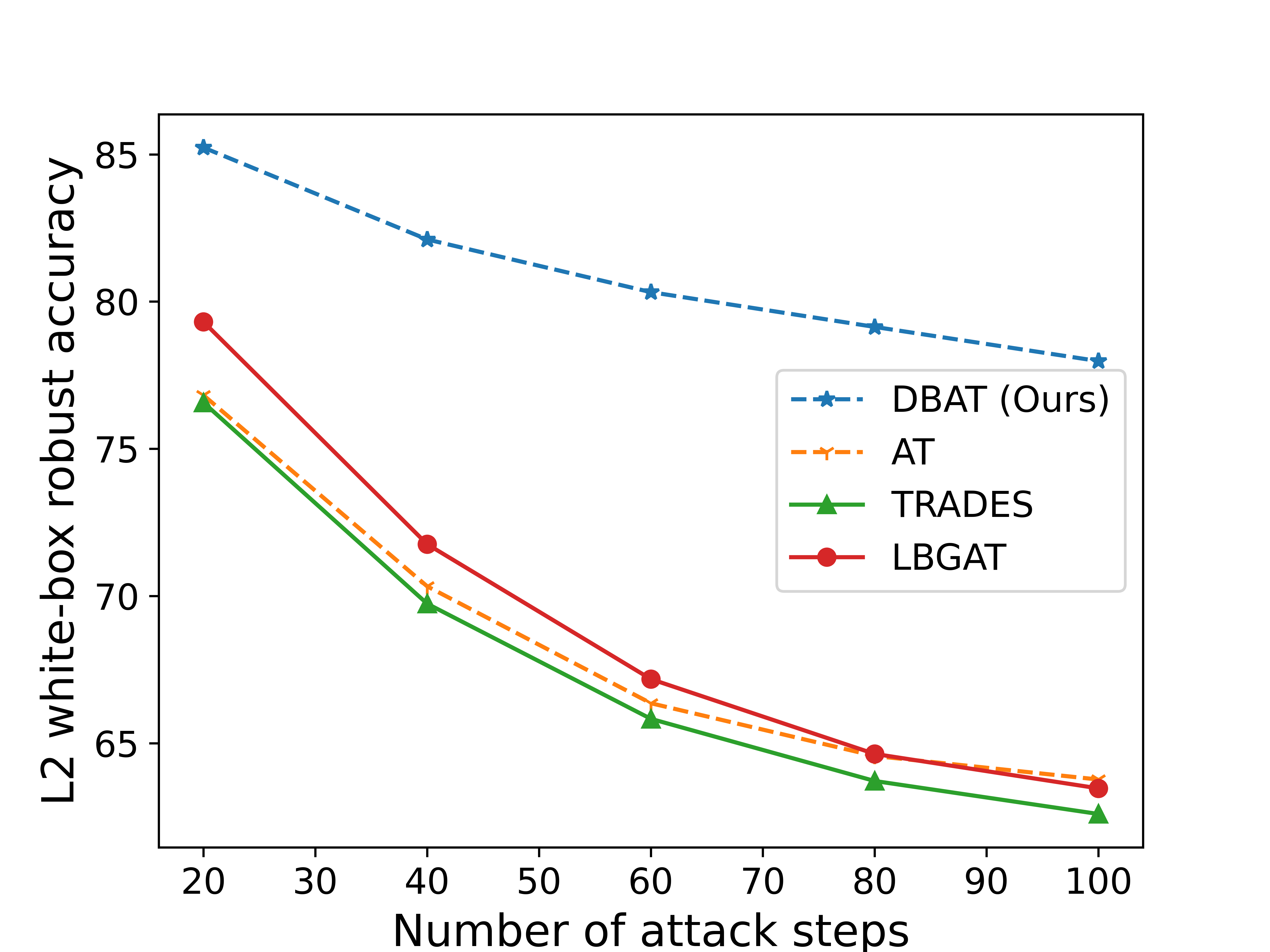}
    \caption{$\ell_{2}$-PGD}
  \end{subfigure}
  \begin{subfigure}{0.62\columnwidth}
    \centering
    \includegraphics[width=\linewidth]{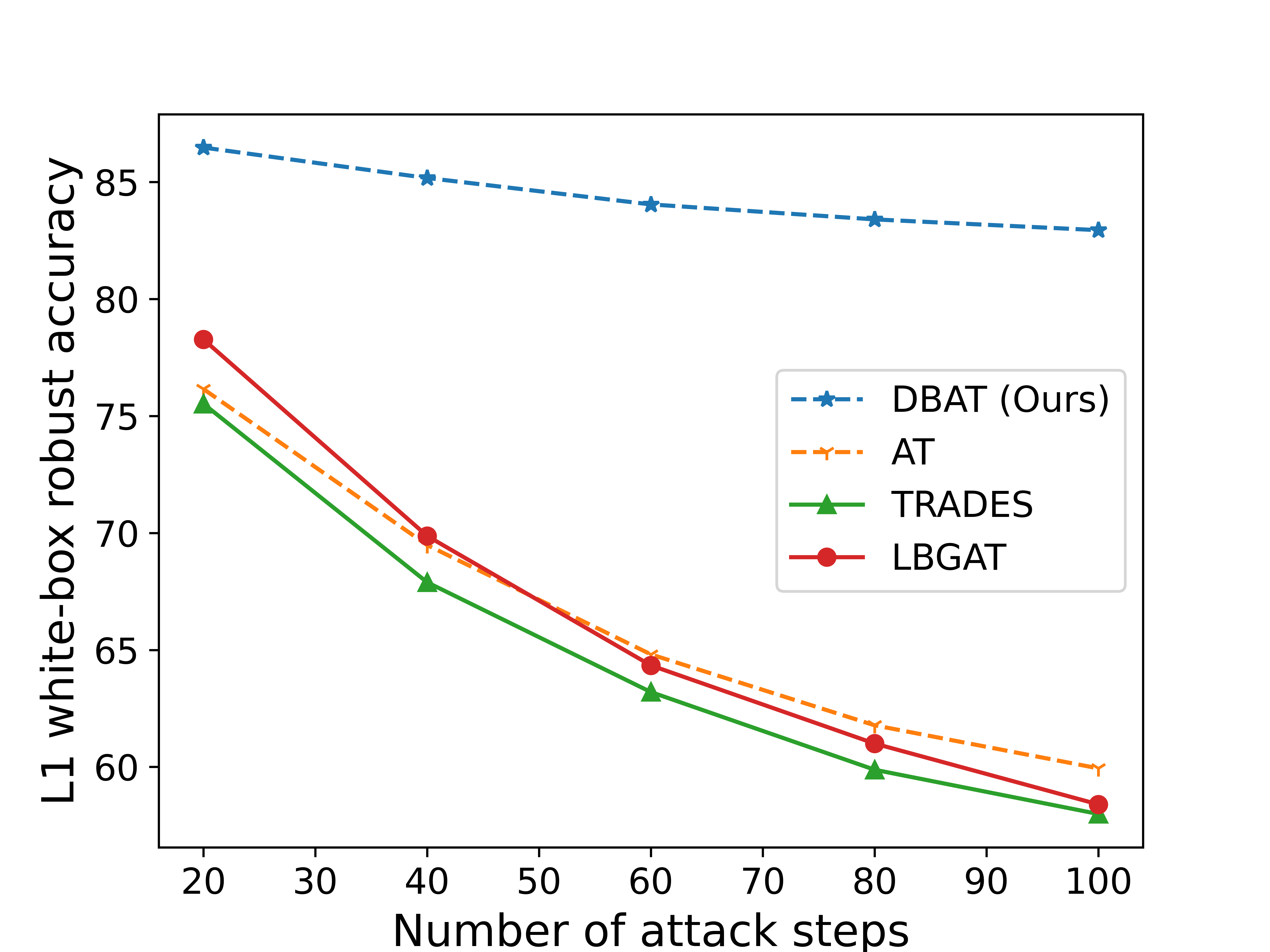}
    \caption{$\ell_{1}$-PGD}
  \end{subfigure}
  \begin{subfigure}{0.62\columnwidth}
    \centering
    \includegraphics[width=\linewidth]{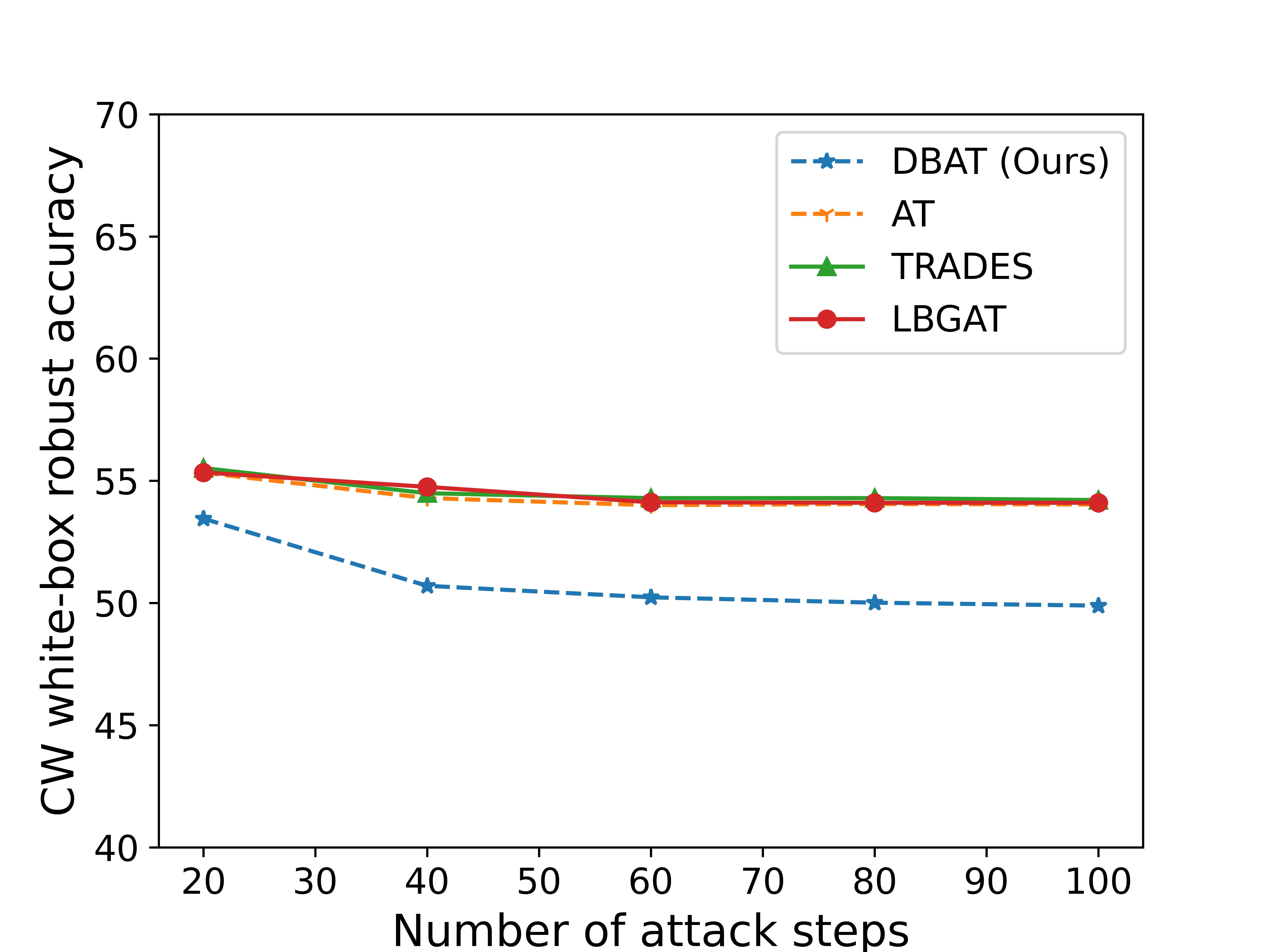}
    \caption{$CW_{\infty}$}
  \end{subfigure}
  \hspace{0.07\textwidth}
  \begin{subfigure}{0.62\columnwidth}
    \includegraphics[width=\linewidth]{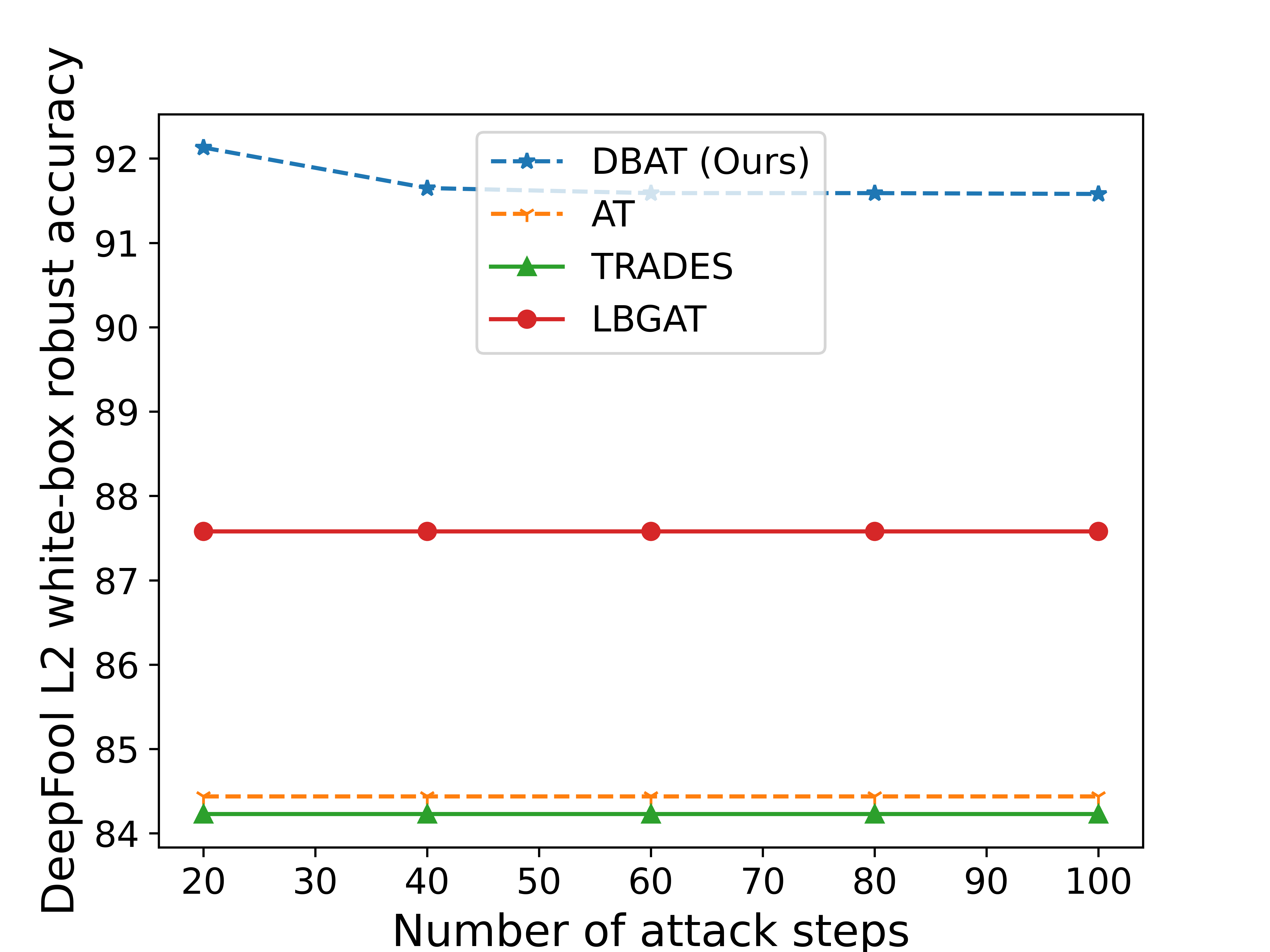}
    \caption{$\ell_{2}$-DeepFool}
  \end{subfigure}
  \begin{subfigure}{0.62\columnwidth}
    \includegraphics[width=\linewidth]{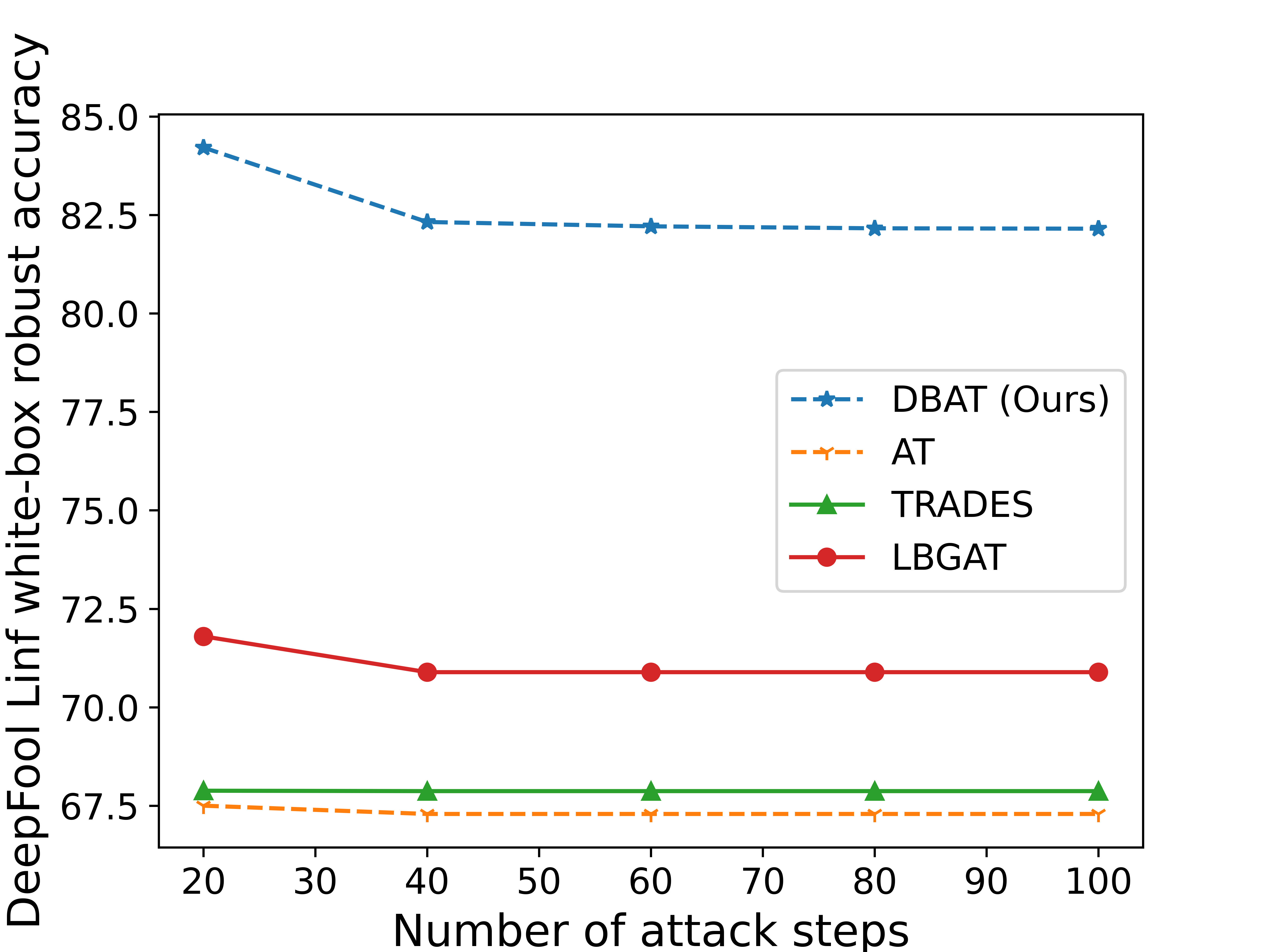}
    \caption{$\ell_{\infty}$-DeepFool}
  \end{subfigure}
  \caption{Robustness against \textbf{unforeseen} (a) $\ell_{2}$ PGD adversary (b) $\ell_{1}$ PGD adversary (c) $CW_{\infty}$ adversary 
  on CIFAR-10.}
  \label{unforeseen_adv}
\end{figure*}

\mycolor
Throughout the paper, we compare against the most powerful adversary.
In \cref{full-vs-standard-section}, we present additional experiments to demonstrate how different access to the inference time projection function affects the adversary's strength (i.e., the attacker's ability to degrade robust accuracy).
\normalcolor
As for black-box attacks analysis, we evaluate against two types of adversaries: naturally trained surrogate models, and other adversarially trained models. For natural corruptions, the corruptions are generated independently from the trained model.
\subsection{White/Black-box and Auto-Attack Evaluation} 
\paragraph{White-box/Black-box PGD Robustness.} 
We present DBAT's $l_{\infty}$-PGD white-box and black-box results 
compared to a variety of adversarial training methods. Attacks are generated with $\epsilon=8/255$, and perturbation step size 1/255 and 10 attack steps. Full numerical results and visualizations are in Appendix \ref{white-black-box-rob-cifar10}.
On CIFAR-10, DBAT's results are in line with the SOTA methods
under black-box attacks. For PGD white-box attacks, DBAT 
achieves significant PGD robustness (e.g., 54.25\% with PGD$_{1000}$), similar to the other methods, with near-optimal natural accuracy of 95.01\% (compared to 95.43\% for a naturally trained model).
Additionally, in Figures \ref{tsne} and \ref{tsne-emb} we visually present the strong class separation obtained by DBAT for the original classes, the newly generated adversarial classes, and the combination of all the 20 classes for CIFAR-10.

\paragraph{Auto-Attack Evaluation.} 
We evaluate DBAT on Auto-Attack, an ensemble of diverse attacks: APGD, APGD-DLR \cite{croce2020reliable}, Square \cite{andriushchenko2020square}, and FAB \cite{croce2020minimally}.
As described in Table \ref{nat_aa}, our method reaches near-optimal natural accuracy (compared to a naturally-trained model)
while still maintaining significant robustness when tested against AA. We note that Auto-Attack results are not as good as PGD results. We ascribe the difference to the adversarial classes that were generated using $\ell_{\infty}$-PGD and are therefore oriented towards PGD adversaries. 
It can be empirically evidenced in the ``unforeseen attacks'' (Figure \ref{unforeseen_adv}),
where our results on attacks such as C\&W
are good, but our results on the different PGD adversaries with different norms ($\ell_{\infty}, \ell_{2}, \ell_{1}$) are better.

\subsection{Unforeseen Adversaries Robustness}
To further demonstrate that our method does not suffer from false robustness, we test it against different adversaries that were not observed during training, including  $\ell_{2}$-PGD, $\ell_{1}$-PGD, $\ell_{\infty}$-DeepFool, and $\ell_{2}$-DeepFool \cite{moosavi2016deepfool} implemented by Foolbox \cite{rauber2017foolbox}, and CW$_{\infty}$ \cite{carlini2017towards}. We applied white-box attacks, with common attack budgets of 12 for $\ell_{1}$-PGD, 0.5 for $\ell_{2}$-PGD, 0.02 overshoot for DeepFool, and 
$8/255$ for $CW_{\infty}$. Results are visualized in Figure \ref{unforeseen_adv}
, and in Tables \ref{l2-res}, \ref{l1-res}, \ref{l2-deepfool-res}, \ref{linf-deepfool-res}, and \ref{cw-res} in Appendix \ref{unforeseen-adversaries}.
Our method significantly improves results (except for CW$_{\infty}$)
, even on unforeseen adversaries. %
\textit{DBAT improves $\ell_{2}$-PGD by up to 14\%, $\ell_{1}$-PGD by up to 20\%, $\ell_{2}$-DeepFool by up to 10\%, and $\ell_{\infty}$-DeepFool by up to 16\%}. 

\paragraph{Feature Adversaries.} 
We demonstrate the effectiveness of DBAT 
to so-called ``adaptive adversaries'':
those that try to circumvent our defense by ignoring the projection function during the optimization process  \cite{tramer2020adaptive}, and show that they cannot evade our defense.
To do so, we tested DBAT on CIFAR-10 against feature and logit-level adversaries: Kullback-Leibler divergence (KLD) attack on the probabilities vectors \cite{zhang2019theoretically}, $l_2$ logit-matching attack \cite{tramer2020adaptive} on adversarial examples and their corresponding natural examples, and lastly, a feature adversary suggested in \cite{sabour2015adversarial}.
We used $\epsilon=8/255$, $\delta=1/255$, and ran for 500 iterations.
Results are presented in Table \ref{feat-adv}. DBAT presents notably impressive and strong results against feature and logit-level adversaries. %
We also tried attacking the inner layers 
(and combinations of layers), 
in addition to attacking the feature representation layer, but we noticed it did not improve the attack success rate.
These results also support our claim in \cref{threat-model}, that an adaptive attacker will benefit from using the projection function in the attack optimization.

\begin{table}[ht]
\centering
\caption{CIFAR-10 results against feature and logit level adversaries.}
\label{feat-adv}
\begin{tabular}{cc}
Adversary & Robust Accuracy \\
\toprule
KLD  & 85.9 \\
$l_2$ Logit Matching & 84.5 \\
Feature Adversary \cite{sabour2015adversarial} & 86.8 \\
\bottomrule
\end{tabular}
\end{table}

\begin{figure}[ht]
\begin{center}
\centerline{\includegraphics[width=\columnwidth]{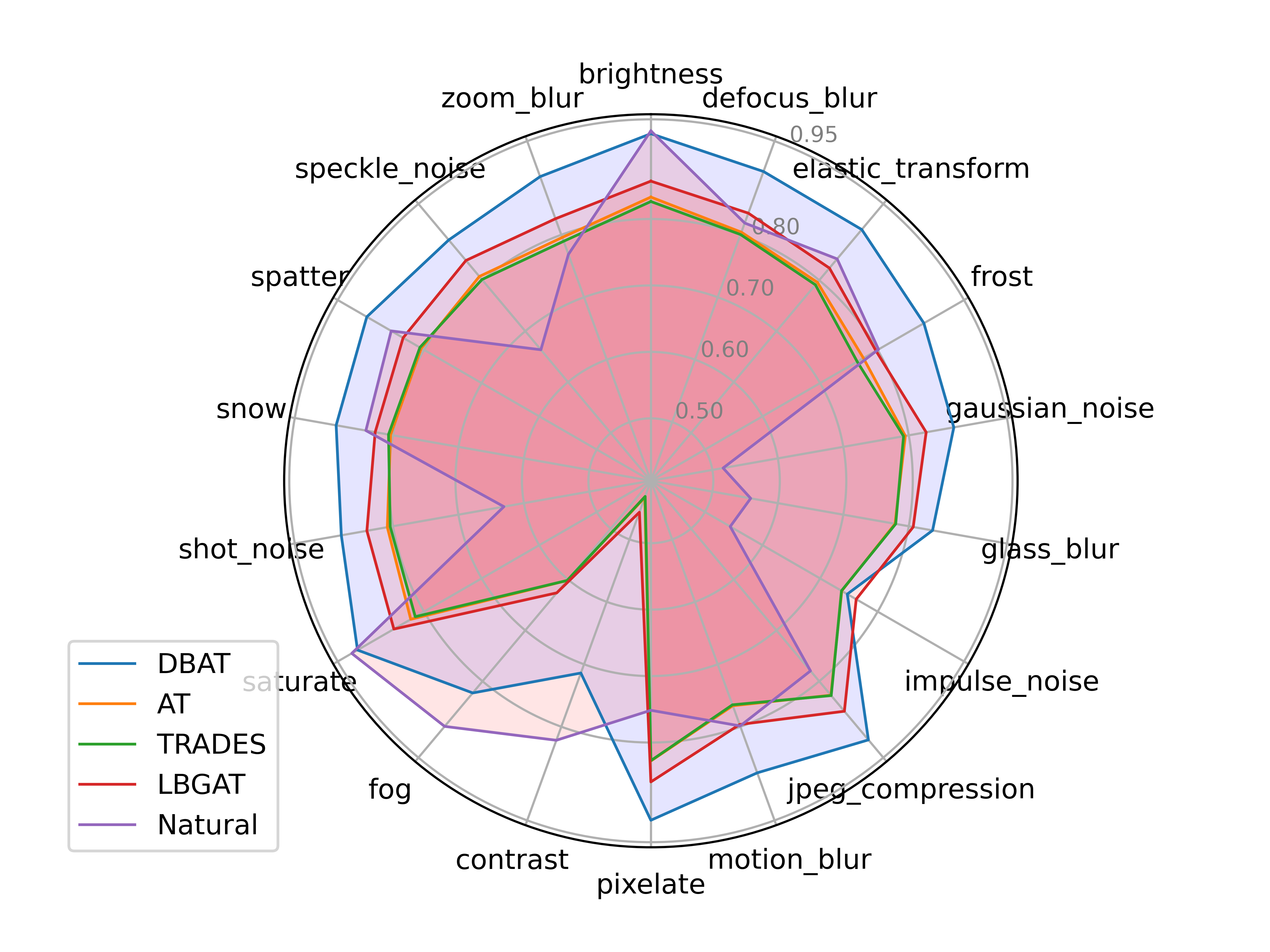}}
\caption{\mycolor CIFAR-10C accuracy comparison results between different methods over all 18 natural corruption types, including noise, blur, weather, and
digital categories.}
\label{radar-c}
\end{center}
\end{figure}

\begin{figure}[h]
\mycolorgreen
\begin{center}
\centerline{\includegraphics[width=\columnwidth]{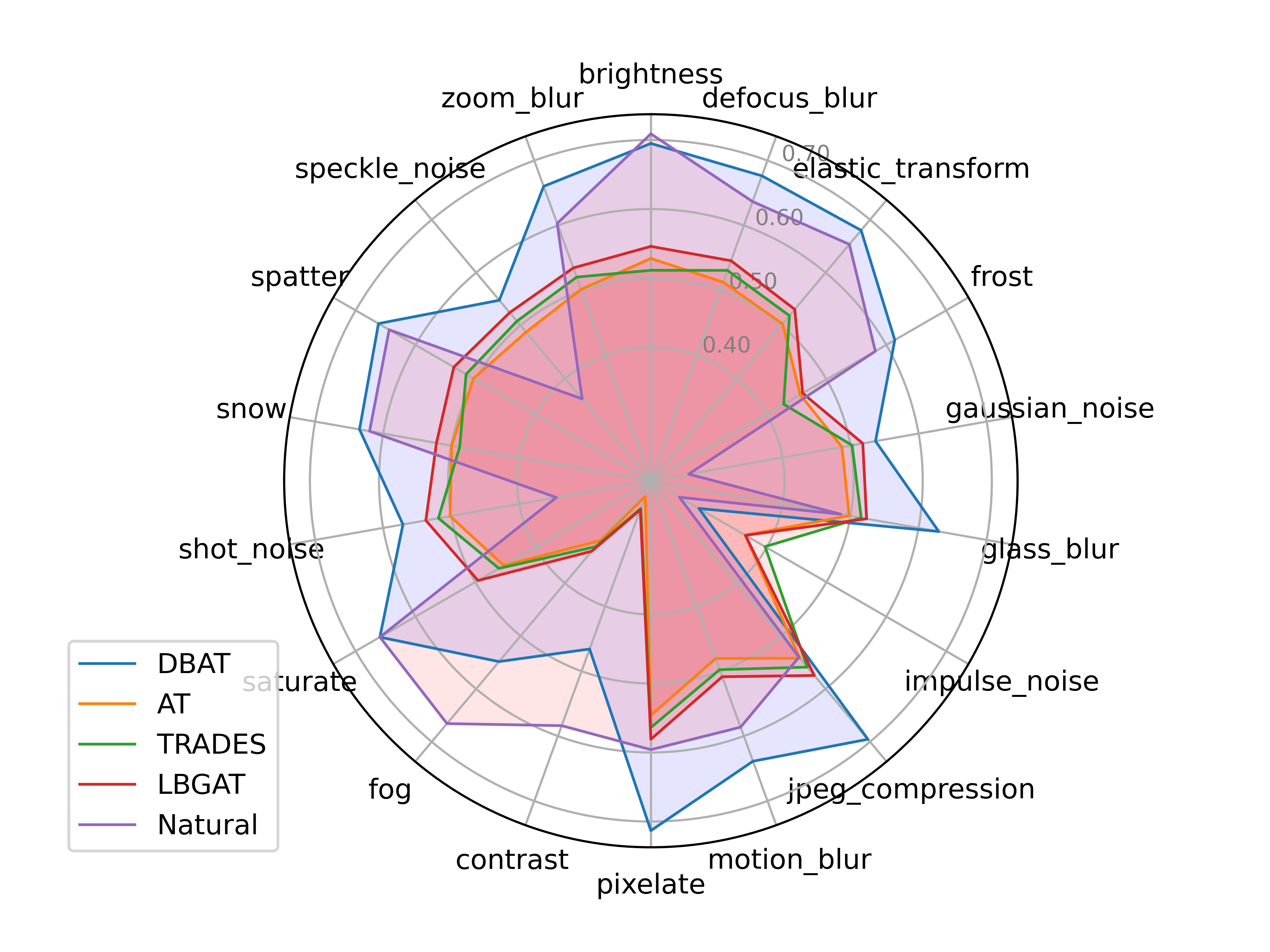}}
\caption{\mycolorgreen CIFAR-100C accuracy comparison results between different methods over all 18 natural corruption types, including noise, blur, weather, and
digital categories.}
\label{radar_cifar100c}
\end{center}
\normalcolor
\end{figure}

\subsection{Natural Corruptions Robustness}
\label{corruptions}
We demonstrate the effectiveness of DBAT when facing natural corruptions, as proposed by \cite{hendrycks2018benchmarking}. This corruptions benchmark dataset consists of 18 diverse corruption types. It covers noise, blur, weather, and digital categories. As the researchers claimed, research that improves performance on this benchmark should indicate general robustness gains, as the corruptions are varied and great in number. These corruptions each have five different levels of severity. To test DBAT, we use the CIFAR-10-C and CIFAR-100C corruptions benchmarks. Note that the corruptions are model-independent.
As demonstrated in Tables \ref{corruption-table1}, \ref{corruption-table2}, in Appendix \ref{natural-corruption-appendix}, and in Figures \ref{radar-c} and \ref{radar_cifar100c}, our method outperforms the other methods by a significant margin on all corruption types.

\paragraph{CIFAR-10C results.} When compared to the second-best performing method, \textit{DBAT obtains an average improvement of 7.96\% across all corruption types, and a maximum improvement of up to 35.19\%}.

\mycolorgreen
\paragraph{CIFAR-100C results.} When compared to the second-best performing method, \textit{DBAT obtains an average improvement of 10.82\% across all corruption types, and a maximum improvement of up to 25.75\%}.
\normalcolor

\mycolorgreen
\subsection{Clean vs. Robust Accuracy Trade-off}

\paragraph{Clean and robust accuracy trade-off.} Originally, the clean and adversarial classes were equally weighted during training. Meaning, given that $\lambda$ is the weighting factor for the adversarial classes, we set $\lambda=1$ in our experiments.

In the following experiment, we run an extensive evaluation to show how the trade-off between natural and robust accuracy changes as we weigh the loss on the natural and adversarial classes differently, i.e., how the natural and robust accuracy changes as we change the values of $\lambda$.
We use CIFAR-10 with the same experiment settings described above.
We report Auto-Attack (AA) results, as well as natural accuracy results.
In Figure \ref{tradeoff_exp} and Table \ref{lambda-tradeoff}, we plot DBAT's and TRADES's trade-off between natural and Auto-Attack robust accuracy as the weighting factor, $\lambda$, varies the trade-off between the natural and adversarial classes. For DBAT, we compare against the fully adaptive white-box perfect knowledge adversary (i.e., "Inference real-time access").
Not surprisingly, as we increase $\lambda$, clean accuracy decreases while robust accuracy increases, and vice-versa. However, as can be seen, the changes in natural accuracy for DBAT are relatively small, even though $\lambda$ was changed between a wide range of 0.1 and 8. 

\begin{figure}
\mycolorgreen
\begin{center}
\centerline{\includegraphics[width=\columnwidth]{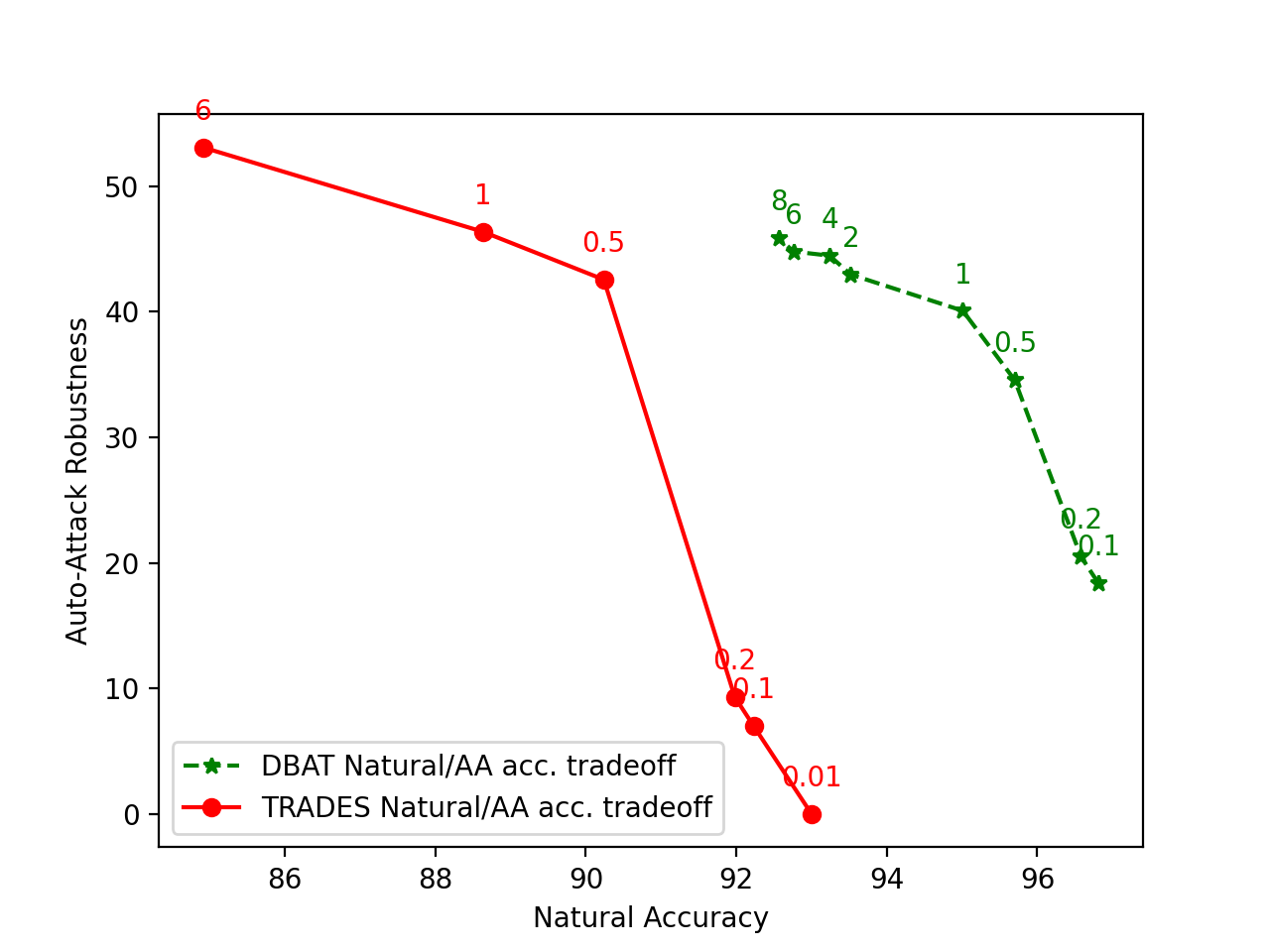}}
\caption{\mycolorgreen Natural and AutoAttack robust accuracy trade-off, for DBAT and TRADES on CIFAR-10, as we vary the hyper-parameter $\lambda$ that controls the weight we put on the natural and adversarial classes. The numbers on the graph represent the value of $\lambda$ for the specific trade-off.}
\label{tradeoff_exp}
\end{center}
\normalcolor
\end{figure}

\begin{table}[h]
\mycolorgreen
\centering
\caption{\mycolorgreen DBAT's natural and Auto-Attack accuracy trade-off on CIFAR-10 as the weighting factor $\lambda$ varies between 0.1 and 8 (where 1 is the default value used in the experiments).}
\begin{tabular}{c|c|c}
$\lambda$ & Natural        & Auto-Attack    \\
\toprule
0.1       & \textbf{96.81} & 18.40          \\
0.2       & 96.58          &  20.50         \\
0.5       & 95.70          & 34.56          \\
1         & 95.01          & 40.08          \\
2         & 93.52          & 42.97          \\
4         & 93.24          & 44.47          \\
6         & 92.76          & 44.80          \\
8         & 92.56          & \textbf{45.91} 
\label{lambda-tradeoff}
\end{tabular}
\end{table}

We also noticed that when increasing $\lambda$ beyond a value of 10, the model started to diverge. We attribute this behavior to the fact that we are over-weighting dynamically newly generated classes, which in turn affects the model's ability to converge. We noticed that a simple warm start of $\lambda$ can help to some extent. As for TRADES, we've noticed that as we decrease $\lambda$ below $0.1$, TRADES was not able to learn robust models.

Overall, we've demonstrated how reducing the trade-off parameter $\lambda$, TRADES was not able to match DBAT's clean accuracy without losing robust accuracy almost entirely. \textit{This is another empirical evidence of DBAT's unique ability to learn models with optimal or near-optimal natural accuracy and a significant level of robustness}.

\paragraph{F1-robust.} To further demonstrate that DBAT's robust-natural trade-off is indeed good compared to other methods, we adopt the recently proposed metric, F1-robust, suggested by \cite{levi2021domain} which was specifically designed as a balanced measurement for robust and natural accuracy. Results are presented in Table \ref{f1-robust}.

\begin{table}[ht]
\mycolorgreen
\centering
\caption{\mycolorgreen Results on CIFAR-10 using the recently proposed F1-robust metric specifically designed as a balanced measurement for robust and natural accuracy. Higher is better.}
\begin{tabular}{l|c}
Defense    & F1-Robust \\ 
\midrule
\underline{DBAT}       & \textbf{0.710}     \\
AT         & 0.657     \\
TRADES     & 0.659     \\
LBGAT      & 0.670      \\
Generalist & 0.685     \\
HAT        & 0.622      \\
UIAT       & 0.645    
\end{tabular}
\label{f1-robust}
\normalcolor
\end{table}

\normalcolor

\subsection{Generalization Across Datasets} 
\label{generalization}
To check the generalization of our approach to different datasets, we evaluate DBAT on SVHN and CIFAR-100. 
We present Auto-Attack results in Table \ref{nat_aa_svhn_cifar100}. Full results and visualizations are presented in 
Appendix 
\ref{cifar100-svhn-appendix}. As presented in Table \ref{nat_aa_svhn_cifar100}, our method reaches optimal (for SVHN) and near-optimal (for CIFAR-100) natural accuracy on the different datasets while
still maintaining significant robustness even against Auto-Attack.
\paragraph{SVHN evaluation.} For SVHN, DBAT achieves an improvement of 3-5\% under black-box attacks, When tested against white-box PGD attacks, DBAT achieves significant PGD robustness of 53.40\% (PGD$_{20}$), similar to other popular AT methods. See Table \ref{nat_aa_svhn_cifar100} and Appendix \ref{cifar100-svhn-appendix} for full results.
Additionally, DBAT reaches a natural accuracy of 96.86\%, compared to 96.85\% for a naturally trained model under the same architecture and settings.
Meaning, \textit{\textbf{DBAT exhibits no reduction in natural accuracy}, while also achieving significant robust accuracy
under various strong adversaries}.
\begin{table}[h]
\mycolor
\caption{Natural, PGD
, and Auto-Attack (AA) robust accuracy against the fully adaptive white-box perfect knowledge adversary (i.e., "Inference real-time access"). The attack is an $\ell_{\infty}$-PGD attack on SVHN and CIFAR-100. Similar to our method, the presented results are for models that \textbf{do not utilize additional data} during training. The Natural method refers to a model trained using standard training under the same hyper-parameters settings.
Green represents the best natural accuracy among the robust models. Orange represents the second-best natural accuracy among the robust models. The range alongside the green arrow represents the natural accuracy improvement compared to the other methods.}
\label{nat_aa_svhn_cifar100}
\begin{center}
\begin{small}
\begin{sc}
\resizebox{\columnwidth}{!}{%
\begin{tabular}{llccc}
\toprule
Dataset & Method & Natural Acc. & PGD & AA \\
\midrule
\multirow{9}{*}{CIFAR-100} 
& \underline{DBAT}  & {\color{Green} \textbf{75.18} (\greenup 12.2--18.5\%)} & 27.22 & 18.17\\
& AT & 56.73 & 28.45 & 24.12 \\
& TRADES & 58.24 & 29.70 & 24.90 \\
& LBGAT &  60.64 & 34.84 & 29.33 \\
& Generalist & {\color{YellowOrange}62.97} & 29.49 & 23.96 \\
& HAT &  58.73 &  27.92 & 23.34 \\
& UIAT & 59.55 & 30.81 & 25.73 \\
& CAT & 62.84 & - & 16.82 \\
\cmidrule{2-5}
& Natural & 79.30 & 0 & 0 \\
\midrule
\midrule
\multirow{9}{*}{SVHN} 
& \underline{DBAT} & {\color{Green} \textbf{96.86} (\greenup 2.8--6.8\%)} & 49.31 & 40.49 \\
&AT & 89.90 & 49.45& 45.25 \\
& TRADES & 90.35 & 54.13 & 49.50 \\
& LBGAT & 91.80 & 63.38 & 40.83 \\
& Generalist & {\color{YellowOrange} 94.11} & 55.29 & 45.41 \\
& HAT &  92.06 & 57.35 & 52.06\\
& UIAT & 93.28 & 58.18 & 52.45 \\
& CAT & - & - & - \\
\cmidrule{2-5}
& Natural & 96.85 & 0 & 0 \\
\bottomrule
\end{tabular}
}
\end{sc}
\end{small}
\end{center}
\normalcolor
\end{table}
\begin{table*}[h]
\caption{Natural and Auto-Attack (AA) $\ell_{\infty}$ robustness on CIFAR-10, CIFAR-100 and SVHN, against adversaries with different capabilities. \textit{Model parameters + projection function access} refers to the adversary capabilities to access both model parameters and inference projection function. \textit{inference real-time access} refers to the adversary that 
can
access not only to the entire network parameters but also to the
defender's system and projection function at any given time
during inference.
\textit{model parameters} refers to an adversary that gains access only to the model parameters, but not to the inference time projection function.}
\label{full-vs-standard}
\begin{center}
\begin{small}
\begin{tabular}{llcc}
\toprule
Dataset  & Adversary access capabilities & Natural Acc. & Robust Acc. \\
\midrule
\multirow{3}{*}{CIFAR-10}  & Model parameters access &  \multirow{3}{*}{95.01} &  \textbf{50.31} \\
& Model parameters + projection function access & &  47.82 \\
&  Inference real-time access (model params. + inference-time access to projection func.) &   &  40.08 \\
\midrule
\midrule
\multirow{3}{*}{CIFAR-100} 
& Model parameters access & \multirow{3}{*}{75.18} &  \textbf{23.16} \\
& Model parameters + projection function access & &  20.87\\
& Inference real-time access (model params. + inference-time access to projection func.) & & 18.17\\
\midrule
\midrule
\multirow{3}{*}{SVHN} 
 & Model parameters access & \multirow{3}{*}{96.86} & \textbf{56.60} \\
& Model parameters + projection function access & & 48.58 \\
& Inference real-time access (model params. + inference-time access to projection func.) &  & 40.49 \\
\bottomrule
\end{tabular}
\end{small}
\end{center}
\end{table*}
\paragraph{CIFAR-100 evaluation.} 
For CIFAR-100, DBAT achieves an improvement of 6-12\% under black-box attacks.
When tested in PGD white-box attacks, DBAT still achieves significant PGD robustness, e.g., 29.95\% with PGD$_{20}$, similar to other popular AT methods.
We note that Auto-Attack robustness is lower than the other methods, possibly due to the greater diversity in the dataset and the small number of examples in each class, which makes it more difficult to learn new adversarial class boundaries. See Appendix \ref{cifar100-svhn-appendix} for full results.
DBAT achieves significant Auto-Attack robustness 
. Moreover, DBAT reaches a natural accuracy of 75.18\%, compared to 79.30\% for a naturally trained model under the same settings. \textit{Compared to the other methods, DBAT improves natural accuracy by 14.5-18.5\%}.

\subsection{Ablation Studies}
\paragraph{DBAT core components.} We demonstrate the performance gain obtained by our method by removing the two parts that are not at the core of DBAT -- SWA and Cutout. We use the CIFAR-10 dataset, with WRN-34-10, and report the Auto-Attack (AA) results when removing SWA and Cutout. 
When removing SWA and Cutout
we observe that their total contribution to DBAT is 2.21\% in natural accuracy and 3.55\% in robust accuracy.
Additionally, in Table \ref{diff-agg} we present the results using different aggregation functions (sum and mean).
Another study that tests the effect of training with targeted versus untargeted PGD is presented in Appendix \ref{targeted-pgd}.
Altogether, we conclude that the majority of the gain in natural and robust accuracy is obtained by DBAT.

\mycolorgreen
\paragraph{Numerical instability.} The loss function calculates the log over the max on the Softmax probabilities. This additional log may cause numerical instability, as discussed in Appendix G.2 of \cite{tramer2020adaptive}. To demonstrate that our method does not suffer from numerical instability, we conducted two additional experiments:
\begin{itemize}
  \item We replaced the max with LogSumExp (LSE) which should be more stable. Changes in results were within a standard deviation of $\mp0.22$ from the original reported results.
  \item We ran both AA and PGD-20 with 5 random restarts (within epsilon) and calculated mean and std. All results were within a standard deviation of $\mp0.2$.
\end{itemize}

\paragraph{Model complexity and training time overhead.} We acknowledge the fact that DBAT presents additional complexity to the model. However, keeping in mind that only the output of the last final fully connected layer is doubled, the additional model complexity is minor in most of the cases. Specifically, with WRN-32-10 on CIFAR-100, DBAT introduces 64k additional parameters, which sums up to an additional 0.13\% of the total parameters. Additionally, we also analyzed the training time overhead of our approach compared to the well-known TRADES. Overall, DBAT has a minor overhead of up to 2-3\% for models with class numbers ranging from 10 to 100.
\normalcolor

\begin{table}[h]
\centering
\small
\caption{Natural and Auto-Attack (AA) results against the strongest, \textit{inference real-time access} adversary, using two additional aggregation functions: sum and mean.}
\label{diff-agg}
\begin{tabular}{ccccc}
Agg. function & Acc. & CIFAR-10 & SVHN & CIFAR-100 \\ 
\toprule
\multirow{2}{*}{Sum} & Natural & 95.01 & 96.85 & 75.07 \\
 & AA & 39.81 & 36.00 & 18.08 \\ 
 \midrule
\multirow{2}{*}{Mean} & Natural & 95.01 & 96.85 & 75.07 \\
 & AA & 39.93 & 35.84 & 18.49 \\
 \bottomrule
\end{tabular}
\end{table}

\subsection{Adaptive vs. Non-adaptive Attacks}
\label{full-vs-standard-section}
In \cref{threat-model} we stated that the most powerful white-box adversary 
is one
who has access to both model parameters and to the projection function at any given time during inference. 
This adversary, which we used throughout the paper and termed \textit{inference real-time access} adversary, 
possesses all possible capabilities and may also be thought of as a ``Perfect-Knowledge Adversary''. 
That is, the adversary has access to the model parameters, and more importantly, 
to the projection function the defender is using at each given time during inference ---
and thus can utilize the same projection function while attacking.
Throughout the paper, we compare primarily against this unrealistically powerful adaptive adversary.
Although this adversary is not the most realistic one, here we wish to demonstrate that this kind of adversary is indeed
the most powerful adversary. 
To do so, we compare the results with two different white-box adversaries. 
The first adversary will be referred to as the \textit{model parameters} adversary.
It is assumed to possess
access to the model parameters, but not to the inference projection function ---
meaning that the adversary can optimize the network parameters, but is not utilizing the inference projection function (e.g., max) in the attack optimization process. 
The \textit{model parameters}
adversary illustrates how the attack success rate is influenced by the attacker's adaptive knowledge 
(or lack thereof)
about the defender's projection function.
The second, most realistic adversary, termed \textit{model parameters + projection function access} adversary, knows that the defender is utilizing a projection function, but does not have inference-time access to the defender's choices at any given time during inference, and therefore has to conjecture the projection function from common projection functions (mean, softmax, maximum, sum, etc.) while attacking.
Both adversaries --- \textit{model parameters} and \textit{model parameters + projection function access} ---
are more realistic than the \textit{inference real-time access} adversary, since the projection function is not part of the optimization. For example, the trained model can be published at model zoos, while the projection function does not. Alternatively, the defender can randomize or change the projection function at any time during inference.

Table \ref{full-vs-standard} presents results under the three adversaries' settings.
As can be seen, since the \textit{model parameters} adversary does not utilize the knowledge about the projection function,
the attack against DBAT becomes much less effective, and as a consequence the attack success rate decreases, i.e., model robustness increases.


\section{Conclusion}

In this paper, we 
demonstrate 
the advantage of treating the clean and adversarially perturbed examples
as belonging to separate classes, instead of insisting that the classifier ``stretch''
a single class to accommodate them both.
With this new idea in mind, we proposed Double Boundary Adversarial Training (DBAT). Our extensive evaluation illustrates the ability of DBAT to achieve state-of-the-art results under various tasks such as 
black-box PGD attacks, natural corruptions robustness, and unforeseen adversaries (e.g., $\ell_{2}$-PGD, $\ell_{1}$-PGD, and DeepFool). 
That said, our aim is not to compete with the state-of-the-art in robustness across the board. Rather, we wish to equip models with a significant level of robustness,  while only incurring a minor or negligible degradation to their original natural accuracy.
Therefore, the main benefit of DBAT is its ability to reach optimal or near-optimal natural accuracy while achieving significant robustness, even against strong adversaries. This ability makes DBAT applicable for real-world applications (e.g., healthcare, autonomous vehicles, and security systems) that cannot sacrifice much of their natural accuracy.

\section{Ethical Considerations}

The existence of adversarial examples points to a basic weak-
ness of deep neural networks. These kinds of attacks were proven to break state-of-the-art networks in different fields such as Computer Vision (CV), Natural Language Processing (NLP), and many others. With the deployment of AI in safety-critical systems, such as security systems, medical diagnosis, and autonomous driving, it is at a premium to build systems that are robust, at least to some extent, against such attacks.
However, the gain of robustness is usually at the expense of the systems' natural accuracy. In order for real-world, safety-critical systems to adopt robust models, we need to make sure that the degradation in the natural accuracy is minor. For this reason, we suggested DBAT, which achieved a significant level of robustness without sacrificing much of the natural accuracy, and hope that it will help real-world applications adopt robust models.

Having said that, DBAT still has its limitations: adversarial training is an expensive training method that requires extra computations when compared to vanilla training. Moreover, DBAT achieves significant level of robustness, but it does not eliminate it completely, and one should take it under consideration when deploying such methods.
Overall, it is important to remember that the general problem
of robust models is still far from being fully solved, and the research community is heading a long way to go until we will manage to achieve sustainable robustness in real-world scenarios.


\bibliographystyle{plain}
\bibliography{egbib.bib}

\begin{thebibliography}{10}

\bibitem{andriushchenko2020square}
Maksym Andriushchenko, Francesco Croce, Nicolas Flammarion, and Matthias Hein.
\newblock Square attack: a query-efficient black-box adversarial attack via
  random search.
\newblock In {\em European Conference on Computer Vision}, pages 484--501.
  Springer, 2020.

\bibitem{andriushchenko2020understanding}
Maksym Andriushchenko and Nicolas Flammarion.
\newblock Understanding and improving fast adversarial training.
\newblock {\em arXiv preprint arXiv:2007.02617}, 2020.

\bibitem{MR1741038}
Martin Anthony and Peter~L. Bartlett.
\newblock {\em {N}eural {N}etwork {L}earning: {T}heoretical {F}oundations}.
\newblock Cambridge University Press, Cambridge, 1999.

\bibitem{athalye2018obfuscated}
Anish Athalye, Nicholas Carlini, and David Wagner.
\newblock Obfuscated gradients give a false sense of security: Circumventing
  defenses to adversarial examples.
\newblock In {\em International Conference on Machine Learning}, pages
  274--283. PMLR, 2018.

\bibitem{attias2019improved}
Idan Attias, Aryeh Kontorovich, and Yishay Mansour.
\newblock Improved generalization bounds for robust learning.
\newblock In {\em Algorithmic Learning Theory}, pages 162--183. PMLR, 2019.

\bibitem{attias2021improved}
Idan Attias, Aryeh Kontorovich, and Yishay Mansour.
\newblock Improved generalization bounds for robust learning, 2021.

\bibitem{bai2021recent}
Tao Bai, Jinqi Luo, Jun Zhao, Bihan Wen, and Qian Wang.
\newblock Recent advances in adversarial training for adversarial robustness.
\newblock {\em arXiv preprint arXiv:2102.01356}, 2021.

\bibitem{BartlettFT17}
Peter~L. Bartlett, Dylan~J. Foster, and Matus Telgarsky.
\newblock Spectrally-normalized margin bounds for neural networks.
\newblock In Isabelle Guyon, Ulrike von Luxburg, Samy Bengio, Hanna~M. Wallach,
  Rob Fergus, S.~V.~N. Vishwanathan, and Roman Garnett, editors, {\em Advances
  in Neural Information Processing Systems 30: Annual Conference on Neural
  Information Processing Systems 2017, December 4-9, 2017, Long Beach, CA,
  {USA}}, pages 6240--6249, 2017.

\bibitem{cai2018curriculum}
Qi-Zhi Cai, Min Du, Chang Liu, and Dawn Song.
\newblock Curriculum adversarial training.
\newblock {\em arXiv preprint arXiv:1805.04807}, 2018.

\bibitem{carlini2017adversarial}
Nicholas Carlini and David Wagner.
\newblock Adversarial examples are not easily detected: Bypassing ten detection
  methods.
\newblock In {\em Proceedings of the 10th ACM Workshop on Artificial
  Intelligence and Security}, pages 3--14, 2017.

\bibitem{carlini2017towards}
Nicholas Carlini and David Wagner.
\newblock Towards evaluating the robustness of neural networks.
\newblock In {\em 2017 ieee symposium on security and privacy (sp)}, pages
  39--57. IEEE, 2017.

\bibitem{carmon2019unlabeled}
Yair Carmon, Aditi Raghunathan, Ludwig Schmidt, Percy Liang, and John~C Duchi.
\newblock Unlabeled data improves adversarial robustness.
\newblock {\em arXiv preprint arXiv:1905.13736}, 2019.

\bibitem{chen2020adversarial}
Tianlong Chen, Sijia Liu, Shiyu Chang, Yu~Cheng, Lisa Amini, and Zhangyang
  Wang.
\newblock Adversarial robustness: From self-supervised pre-training to
  fine-tuning.
\newblock In {\em Proceedings of the IEEE/CVF Conference on Computer Vision and
  Pattern Recognition}, pages 699--708, 2020.

\bibitem{chen2020robust}
Tianlong Chen, Zhenyu Zhang, Sijia Liu, Shiyu Chang, and Zhangyang Wang.
\newblock Robust overfitting may be mitigated by properly learned smoothening.
\newblock In {\em International Conference on Learning Representations}, 2020.

\bibitem{cheng2020cat}
Minhao Cheng, Qi~Lei, Pin-Yu Chen, Inderjit Dhillon, and Cho-Jui Hsieh.
\newblock Cat: Customized adversarial training for improved robustness.
\newblock {\em arXiv preprint arXiv:2002.06789}, 2020.

\bibitem{cohen2019certified}
Jeremy Cohen, Elan Rosenfeld, and Zico Kolter.
\newblock Certified adversarial robustness via randomized smoothing.
\newblock In {\em International Conference on Machine Learning}, pages
  1310--1320. PMLR, 2019.

\bibitem{croce2020minimally}
Francesco Croce and Matthias Hein.
\newblock Minimally distorted adversarial examples with a fast adaptive
  boundary attack.
\newblock In {\em International Conference on Machine Learning}, pages
  2196--2205. PMLR, 2020.

\bibitem{croce2020reliable}
Francesco Croce and Matthias Hein.
\newblock Reliable evaluation of adversarial robustness with an ensemble of
  diverse parameter-free attacks.
\newblock In {\em ICML}, 2020.

\bibitem{DBLP:journals/jmlr/CsikosMK19}
M{\'{o}}nika Csik{\'{o}}s, Nabil~H. Mustafa, and Andrey Kupavskii.
\newblock Tight lower bounds on the vc-dimension of geometric set systems.
\newblock {\em J. Mach. Learn. Res.}, 20:81:1--81:8, 2019.

\bibitem{cui2021learnable}
Jiequan Cui, Shu Liu, Liwei Wang, and Jiaya Jia.
\newblock Learnable boundary guided adversarial training.
\newblock In {\em Proceedings of the IEEE/CVF International Conference on
  Computer Vision}, pages 15721--15730, 2021.

\bibitem{devries2017improved}
Terrance DeVries and Graham~W Taylor.
\newblock Improved regularization of convolutional neural networks with cutout.
\newblock {\em arXiv preprint arXiv:1708.04552}, 2017.

\bibitem{devroye-combinatorial01}
Luc Devroye and G{\'{a}}bor Lugosi.
\newblock {\em Combinatorial methods in density estimation}.
\newblock Springer series in statistics. Springer, 2001.

\bibitem{ding2018mma}
Gavin~Weiguang Ding, Yash Sharma, Kry Yik~Chau Lui, and Ruitong Huang.
\newblock Mma training: Direct input space margin maximization through
  adversarial training.
\newblock {\em arXiv preprint arXiv:1812.02637}, 2018.

\bibitem{dong2023enemy}
Junhao Dong, Seyed-Mohsen Moosavi-Dezfooli, Jianhuang Lai, and Xiaohua Xie.
\newblock The enemy of my enemy is my friend: Exploring inverse adversaries for
  improving adversarial training.
\newblock In {\em Proceedings of the IEEE/CVF Conference on Computer Vision and
  Pattern Recognition}, pages 24678--24687, 2023.

\bibitem{dong2018boosting}
Yinpeng Dong, Fangzhou Liao, Tianyu Pang, Hang Su, Jun Zhu, Xiaolin Hu, and
  Jianguo Li.
\newblock Boosting adversarial attacks with momentum.
\newblock In {\em Proceedings of the IEEE conference on computer vision and
  pattern recognition}, pages 9185--9193, 2018.

\bibitem{foster-rakhlin19}
Dylan~J. Foster and Alexander Rakhlin.
\newblock $\ell_\infty$ vector contraction for rademacher complexity.
\newblock {\em CoRR}, abs/1911.06468, 2019.

\bibitem{foster2019ell_}
Dylan~J Foster and Alexander Rakhlin.
\newblock $\ell_{\infty}$vector contraction for rademacher complexity.
\newblock {\em arXiv preprint arXiv:1911.06468}, 2019.

\bibitem{goldblum2020adversarially}
Micah Goldblum, Liam Fowl, Soheil Feizi, and Tom Goldstein.
\newblock Adversarially robust distillation.
\newblock In {\em Proceedings of the AAAI Conference on Artificial
  Intelligence}, volume~34, pages 3996--4003, 2020.

\bibitem{goodfellow2014explaining}
Ian~J Goodfellow, Jonathon Shlens, and Christian Szegedy.
\newblock Explaining and harnessing adversarial examples.
\newblock {\em arXiv preprint arXiv:1412.6572}, 2014.

\bibitem{gowal2018effectiveness}
Sven Gowal, Krishnamurthy Dvijotham, Robert Stanforth, Rudy Bunel, Chongli Qin,
  Jonathan Uesato, Relja Arandjelovic, Timothy Mann, and Pushmeet Kohli.
\newblock On the effectiveness of interval bound propagation for training
  verifiably robust models.
\newblock {\em arXiv preprint arXiv:1810.12715}, 2018.

\bibitem{gowal2020uncovering}
Sven Gowal, Chongli Qin, Jonathan Uesato, Timothy Mann, and Pushmeet Kohli.
\newblock Uncovering the limits of adversarial training against norm-bounded
  adversarial examples.
\newblock {\em arXiv preprint arXiv:2010.03593}, 2020.

\bibitem{gowal2021improving}
Sven Gowal, Sylvestre-Alvise Rebuffi, Olivia Wiles, Florian Stimberg,
  Dan~Andrei Calian, and Timothy~A Mann.
\newblock Improving robustness using generated data.
\newblock {\em Advances in Neural Information Processing Systems}, 34, 2021.

\bibitem{he2016identity}
Kaiming He, Xiangyu Zhang, Shaoqing Ren, and Jian Sun.
\newblock Identity mappings in deep residual networks.
\newblock In {\em European conference on computer vision}, pages 630--645.
  Springer, 2016.

\bibitem{he2018decision}
Warren He, Bo~Li, and Dawn Song.
\newblock Decision boundary analysis of adversarial examples.
\newblock In {\em International Conference on Learning Representations}, 2018.

\bibitem{hendrycks2018benchmarking}
Dan Hendrycks and Thomas~G Dietterich.
\newblock Benchmarking neural network robustness to common corruptions and
  surface variations.
\newblock {\em arXiv preprint arXiv:1807.01697}, 2018.

\bibitem{izmailov2018averaging}
Pavel Izmailov, Dmitrii Podoprikhin, Timur Garipov, Dmitry Vetrov, and
  Andrew~Gordon Wilson.
\newblock Averaging weights leads to wider optima and better generalization.
\newblock {\em arXiv preprint arXiv:1803.05407}, 2018.

\bibitem{jiang2020robust}
Ziyu Jiang, Tianlong Chen, Ting Chen, and Zhangyang Wang.
\newblock Robust pre-training by adversarial contrastive learning.
\newblock In {\em NeurIPS}, 2020.

\bibitem{kannan2018adversarial}
Harini Kannan, Alexey Kurakin, and Ian Goodfellow.
\newblock Adversarial logit pairing.
\newblock {\em arXiv preprint arXiv:1803.06373}, 2018.

\bibitem{krizhevsky2009learning}
Alex Krizhevsky, Geoffrey Hinton, et~al.
\newblock Learning multiple layers of features from tiny images.
\newblock 2009.

\bibitem{kurakin2016atscale}
Alexey Kurakin, Ian Goodfellow, and Samy Bengio.
\newblock Adversarial machine learning at scale.
\newblock {\em arXiv preprint arXiv:1611.01236}, 2016.

\bibitem{kurakin2016adversarial}
Alexey Kurakin, Ian Goodfellow, Samy Bengio, et~al.
\newblock Adversarial examples in the physical world, 2016.

\bibitem{lee2020adversarial}
Saehyung Lee, Hyungyu Lee, and Sungroh Yoon.
\newblock Adversarial vertex mixup: Toward better adversarially robust
  generalization.
\newblock In {\em Proceedings of the IEEE/CVF Conference on Computer Vision and
  Pattern Recognition}, pages 272--281, 2020.

\bibitem{lee2021removing}
Saehyung Lee, Changhwa Park, Hyungyu Lee, Jihun Yi, Jonghyun Lee, and Sungroh
  Yoon.
\newblock Removing undesirable feature contributions using out-of-distribution
  data.
\newblock {\em arXiv preprint arXiv:2101.06639}, 2021.

\bibitem{levi2021domain}
Matan Levi, Idan Attias, and Aryeh Kontorovich.
\newblock Domain invariant adversarial learning.
\newblock {\em arXiv preprint arXiv:2104.00322}, 2021.

\bibitem{madry2017towards}
Aleksander Madry, Aleksandar Makelov, Ludwig Schmidt, Dimitris Tsipras, and
  Adrian Vladu.
\newblock Towards deep learning models resistant to adversarial attacks.
\newblock {\em arXiv preprint arXiv:1706.06083}, 2017.

\bibitem{mohri2018foundations}
Mehryar Mohri, Afshin Rostamizadeh, and Ameet Talwalkar.
\newblock {\em Foundations of machine learning}.
\newblock MIT press, 2018.

\bibitem{moosavi2016deepfool}
Seyed-Mohsen Moosavi-Dezfooli, Alhussein Fawzi, and Pascal Frossard.
\newblock Deepfool: a simple and accurate method to fool deep neural networks.
\newblock In {\em Proceedings of the IEEE conference on computer vision and
  pattern recognition}, pages 2574--2582, 2016.

\bibitem{netzer2011reading}
Yuval Netzer, Tao Wang, Adam Coates, Alessandro Bissacco, Bo~Wu, and Andrew~Y
  Ng.
\newblock Reading digits in natural images with unsupervised feature learning.
\newblock 2011.

\bibitem{pang2022robustness}
Tianyu Pang, Min Lin, Xiao Yang, Jun Zhu, and Shuicheng Yan.
\newblock Robustness and accuracy could be reconcilable by (proper) definition.
\newblock In {\em International Conference on Machine Learning}, pages
  17258--17277. PMLR, 2022.

\bibitem{pang2019improving}
Tianyu Pang, Kun Xu, Chao Du, Ning Chen, and Jun Zhu.
\newblock Improving adversarial robustness via promoting ensemble diversity.
\newblock In {\em International Conference on Machine Learning}, pages
  4970--4979. PMLR, 2019.

\bibitem{scikit-learn}
F.~Pedregosa, G.~Varoquaux, A.~Gramfort, V.~Michel, B.~Thirion, O.~Grisel,
  M.~Blondel, P.~Prettenhofer, R.~Weiss, V.~Dubourg, J.~Vanderplas, A.~Passos,
  D.~Cournapeau, M.~Brucher, M.~Perrot, and E.~Duchesnay.
\newblock Scikit-learn: Machine learning in {P}ython.
\newblock {\em Journal of Machine Learning Research}, 12:2825--2830, 2011.

\bibitem{rade2021reducing}
Rahul Rade and Seyed-Mohsen Moosavi-Dezfooli.
\newblock Reducing excessive margin to achieve a better accuracy vs. robustness
  trade-off.
\newblock In {\em International Conference on Learning Representations}, 2021.

\bibitem{raghunathan2018certified}
Aditi Raghunathan, Jacob Steinhardt, and Percy Liang.
\newblock Certified defenses against adversarial examples.
\newblock {\em arXiv preprint arXiv:1801.09344}, 2018.

\bibitem{raghunathan2018semidefinite}
Aditi Raghunathan, Jacob Steinhardt, and Percy Liang.
\newblock Semidefinite relaxations for certifying robustness to adversarial
  examples.
\newblock {\em arXiv preprint arXiv:1811.01057}, 2018.

\bibitem{rauber2017foolbox}
Jonas Rauber, Wieland Brendel, and Matthias Bethge.
\newblock Foolbox: A python toolbox to benchmark the robustness of machine
  learning models.
\newblock In {\em Reliable Machine Learning in the Wild Workshop, 34th
  International Conference on Machine Learning}, 2017.

\bibitem{rebuffi2021fixing}
Sylvestre-Alvise Rebuffi, Sven Gowal, Dan~A Calian, Florian Stimberg, Olivia
  Wiles, and Timothy Mann.
\newblock Fixing data augmentation to improve adversarial robustness.
\newblock {\em arXiv preprint arXiv:2103.01946}, 2021.

\bibitem{rebuffi2021data}
Sylvestre-Alvise Rebuffi, Sven Gowal, Dan~Andrei Calian, Florian Stimberg,
  Olivia Wiles, and Timothy~A Mann.
\newblock Data augmentation can improve robustness.
\newblock {\em Advances in Neural Information Processing Systems}, 34, 2021.

\bibitem{rice2020overfitting}
Leslie Rice, Eric Wong, and Zico Kolter.
\newblock Overfitting in adversarially robust deep learning.
\newblock In {\em International Conference on Machine Learning}, pages
  8093--8104. PMLR, 2020.

\bibitem{rony2019decoupling}
J{\'e}r{\^o}me Rony, Luiz~G Hafemann, Luiz~S Oliveira, Ismail~Ben Ayed, Robert
  Sabourin, and Eric Granger.
\newblock Decoupling direction and norm for efficient gradient-based l2
  adversarial attacks and defenses.
\newblock In {\em Proceedings of the IEEE/CVF Conference on Computer Vision and
  Pattern Recognition}, pages 4322--4330, 2019.

\bibitem{sabour2015adversarial}
Sara Sabour, Yanshuai Cao, Fartash Faghri, and David~J Fleet.
\newblock Adversarial manipulation of deep representations.
\newblock {\em arXiv preprint arXiv:1511.05122}, 2015.

\bibitem{sehwag2021robust}
Vikash Sehwag, Saeed Mahloujifar, Tinashe Handina, Sihui Dai, Chong Xiang, Mung
  Chiang, and Prateek Mittal.
\newblock Robust learning meets generative models: Can proxy distributions
  improve adversarial robustness?
\newblock {\em arXiv preprint arXiv:2104.09425}, 2021.

\bibitem{shafahi2019adversarial}
Ali Shafahi, Mahyar Najibi, Amin Ghiasi, Zheng Xu, John Dickerson, Christoph
  Studer, Larry~S Davis, Gavin Taylor, and Tom Goldstein.
\newblock Adversarial training for free!
\newblock {\em arXiv preprint arXiv:1904.12843}, 2019.

\bibitem{sinha2017certifiable}
Aman Sinha, Hongseok Namkoong, and John Duchi.
\newblock Certifiable distributional robustness with principled adversarial
  training.
\newblock {\em arXiv preprint arXiv:1710.10571}, 2, 2017.

\bibitem{sitawarin2021sat}
Chawin Sitawarin, Supriyo Chakraborty, and David Wagner.
\newblock Sat: Improving adversarial training via curriculum-based loss
  smoothing.
\newblock In {\em Proceedings of the 14th ACM Workshop on Artificial
  Intelligence and Security}, pages 25--36, 2021.

\bibitem{sitawarin2021improving}
Chawin Sitawarin, Arvind Sridhar, and David Wagner.
\newblock Improving the accuracy-robustness trade-off for dual-domain
  adversarial training.
\newblock {\em UDL}, 2021.

\bibitem{song2018improving}
Chuanbiao Song, Kun He, Liwei Wang, and John~E Hopcroft.
\newblock Improving the generalization of adversarial training with domain
  adaptation.
\newblock {\em arXiv preprint arXiv:1810.00740}, 2018.

\bibitem{sriramanan2021towards}
Gaurang Sriramanan, Sravanti Addepalli, Arya Baburaj, and Venkatesh~Babu
  Radhakrishnan.
\newblock Towards efficient and effective adversarial training.
\newblock In {\em Thirty-Fifth Conference on Neural Information Processing
  Systems}, 2021.

\bibitem{szegedy2013intriguing}
Christian Szegedy, Wojciech Zaremba, Ilya Sutskever, Joan Bruna, Dumitru Erhan,
  Ian Goodfellow, and Rob Fergus.
\newblock Intriguing properties of neural networks.
\newblock {\em arXiv preprint arXiv:1312.6199}, 2013.

\bibitem{tabacof2016exploring}
Pedro Tabacof and Eduardo Valle.
\newblock Exploring the space of adversarial images.
\newblock In {\em 2016 International Joint Conference on Neural Networks
  (IJCNN)}, pages 426--433. IEEE, 2016.

\bibitem{tramer2021detecting}
Florian Tramer.
\newblock Detecting adversarial examples is (nearly) as hard as classifying
  them.
\newblock {\em arXiv preprint arXiv:2107.11630}, 2021.

\bibitem{tramer2020adaptive}
Florian Tramer, Nicholas Carlini, Wieland Brendel, and Aleksander Madry.
\newblock On adaptive attacks to adversarial example defenses.
\newblock {\em Advances in neural information processing systems},
  33:1633--1645, 2020.

\bibitem{tramer2017ensemble}
Florian Tram{\`e}r, Alexey Kurakin, Nicolas Papernot, Ian Goodfellow, Dan
  Boneh, and Patrick McDaniel.
\newblock Ensemble adversarial training: Attacks and defenses.
\newblock {\em arXiv preprint arXiv:1705.07204}, 2017.

\bibitem{tsai2021formalizing}
Yu-Lin Tsai, Chia-Yi Hsu, Chia-Mu Yu, and Pin-Yu Chen.
\newblock Formalizing generalization and robustness of neural networks to
  weight perturbations.
\newblock {\em arXiv preprint arXiv:2103.02200}, 2021.

\bibitem{tsipras2018robustness}
Dimitris Tsipras, Shibani Santurkar, Logan Engstrom, Alexander Turner, and
  Aleksander Madry.
\newblock Robustness may be at odds with accuracy.
\newblock {\em arXiv preprint arXiv:1805.12152}, 2018.

\bibitem{uesato2019labels}
Jonathan Uesato, Jean-Baptiste Alayrac, Po-Sen Huang, Robert Stanforth,
  Alhussein Fawzi, and Pushmeet Kohli.
\newblock Are labels required for improving adversarial robustness?
\newblock {\em arXiv preprint arXiv:1905.13725}, 2019.

\bibitem{van2008visualizing}
Laurens Van~der Maaten and Geoffrey Hinton.
\newblock Visualizing data using t-sne.
\newblock {\em Journal of machine learning research}, 9(11), 2008.

\bibitem{wang2022removing}
Haotao Wang, Aston Zhang, Shuai Zheng, Xingjian Shi, Mu~Li, and Zhangyang Wang.
\newblock Removing batch normalization boosts adversarial training.
\newblock In {\em International Conference on Machine Learning}, pages
  23433--23445. PMLR, 2022.

\bibitem{wang2023generalist}
Hongjun Wang and Yisen Wang.
\newblock Generalist: Decoupling natural and robust generalization.
\newblock In {\em Proceedings of the IEEE/CVF Conference on Computer Vision and
  Pattern Recognition}, pages 20554--20563, 2023.

\bibitem{wang2019convergence}
Yisen Wang, Xingjun Ma, James Bailey, Jinfeng Yi, Bowen Zhou, and Quanquan Gu.
\newblock On the convergence and robustness of adversarial training.
\newblock In {\em ICML}, volume~1, page~2, 2019.

\bibitem{wang2019improving}
Yisen Wang, Difan Zou, Jinfeng Yi, James Bailey, Xingjun Ma, and Quanquan Gu.
\newblock Improving adversarial robustness requires revisiting misclassified
  examples.
\newblock In {\em International Conference on Learning Representations}, 2019.

\bibitem{wang2023better}
Zekai Wang, Tianyu Pang, Chao Du, Min Lin, Weiwei Liu, and Shuicheng Yan.
\newblock Better diffusion models further improve adversarial training.
\newblock {\em arXiv preprint arXiv:2302.04638}, 2023.

\bibitem{wong2018provable}
Eric Wong and Zico Kolter.
\newblock Provable defenses against adversarial examples via the convex outer
  adversarial polytope.
\newblock In {\em International Conference on Machine Learning}, pages
  5286--5295. PMLR, 2018.

\bibitem{wong2020fast}
Eric Wong, Leslie Rice, and J~Zico Kolter.
\newblock Fast is better than free: Revisiting adversarial training.
\newblock {\em arXiv preprint arXiv:2001.03994}, 2020.

\bibitem{wong2018scaling}
Eric Wong, Frank~R Schmidt, Jan~Hendrik Metzen, and J~Zico Kolter.
\newblock Scaling provable adversarial defenses.
\newblock {\em arXiv preprint arXiv:1805.12514}, 2018.

\bibitem{wu2020adversarial}
Dongxian Wu, Shu-Tao Xia, and Yisen Wang.
\newblock Adversarial weight perturbation helps robust generalization.
\newblock {\em Advances in Neural Information Processing Systems}, 33, 2020.

\bibitem{xie2019feature}
Cihang Xie, Yuxin Wu, Laurens van~der Maaten, Alan~L Yuille, and Kaiming He.
\newblock Feature denoising for improving adversarial robustness.
\newblock In {\em Proceedings of the IEEE/CVF Conference on Computer Vision and
  Pattern Recognition}, pages 501--509, 2019.

\bibitem{xie2019intriguing}
Cihang Xie and Alan Yuille.
\newblock Intriguing properties of adversarial training at scale.
\newblock {\em arXiv preprint arXiv:1906.03787}, 2019.

\bibitem{xie2019improving}
Cihang Xie, Zhishuai Zhang, Yuyin Zhou, Song Bai, Jianyu Wang, Zhou Ren, and
  Alan~L Yuille.
\newblock Improving transferability of adversarial examples with input
  diversity.
\newblock In {\em Proceedings of the IEEE/CVF Conference on Computer Vision and
  Pattern Recognition}, pages 2730--2739, 2019.

\bibitem{xu2023exploring}
Yuancheng Xu, Yanchao Sun, Micah Goldblum, Tom Goldstein, and Furong Huang.
\newblock Exploring and exploiting decision boundary dynamics for adversarial
  robustness.
\newblock {\em arXiv preprint arXiv:2302.03015}, 2023.

\bibitem{yang2020dverge}
Huanrui Yang, Jingyang Zhang, Hongliang Dong, Nathan Inkawhich, Andrew Gardner,
  Andrew Touchet, Wesley Wilkes, Heath Berry, and Hai Li.
\newblock Dverge: diversifying vulnerabilities for enhanced robust generation
  of ensembles.
\newblock {\em arXiv preprint arXiv:2009.14720}, 2020.

\bibitem{yin2019rademacher}
Dong Yin, Ramchandran Kannan, and Peter Bartlett.
\newblock Rademacher complexity for adversarially robust generalization.
\newblock In {\em International Conference on Machine Learning}, pages
  7085--7094. PMLR, 2019.

\bibitem{zagoruyko2016wide}
Sergey Zagoruyko and Nikos Komodakis.
\newblock Wide residual networks.
\newblock {\em arXiv preprint arXiv:1605.07146}, 2016.

\bibitem{zhai2019adversarially}
Runtian Zhai, Tianle Cai, Di~He, Chen Dan, Kun He, John Hopcroft, and Liwei
  Wang.
\newblock Adversarially robust generalization just requires more unlabeled
  data.
\newblock {\em arXiv preprint arXiv:1906.00555}, 2019.

\bibitem{zhang2019you}
Dinghuai Zhang, Tianyuan Zhang, Yiping Lu, Zhanxing Zhu, and Bin Dong.
\newblock You only propagate once: Accelerating adversarial training via
  maximal principle.
\newblock {\em arXiv preprint arXiv:1905.00877}, 2019.

\bibitem{zhang2019defense}
Haichao Zhang and Jianyu Wang.
\newblock Defense against adversarial attacks using feature scattering-based
  adversarial training.
\newblock {\em Advances in Neural Information Processing Systems},
  32:1831--1841, 2019.

\bibitem{zhang2019theoretically}
Hongyang Zhang, Yaodong Yu, Jiantao Jiao, Eric Xing, Laurent El~Ghaoui, and
  Michael Jordan.
\newblock Theoretically principled trade-off between robustness and accuracy.
\newblock In {\em International Conference on Machine Learning}, pages
  7472--7482. PMLR, 2019.

\bibitem{zhang2020attacks}
Jingfeng Zhang, Xilie Xu, Bo~Han, Gang Niu, Lizhen Cui, Masashi Sugiyama, and
  Mohan Kankanhalli.
\newblock Attacks which do not kill training make adversarial learning
  stronger.
\newblock In {\em International Conference on Machine Learning}, pages
  11278--11287. PMLR, 2020.

\end{thebibliography}


\appendix

\begin{figure*}
  \centering
  \begin{subfigure}{0.3\linewidth}
    \includegraphics[width=\linewidth]{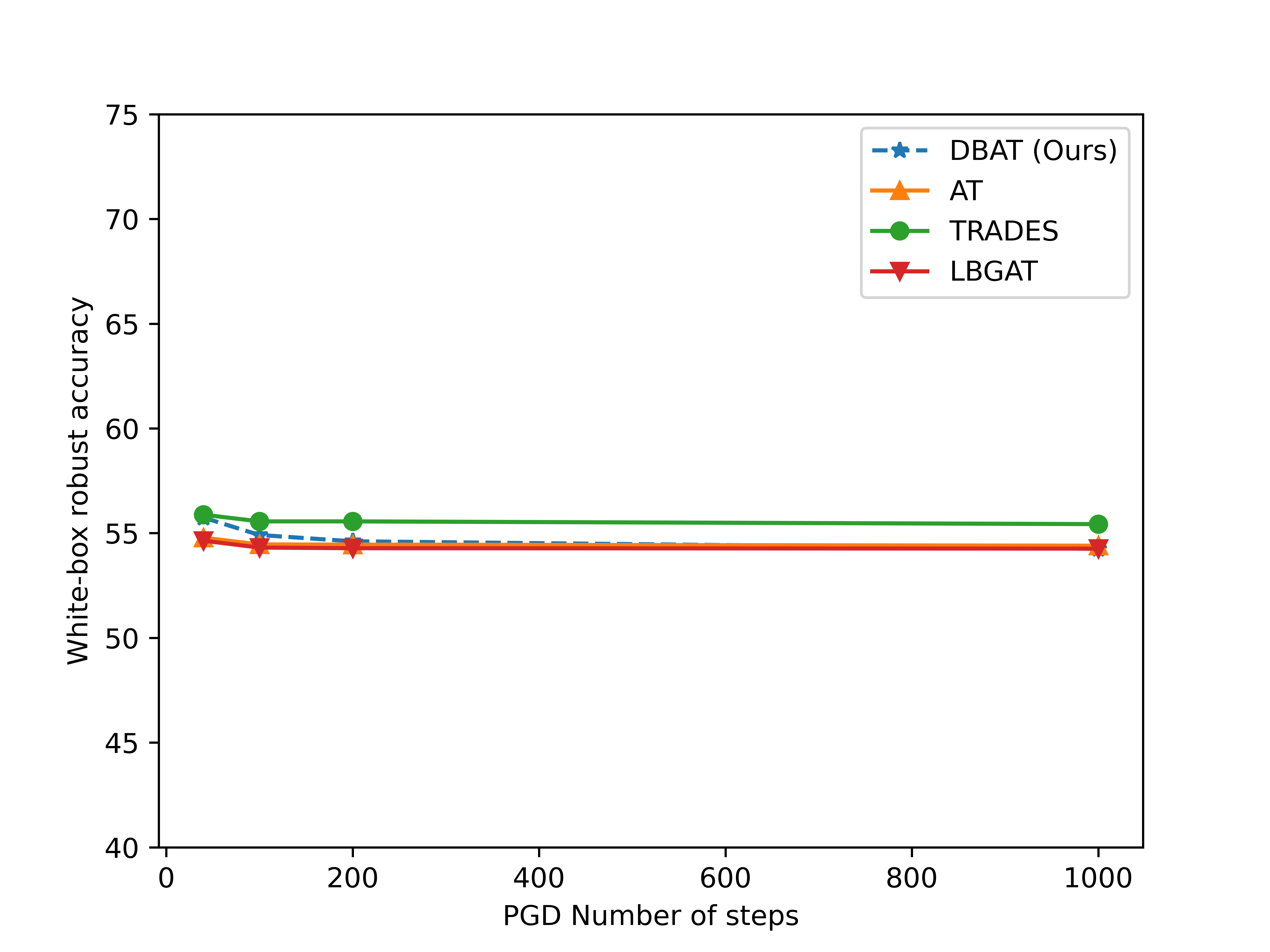}
    \caption{CIFAR-10 White-box}
  \end{subfigure}
  \hspace{0.03\textwidth}
  \begin{subfigure}{0.3\linewidth}
    \includegraphics[width=\linewidth]{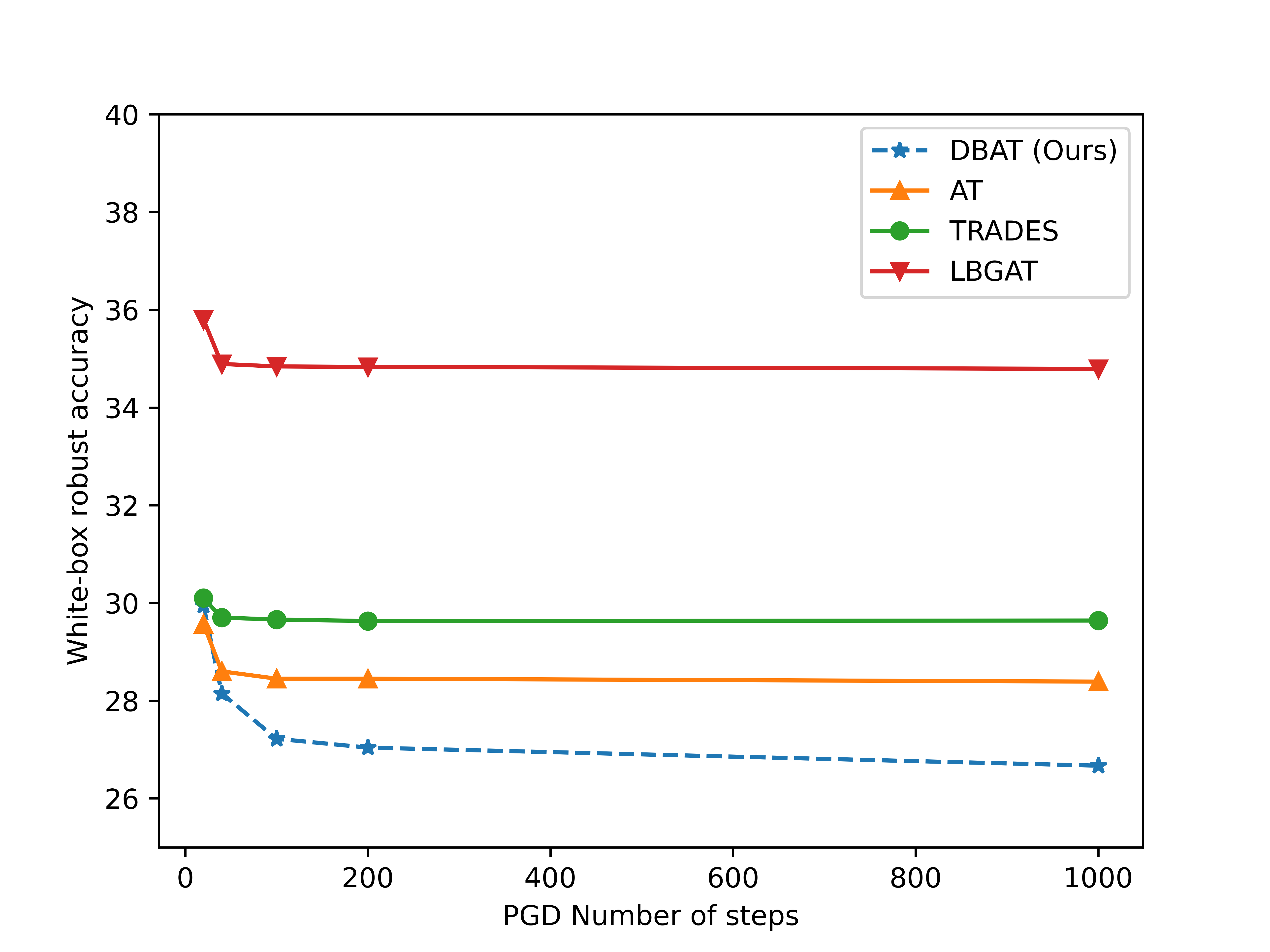}
    \caption{CIFAR-100 White-box}
  \end{subfigure}
  \hspace{0.03\textwidth}
  \begin{subfigure}{0.3\linewidth}
    \includegraphics[width=\linewidth]{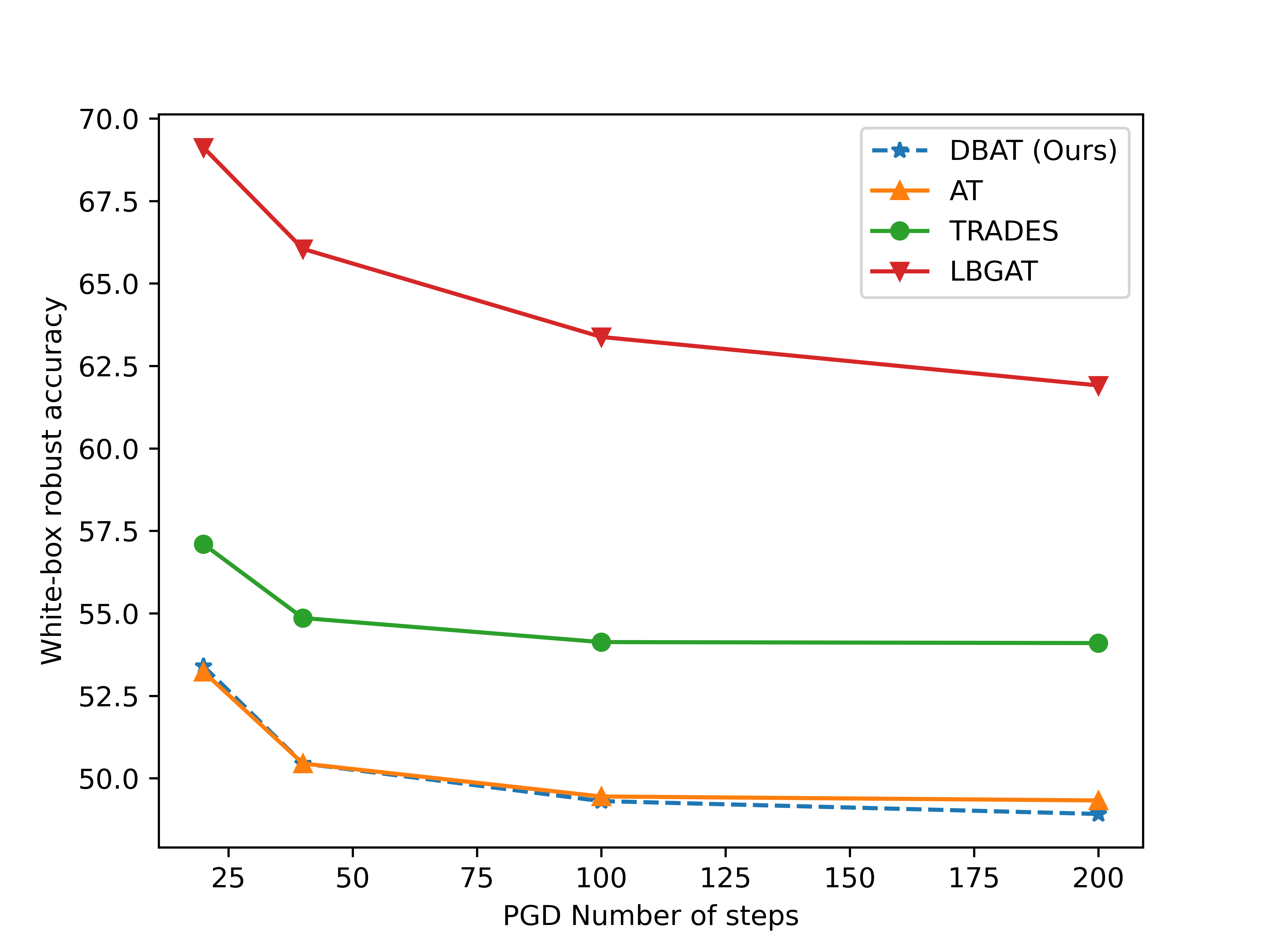}
    \caption{SVHN White-box}
  \end{subfigure}
  
  \caption{White-box PGD robustness on (a) CIFAR-10 (b) CIFAR-100, and (c) SVHN.}
  \label{all-pgd-white-box}
\end{figure*}

\begin{figure*}
  \centering
  \begin{subfigure}{0.3\linewidth}
    \includegraphics[width=\linewidth]{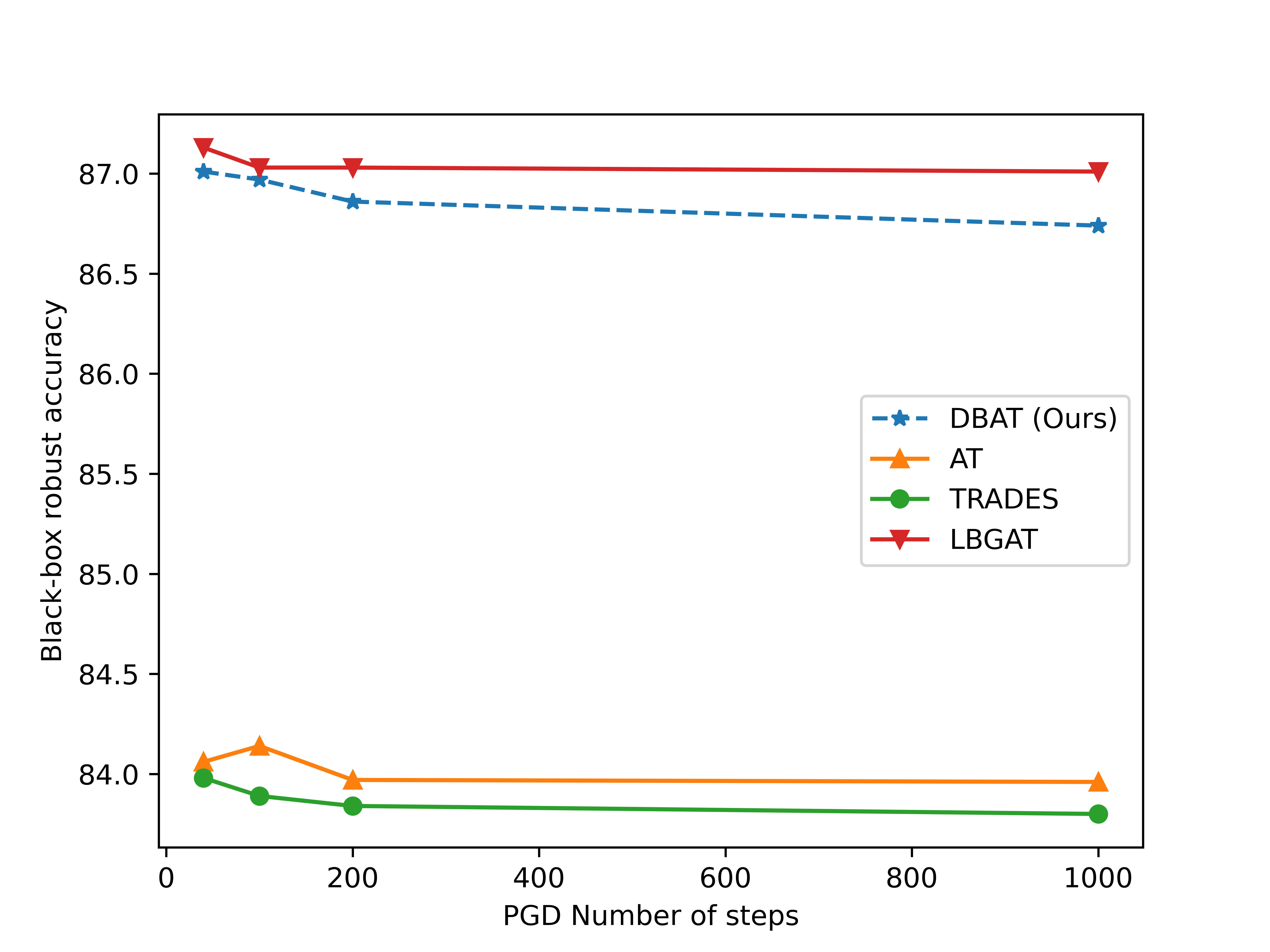}
    \caption{CIFAR-10 Black-box}
  \end{subfigure}
  \hspace{0.03\textwidth}
  \begin{subfigure}{0.3\linewidth}
    \includegraphics[width=\linewidth]{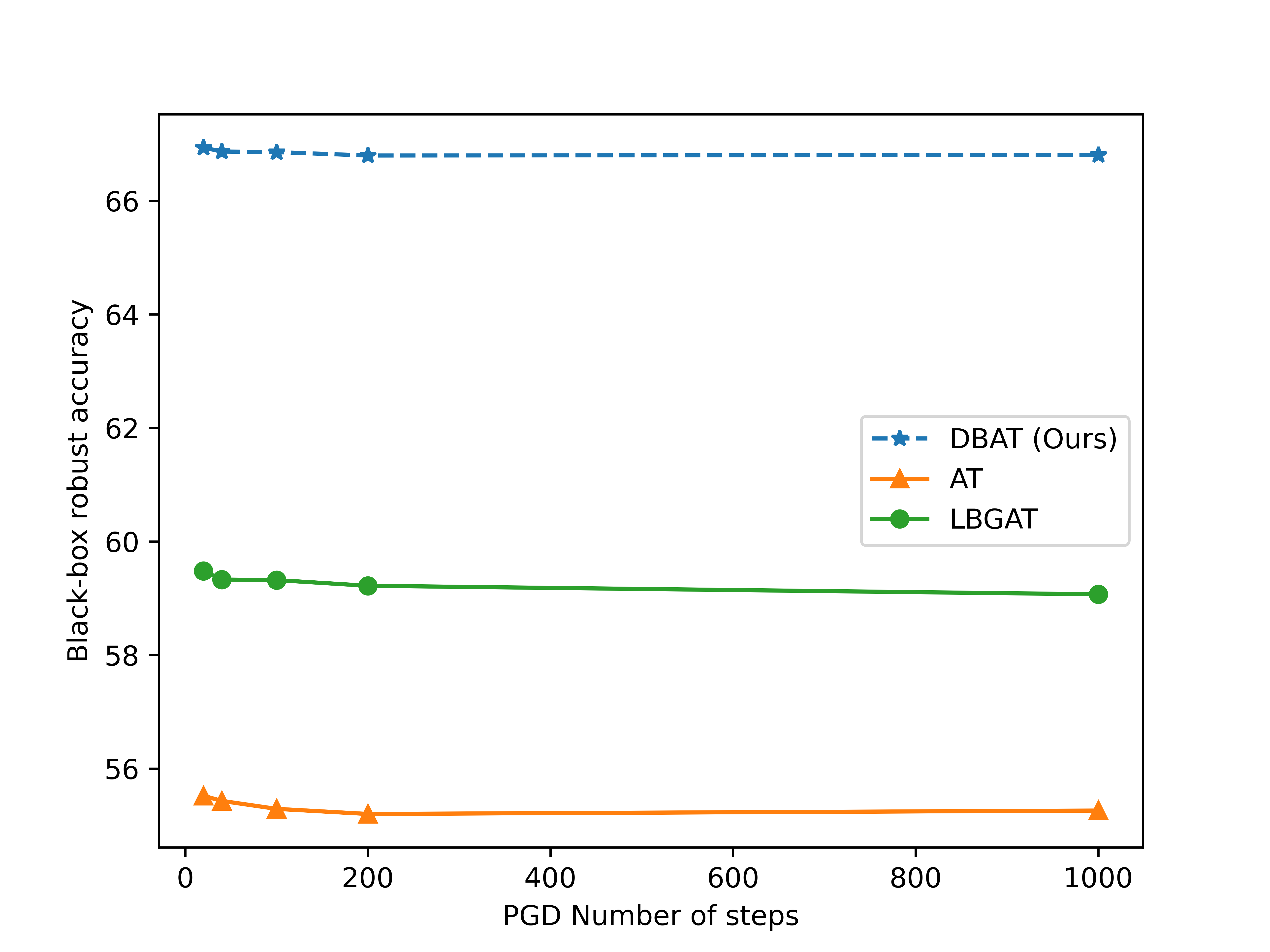}
    \caption{CIFAR-100 Black-box}
  \end{subfigure}
  \hspace{0.03\textwidth}
  \begin{subfigure}{0.3\linewidth}
    \includegraphics[width=\linewidth]{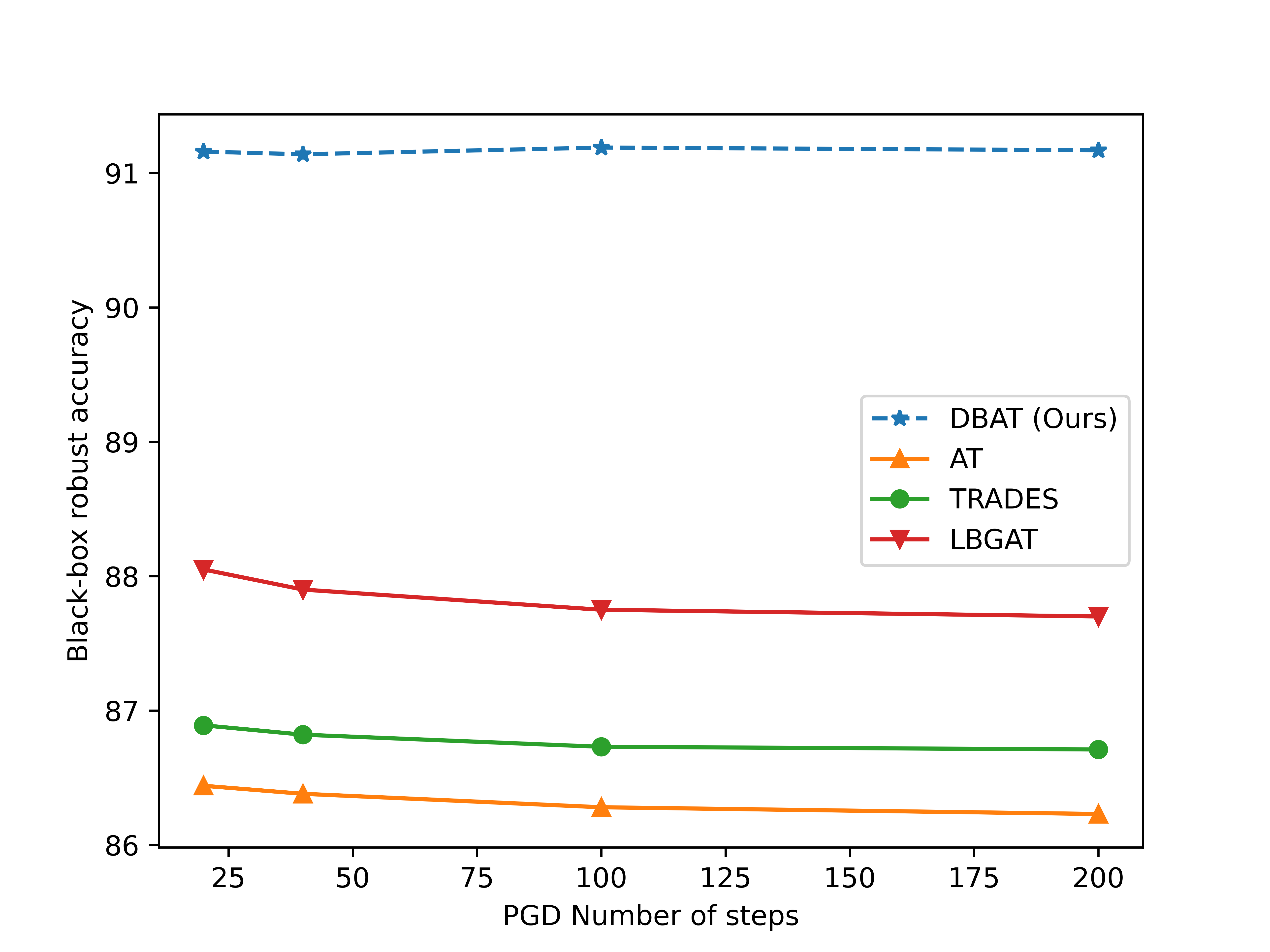}
    \caption{SVHN Black-box}
  \end{subfigure}
  
  \caption{Black-box PGD robustness on (a) CIFAR-10 (b) CIFAR-100, and (c) SVHN.}
  \label{all-pgd-black-box}
\end{figure*}

\begin{table*}[h]
  \caption{White-box robustness against on CIFAR-10.}
  \label{white-box-cifar10-res}
  \centering
  \begin{tabular}{lllllll}
    \toprule
    Defense Model & Natural & PGD$^{20}$ & PGD$^{40}$ & PGD$^{100}$ & PGD$^{200}$ & PGD$^{1000}$ \\
    \midrule
    DBAT (Ours) & \textbf{95.01} &  58.10 & 55.74 & 54.91 & 54.61 & 54.25  \\
    AT & 85.10 & 56.28 & 54.78 & 54.46 & 54.45 & 54.40  \\
    TRADES & 84.92 & 56.60 & 55.88 & 55.56 & 55.56 & 56.43  \\
    LBGAT & 88.22 & 55.89 & 54.64 & 54.31 & 54.28 & 54.26  \\
    \bottomrule
  \end{tabular}
\end{table*}

\begin{table*}[ht]
  \caption{Robustness against black-box attacks on CIFAR-10. Black-box adversary is a naturally trained surrogate model.}
  \centering
  \begin{tabular}{lllllll}
    \toprule
    Defense Model & Natural & PGD$^{20}$ & PGD$^{40}$ & PGD$^{100}$ & PGD$^{200}$ & PGD$^{1000}$ \\
    \midrule
    DBAT (Ours) & \textbf{95.01} & \textbf{87.00} & \textbf{87.01} & \textbf{86.97} & \textbf{86.86} & \textbf{86.74}   \\
    AT & 85.10 & 84.22 & 84.06 & 84.14 & 83.97 & 83.96   \\
    TRADES & 84.92 & 84.08 & 83.98 & 83.89 & 83.84 & 83.80   \\
    LBGAT & 88.22 & 87.23 & 87.13 & 87.03 & 87.03 & 80.01 \\
    \bottomrule
  \end{tabular}
\label{black-box-cifar10-res}
\end{table*}

\section{Full Experimental Setup Details}
\label{exp-details}
We conduct our experiments on CIFAR10, CIFAR100 \cite{krizhevsky2009learning} and SVHN \cite{netzer2011reading}.
We use the wide residual network (WRN-34-10) \cite{zagoruyko2016wide} architecture for CIFAR-10 and CIFAR-100, and the PreAct ResNet-18 \cite{he2016identity} for SVHN. Following \cite{rice2020overfitting}, we apply early-stopping based on a validation set.
The batch size is set to 128, weight decay is set to $7e^{-4}$. We train the model for 200 epochs with an initial learning rate of 0.1. For CIFAR datasets, the learning rate is decayed by a factor of 10 at iterations 50 and 150. For SVHN, the learning rate is decayed by a factor of 10 at iterations 50 and 75. 
Natural images are padded with 4-pixel padding with 32-random crop and random horizontal flip. Furthermore, all methods are trained using SGD with momentum 0.9. 

We combine Stochastic Weight Averaging (SWA) \cite{izmailov2018averaging}, exponential moving average on the model weights during training steps, within our training process. SWA was shown to be effective in training robust models \cite{chen2020robust, rebuffi2021data}, due to its temporal ensemble effect, and the ability to smooth the weights. Additionally, we add Cutout \cite{devries2017improved} data augmentation with a window length of 8.

\section{Additional Results for White-box/Black-box Robustness on CIFAR-10}
\label{white-black-box-rob-cifar10}
Full numerical results for CIFAR-100 are presented in Tables \ref{white-box-cifar10-res}, \ref{black-box-cifar10-res}, and \ref{black-box-cifar-adv}. Attacks are generated using $\ell_{\infty}$-PGD with $\epsilon=8/255$, and perturbation step size 1/255.
Black-box attacks in Table \ref{black-box-cifar10-res} were generated using a naturally trained surrogate model, and using adversarial surrogate models in Tables \ref{black-box-cifar-adv} and \ref{black-box-cifar-adv-same-at}.

In Figure \ref{black-box-cifar-adv-same-at}, we show that our method is also robust against a black-box adversary that has knowledge about our architecture and projection procedure, and is able to train a similar surrogate model. We compare our results under an adversary with the same capabilities (knowledge about architecture) against TRADES and AT, and show that our method can maintain similar robustness to the other well-known methods while reaching near-optimal natural accuracy.

\begin{table*}[ht]
  \caption{Black-box attack results with DBAT and adversarially trained surrogate models on CIFAR-10.}
  \label{black-box-cifar-adv}
  \centering
  \small
  \begin{tabular}{ll|ccccc}
    \toprule
    \cmidrule(r){1-2}
    Surrogate model & Target model & Natural (Target) & PGD$^{20}$ & PGD$^{40}$ & PGD$^{100}$ & PGD$^{200}$ \\
    \midrule
    AT & DBAT & \textbf{95.01} & \textbf{67.75} & \textbf{67.13} & \textbf{66.86} & \textbf{66.86} \\
    DBAT & AT & 85.10 & 65.55 & 64.54 & 64.36 & 64.29 \\
    \midrule
    TRADES & DBAT & \textbf{95.01} & \textbf{68.12} & \textbf{67.21} & \textbf{67.04} &  \textbf{67.03}  \\
    DBAT & TRADES & 84.92 & 66.42 & 65.86 & 65.75 & 65.71  \\
    \bottomrule
  \end{tabular}
\end{table*}

\begin{table*}[ht]
  \caption{Black-box attack results where each surrogate and target models are a pair of separately trained models using the same adversarial training method and same architecture.}
  \label{black-box-cifar-adv-same-at}
  \centering
  \small
  \begin{tabular}{ll|ccccc}
    \toprule
    \cmidrule(r){1-2}
    Surrogate model & Target model & Natural & PGD$^{20}$ & PGD$^{40}$ & PGD$^{100}$ & PGD$^{200}$ \\
    \midrule
    DBAT (Ours) & DBAT (Ours) & \textbf{95.01} & 69.25 & 68.30 & 68.11 & 68.07 \\
    TRADES & TRADES & 84.92 & 66.15 & 65.64  & 65.59 &  65.58 \\
    AT & AT & 85.10 & 67.70 & 67.27 & 67.24 & 67.24  \\
    \bottomrule
  \end{tabular}
\end{table*}

\section{Additional Classes Visualizations on the Features Space}
\label{vls-viz2}
In Figure \ref{tsne-emb}, we visually present the strong class separation obtained by DBAT for the original classes, the newly generated adversarial classes, and the combination of all the 20 classes for CIFAR-10 on the \textbf{embedded feature space}.

\begin{figure*}
  \centering
  \begin{subfigure}{0.25\linewidth}
    \includegraphics[width=0.97\linewidth]{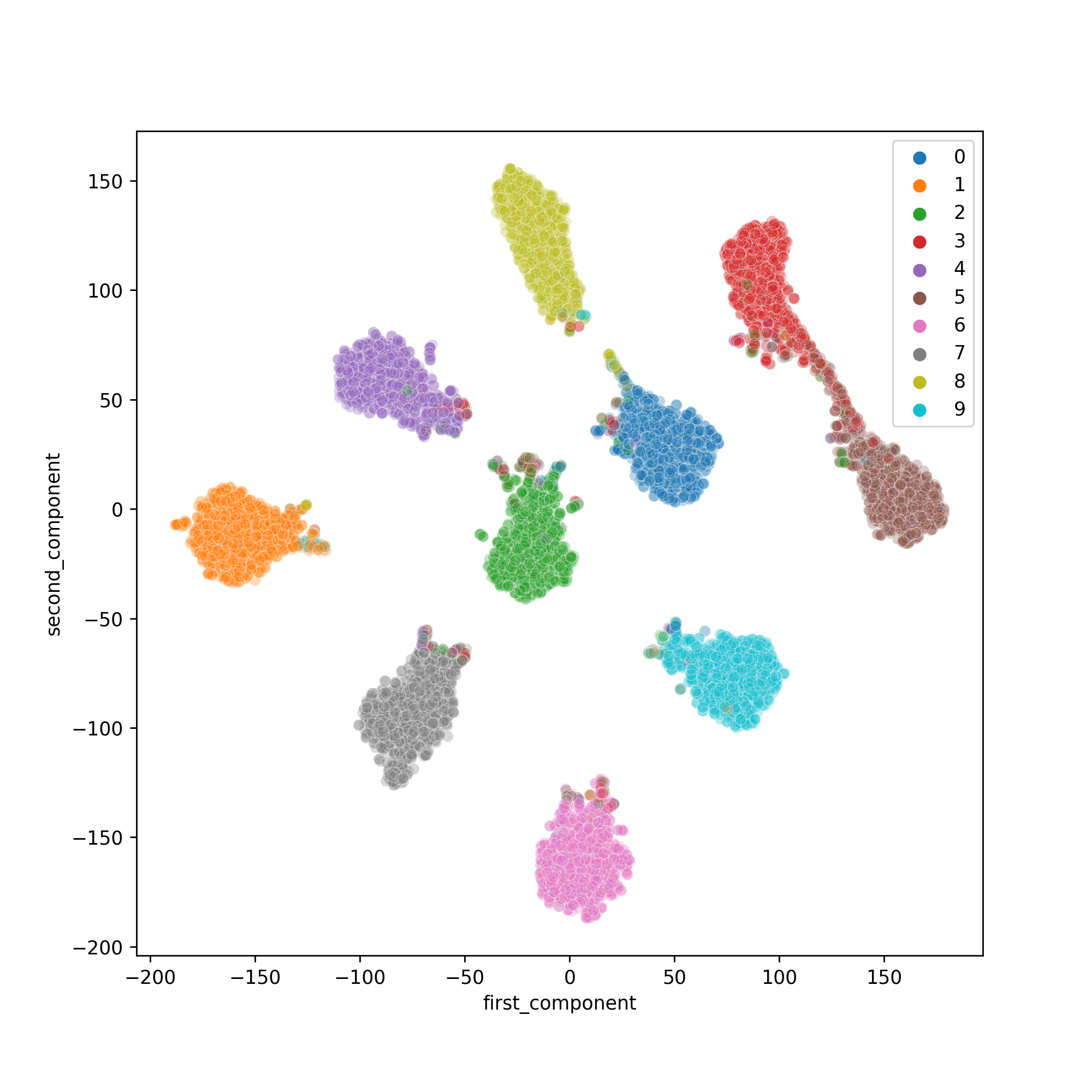}
    \caption{DBAT \textbf{features} for original examples on corresponding original classes}
  \end{subfigure}
  \hspace{0.03\textwidth}
  \begin{subfigure}{0.25\linewidth}
    \includegraphics[width=0.97\linewidth]{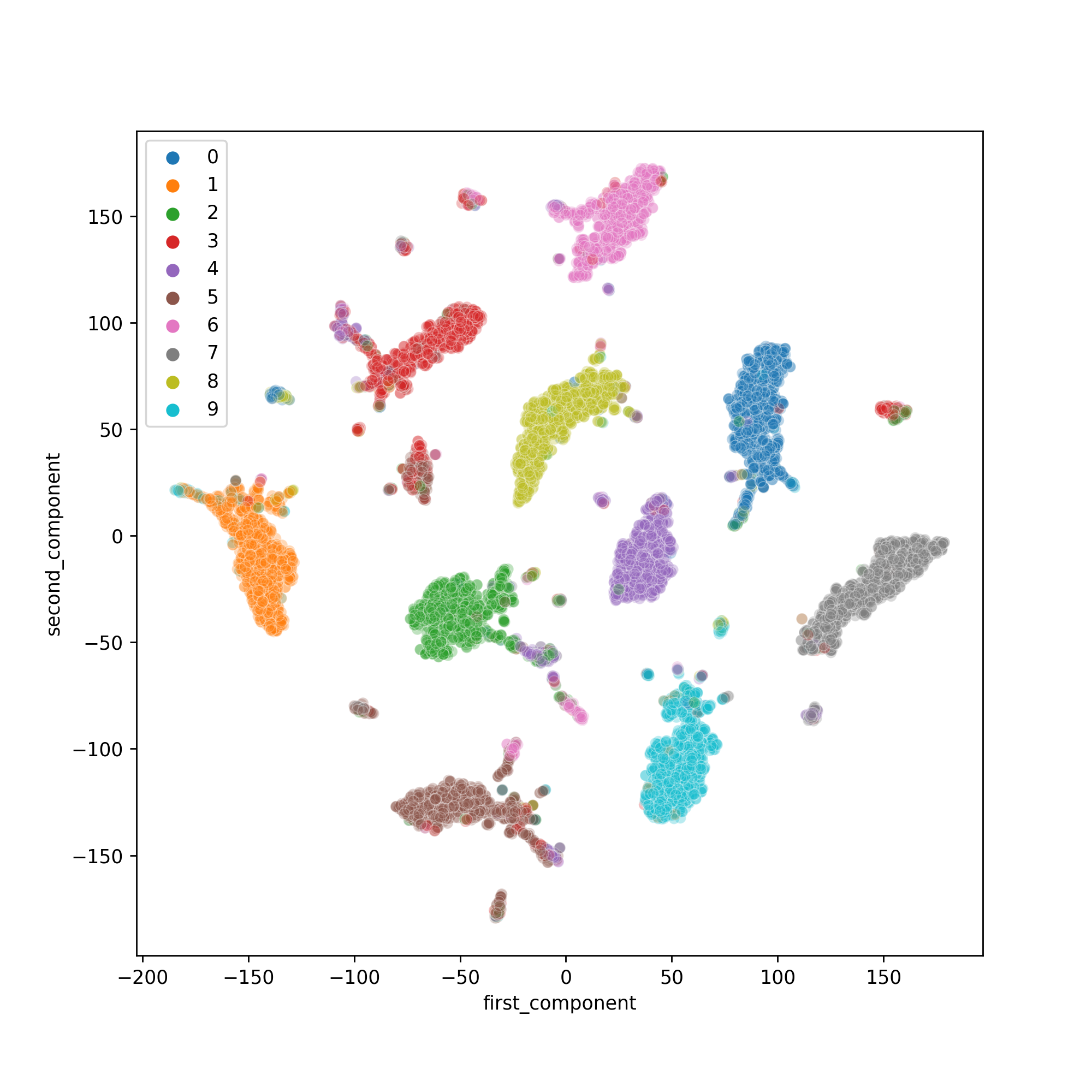}
    \caption{DBAT \textbf{features} for adv. examples on new generated adv. classes}
  \end{subfigure}
  \hspace{0.03\textwidth}
  \begin{subfigure}{0.25\linewidth}
    \includegraphics[width=0.97\linewidth]{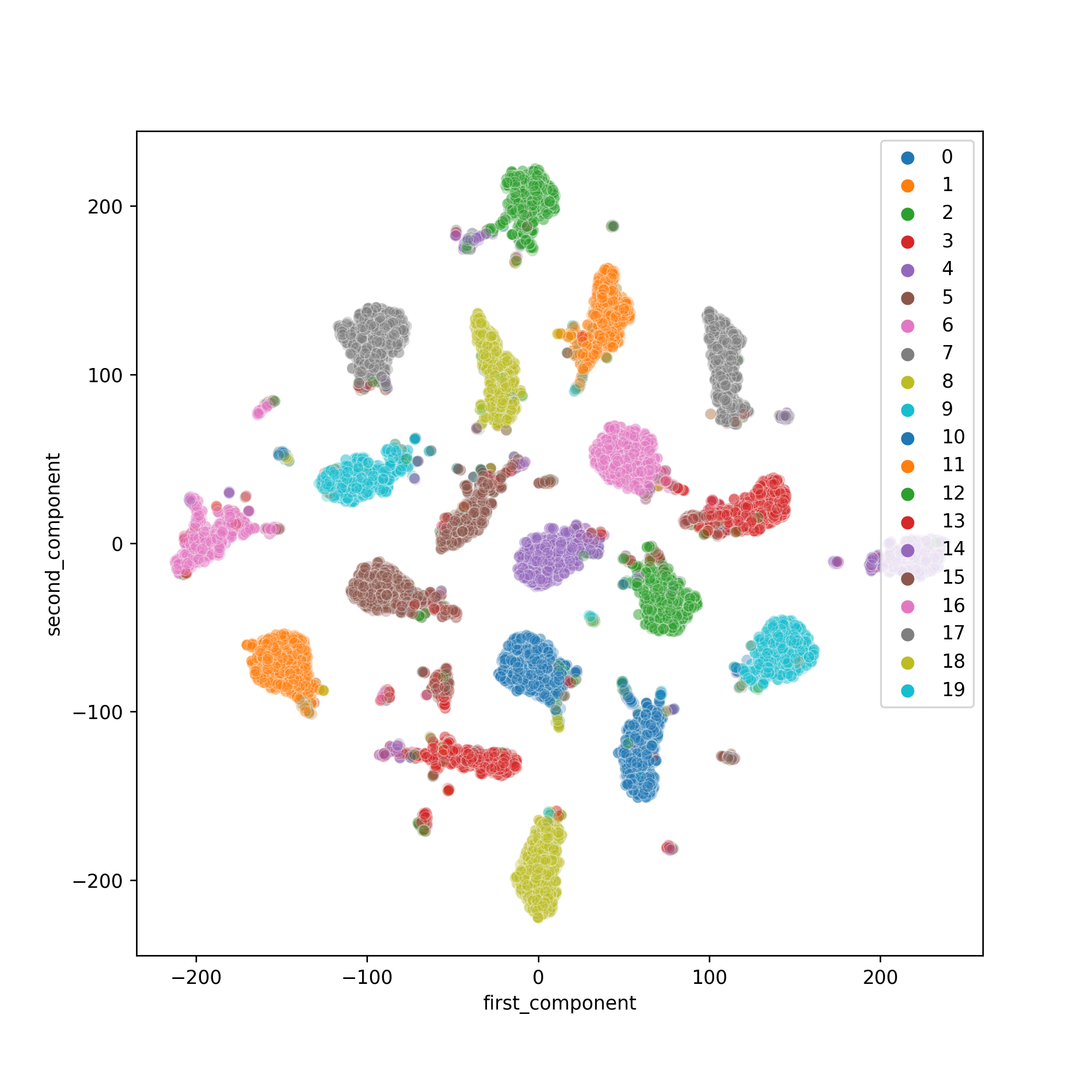}
    \caption{DBAT \textbf{features} for both clean and adversarial examples on all classes}
  \end{subfigure}
  \caption{Visualizing the \textbf{embedded feature space} of DBAT on CIFAR-10 test set using T-SNE \cite{van2008visualizing} with two components on the model output for (a) clean examples (b) adversarial examples with their new generated adversarial classes (c) combined 2-D visualization of both clean and adversarial examples with all 20 classes (same color for natural class and its adversarial counterpart). We can observe the strong separation between classes obtained by DBAT, for both original and newly generated classes.}
  \label{tsne-emb}
\end{figure*}

\section{Additional Results for Unforeseen Adversaries Robustness}
\label{unforeseen-adversaries}
In Tables \ref{l2-res}, \ref{l1-res}, \ref{l2-deepfool-res}, \ref{linf-deepfool-res}, and in \ref{cw-res}.
We used $\epsilon=0.5$ for $\ell_{2}$-PGD, $\epsilon=12$ for $\ell_{1}$-PGD, and overshoot of 0.02 for DeepFool.

\begin{table*}[ht]
  \caption{Robustness against white-box $\ell_{2}$-PGD adversary on CIFAR-10.}
  \label{l2-res}
  \centering
  \begin{tabular}{lllllll}
    \toprule
    Defense Model & Natural & PGD$^{20}$ & PGD$^{40}$ & PGD$^{60}$ & PGD$^{80}$ & PGD$^{100}$ \\
    \midrule
    DBAT (Ours) & \textbf{95.01} &  \textbf{81.74} & \textbf{79.63} & \textbf{78.34} & \textbf{77.47} & \textbf{76.45} \\
    AT & 85.10 & 76.82 & 70.33 & 66.36 & 64.57 & 63.77  \\
    TRADES & 84.92 & 76.57 & 69.74 & 65.83 & 63.72 & 62.60  \\
    LBGAT & 88.22 & 79.31 & 71.76 & 67.18 & 64.64 & 63.47 \\
    \bottomrule
  \end{tabular}
\end{table*}

\begin{table*}[ht]
  \caption{Robustness against white-box $\ell_{1}$-PGD adversary on CIFAR-10.}
  \label{l1-res}
  \centering
  \begin{tabular}{lllllll}
    \toprule
    Defense Model & Natural & PGD$^{20}$ & PGD$^{40}$ & PGD$^{60}$ & PGD$^{80}$ & PGD$^{100}$ \\
    \midrule
    DBAT (Ours) & \textbf{95.01} & \textbf{81.45} & \textbf{80.33} & \textbf{78.97} & \textbf{78.43} & \textbf{77.74} \\
    AT & 85.10 & 76.15 & 69.48 & 64.81 & 61.78 & 59.94  \\
    TRADES & 84.92 & 75.52 & 67.89 & 63.20 & 59.88 & 57.99  \\
    LBGAT & 88.22 & 78.27 & 69.87 & 64.34 & 61.00 & 58.40 \\
    \bottomrule
  \end{tabular}
\end{table*}

\begin{table*}[ht]
  \caption{Robustness against white-box $\ell_{2}$-DeepFool adversary on CIFAR-10.}
  \label{l2-deepfool-res}
  \centering
  \begin{tabular}{lllllll}
    \toprule
    Defense Model & Natural & PGD$^{2}$ & PGD$^{5}$ & PGD$^{10}$ & PGD$^{40}$ & PGD$^{100}$ \\
    \midrule
    DBAT (Ours) & \textbf{95.01} & \textbf{92.78} & \textbf{92.23} & \textbf{92.19} & \textbf{92.17} & \textbf{92.17}  \\
    AT & 85.10 & 84.44 & 84.44 & 84.44 & 84.44 & 84.44  \\
    TRADES & 84.92 & 84.23 & 84.23 & 84.23 & 84.23 & 84.23  \\
    LBGAT & 88.22 & 87.58 &  87.58 &  87.58 &  87.58 &  87.58 \\
    \bottomrule
  \end{tabular}
\end{table*}

\begin{table*}[ht]
  \caption{Robustness against white-box $\ell_{\infty}$-DeepFool adversary on CIFAR-10.}
  \label{linf-deepfool-res}
  \centering
  \begin{tabular}{lllllll}
    \toprule
    Defense Model & Natural & PGD$^{2}$ & PGD$^{5}$ & PGD$^{10}$ & PGD$^{40}$ & PGD$^{100}$ \\
    \midrule
    DBAT (Ours) & \textbf{95.01} & \textbf{83.75} & \textbf{82.12} & \textbf{81.85} & \textbf{81.72} & \textbf{81.65}  \\
    AT & 85.10 & 67.5 & 67.29 & 67.29 & 67.29 & 67.29  \\
    TRADES & 84.92 & 67.88 & 67.87 & 67.87 & 67.87 & 67.87  \\
    LBGAT & 88.22 & 71.80 & 70.89 & 70.89 & 70.89 & 70.89 \\
    \bottomrule
  \end{tabular}
\end{table*}

\begin{table*}[!ht]
  \caption{Robustness against white-box $CW_{\infty}$ adversary on CIFAR-10.}
  \label{cw-res}
  \centering
  \begin{tabular}{lllllll}
    \toprule
    Defense Model & Natural & PGD$^{20}$ & PGD$^{40}$ & PGD$^{60}$ & PGD$^{80}$ & PGD$^{100}$ \\
    \midrule
    DBAT (Ours) & \textbf{95.01} & 57.86 & 56.02 & 55.62 & 55.50 & 55.38 \\
    AT & 85.10 & 55.33 & 54.29 & 54.00 & 54.05 & 54.03  \\
    TRADES & 84.92 & 55.52 & 54.49 & 54.29 & 54.29 & 54.21  \\
    LBGAT & 88.22 & 55.34 & 54.75 & 54.12 & 54.10 & 54.10 \\
    \bottomrule
  \end{tabular}
\end{table*}

\section{Additional Results for Natural Corruptions}
\label{natural-corruption-appendix}
In Tables \ref{corruption-table1}, \ref{corruption-table2} we detail the full results of the natural corruptions test on CIFAR-10-C.

\begin{table*}[!ht]
  \caption{Accuracy against Natural Corruptions.}
  \label{corruption-table1}
  \centering
  \tiny
  \begin{tabular}{lcccccccccccccccccc}
    \toprule
    Defense Model & brightness & defocus blur & fog & glass blur & jpeg compression & motion blur & saturate & snow & speckle noise  \\
    \midrule
    DBAT & \textbf{93.06} & \textbf{91.39} & \textbf{88.53} & \textbf{81.16} & \textbf{91.38} & \textbf{88.82} & \textbf{91.79} & \textbf{88.34} & \textbf{84.81} \\
    AT &  83.30 & 80.42 & 60.22 & 77.90 & 82.73 & 76.64 & 82.31 & 80.37 & 80.74 \\
    TRADES & 82.63 & 80.04 & 60.19 & 78.00 & 82.81 & 76.49 & 81.53 & 80.68 & 80.14 \\
    LBGAT & 85.70 & 83.47 & 62.63 & 80.68 & 85.89 & 79.64 & 85.25 & 82.72 & 83.88 \\

    \bottomrule
  \end{tabular}
\end{table*}

\begin{table*}[!ht]
  \caption{Accuracy against Natural Corruptions.}
  \label{corruption-table2}
  \centering
  \tiny
  \begin{tabular}{lcccccccccccccccccc}
    \toprule
    Defense Model & contrast & elastic transform & frost & gaussian noise & impulse noise & pixelate & shot noise & spatter & zoom blur \\
    \midrule
    DBAT & \textbf{80.84} & \textbf{91.05} & \textbf{86.91} & \textbf{83.82} & \textbf{77.22} & \textbf{91.00} & \textbf{84.84} & \textbf{90.42} & \textbf{90.54} \\
    AT & 43.30 & 79.58 & 77.53 & 79.47 & 73.76 & 82.78 & 80.86 & 80.49 & 79.58 \\
    TRADES & 43.11 & 79.11 & 76.45 & 79.21 & 73.72 & 82.73 & 80.42 & 80.72 & 78.97 \\
    LBGAT & 45.65 & 82.39 & 79.71 & 82.66 & 76.27 & 85.93 & 83.98 & 83.64 & 82.57 \\
    \bottomrule
  \end{tabular}
\end{table*}

\section{Additional Results for CIFAR-100 and SVHN}
\label{cifar100-svhn-appendix}
Full numerical results for CIFAR-100 are presented in Tables \ref{white-box-cifar100-res}, \ref{black-box-cifar100-res} and in Figures \ref{all-pgd-white-box} and \ref{all-pgd-black-box}.
Full numerical results for SVHN are presented in 
Tables \ref{white-box-svhn-res}, \ref{black-box-svhn-res} 
. Attacks are generated using $\ell_{\infty}$-PGD with $\epsilon=8/255$, and perturbation step size 1/255.

\begin{table*}[!ht]
  \caption{White-box robustness on CIFAR-100.}
  \label{white-box-cifar100-res}
  \centering
  \begin{tabular}{lllllll}
    \toprule
    Defense Model & Natural & PGD$^{20}$ & PGD$^{40}$ & PGD$^{100}$ & PGD$^{200}$ & PGD$^{1000}$ \\
    \midrule
    DBAT (Ours) & \textbf{75.18} & 29.95 & 28.15 & 27.22 & 27.04 & 26.67  \\
    AT & 56.73 & 29.57 & 28.60 & 28.45 & 28.45 & 28.39 \\
    TRADES & 58.24 & 30.10 & 29.70 & 29.66 & 29.63 & 29.64 \\
    LBGAT &  60.64 & 35.80 & 34.89 & 34.84 & 34.83 & 34.79 \\
    \bottomrule
  \end{tabular}
\end{table*}

\begin{table*}[!ht]
  \caption{Black-box robustness on CIFAR-100.}
  \label{black-box-cifar100-res}
  \centering
  \begin{tabular}{lllllll}
    \toprule
    Defense Model & Natural & PGD$^{20}$ & PGD$^{40}$ & PGD$^{100}$ & PGD$^{200}$ & PGD$^{1000}$ \\
    \midrule
    DBAT (Ours) & \textbf{75.18} & \textbf{66.94} & \textbf{66.87} & \textbf{66.86} & \textbf{66.80} & \textbf{66.81}  \\
    AT & 56.73 & 55.52 & 55.43 & 55.29 & 55.20 & 55.26 \\
    TRADES & 58.24 & 57.05 & 57.03 & 56.71 & 56.84 & 56.67 \\
    LBGAT &  60.64 & 59.48 & 59.33 & 59.32 & 59.22 & 59.07  \\
    \bottomrule
  \end{tabular}
\end{table*}

\begin{table*}[!ht]
  \caption{White-box robustness on SVHN.}
  \label{white-box-svhn-res}
  \centering
  \begin{tabular}{lllllll}
    \toprule
    Defense Model & Natural & PGD$^{20}$ & PGD$^{40}$ & PGD$^{100}$ & PGD$^{200}$ & PGD$^{1000}$ \\
    \midrule
    DBAT (Ours) & \textbf{96.86} & 53.40 & 50.45 & 49.31 & 48.92 & 48.58 \\
    AT & 89.90 & 53.23 &  50.45 & 49.45 & 49.33 & 49.23 \\
    TRADES & 90.35 & 57.10 & 54.86 & 54.13 & 54.10 & 54.08 \\
    LBGAT & 91.80 & \textbf{69.12} & \textbf{66.05} & \textbf{63.38} & \textbf{61.91} & \textbf{60.13} \\
    \bottomrule
  \end{tabular}
\end{table*}

\begin{table*}[!ht]
  \caption{Black-box robustness on SVHN.}
  \label{black-box-svhn-res}
  \centering
  \begin{tabular}{lllllll}
    \toprule
    Defense Model & Natural & PGD$^{20}$ & PGD$^{40}$ & PGD$^{100}$ & PGD$^{200}$ & PGD$^{1000}$ \\
    \midrule
    DBAT (Ours) & \textbf{96.86} & \textbf{91.16} & \textbf{91.14} & \textbf{91.19} & \textbf{91.17} & \textbf{91.10} \\
    AT & 89.90 & 86.44 & 86.38 & 86.28 & 86.23 & 86.18 \\
    TRADES & 90.35 & 86.89 & 86.82 & 86.73 & 86.71 & 86.57 \\
    LBGAT & 91.80 & 88.05 & 87.90 & 87.75 & 87.70 & 87.59  \\
    \bottomrule
  \end{tabular}
\end{table*}

\section{Training with Targeted vs. Untargeted PGD}
\label{targeted-pgd}
As previously stated, using targeted-PGD on random target labels helps to better generalize the adversarial classes. In Table \ref{tpgd} 
, we show the comparison of the results between random targeted PGD (T-PGD), Least-Likely (least likely label based on the model decision) targeted PGD, and untargeted PGD. Using Least-Likely T-PGD achieves optimal natural accuracy, at the cost of a decrease in robust accuracy. Using untargeted PGD, we experience better AA robustness, at the cost of a decrease in natural accuracy.

\begin{table*}[ht]
\caption{Natural accuracy and Auto-Attack robust accuracy against the strongest adversary, \textit{Inference real-time access}, on CIFAR-10.}
\label{tpgd}
\begin{center}
\begin{small}
\begin{sc}
\begin{tabular}{lcccr}
\toprule
Attack type & Natural Acc. & AA \\
\midrule
Random T-PGD & 95.01 & 40.08 \\
\midrule
Least-Likely T-PGD & \textbf{95.67} & 34.42 \\
\midrule
Untargeted PGD & 92.18 & \textbf{44.13} \\

\bottomrule
\end{tabular}
\end{sc}
\end{small}
\end{center}
\end{table*}

Finally, we conducted a case study experiment on CIFAR-10 for two other training methods, AT \cite{madry2017towards} and TRADES \cite{zhang2019theoretically}, in order to test if random targeted PGD can help improve other methods' results as much as it helped our method. For AT, Auto-Attack robust accuracy was 45.94\%, a decrease of 5.58\% in robustness, with a small improvement of almost 3\% in natural accuracy. For TRADES, Auto-Attack robust accuracy was 52.32\%, a decrease of 0.76\% in robustness, with a small improvement of 1.07\% in natural accuracy. We can conclude that random targeted PGD degrades robustness for the other tested methods with only marginal improvement in natural accuracy.

\section{Computational Resources}
\label{compute}
Training and evaluation were done using a single NVIDIA GeForce RTX 3090, with 24 GB and GDDR6X memory. Regarding training time, DBAT introduces similar training times and resource requirements compared to the other adversarial training methods we tested.

\section{
Rademacher Analysis
}
\label{sec:rade}

\paragraph{Rademacher complexity.}
As mentioned in Section~\ref{synthetic},
the
VC-dimension is impractical in analyzing deep neural networks with a large number of weights. 
We will now argue
that the thrust of our point continues to hold for the {\em Rademacher complexity} as well, which is far more practical as far as providing finite-sample generalization bounds \cite{BartlettFT17,yin2019rademacher}. 
We assume a 
basic
familiarity
with 
this notion
and refer the reader to \cite{mohri2018foundations} for background.
For a brief recap,
if $F$ is a collection of functions mapping some set $\Omega$ to $\R$,
and $X_1,\ldots,X_n$
is sampled iid
from some distribution on $\Omega$, then the (empirical)
Rademacher complexity is defined by
\begin{equation}
R_n(F;X_1,\ldots,X_n)
=\E_\sigma\sup_{f\in F}\frac1n
\sum_{i=1}^n \sigma_i f(X_i),
\end{equation}
where 
expectation is over
the $\sigma_i$,
which are iid {\em Rademacher} variables (i.e., $
\P(\sigma_i=1)
=
\P(\sigma_i=-1)
=1/2
$).
It is a classic fact 
\cite[Theorem 3.5]{mohri2018foundations}
that the Rademacher complexity
upper-bounds the generalization error:
if $X_1,\ldots,X_n$
is an iid sample, then
\begin{equation} 
\label{eq:rade-mohri}
\gerr(h)
\le
\serr(h)
+
R_n(H;X_1,\ldots,X_n)
+
3\sqrt{\frac{\log(2/\delta)}{2n}}
\end{equation}
holds uniformly over all
$h\in H$ with probability\footnote{
where the randomness is over the sample $X_i$ 
}
at least $1-\delta$.

It is always the case
\cite[Theorem 3.2, Theorem 4.3]{devroye-combinatorial01}
that
\begin{equation}
\label{eq:rade-vc}
    R_n(H;X_1,\ldots,X_n)
    \le
    c\sqrt{\frac{V}{n}},
\end{equation}
where $V$ is the 
VC-dimension of $H$
and $c$ is a universal constant.
Observe that the left-hand side of \eqref{eq:rade-vc} is sensitive to the sampling distribution of $X_i$ (and can be arbitrarily small for very concentrated distributions),
while the right-hand side is distribution-free
---
and hence the bound
in \eqref{eq:rade-vc}
can be rather loose.
In situations where the VC-dimension is very large,
the more delicate Rademacher analysis gives much tighter bounds than VC theory
\cite{BartlettFT17,yin2019rademacher}.

We will make use of a recent
Rademacher
$\ell_{\infty}$ vector contraction
result:
\begin{theorem}\cite{foster2019ell_}\label{thm:contraction} Let $F$ be a $\R^\ell$-valued function class, such that the coordinate projection class is denoted by $F_j = \{w\mapsto f(w)_j \, | \, f\in F \}$, for $1\leq j \leq \ell$. 
Let $(\phi_t)_{t\leq n}$ be a sequence of functions such that each $\phi_t$ is $L$-Lipschitz with respect to $\ell_\infty$ norm.
For any $\alpha >0$, there exists a constant $C_{\alpha}>0$ such that if $|\phi_t(f(w))| \lor ||f(w)||_{\infty} \leq B$,
then it holds for any sequence $\mathbf{w}=(w_1,\cdots,w_n)$,
\begin{align*}
R_n(\phi \circ F  | \mathbf{w})
&:=
E_{\mathbf{\sigma}} \sup_{f\in F}\frac1n\sum_{t=1}^n\sigma_t \phi_t(f_t(w_t)) 
\\
&\leq
C_{\alpha}L\sqrt{\ell}\cdot \max_{i\in [\ell]}\sup_{\mathbf{a}=(a_1,\dots,a_n)}R_n(F_i|\mathbf{a})\cdot 
\\
&\log^{\frac{3}{2}+\alpha}\Bigg(\frac{Bn}{\max_{i\in [\ell]}\sup_{\mathbf{a}=(a_1,\dots,a_n)}R_n(F_i|\mathbf{a})}\Bigg).
\end{align*}
\end{theorem}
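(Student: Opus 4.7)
The plan is to control the Rademacher process
\[
f \mapsto \frac{1}{n}\sum_{t=1}^n \sigma_t \phi_t(f_t(w_t))
\]
by generic chaining, converting the $\ell_\infty$-Lipschitz hypothesis on $\phi_t$ into a multi-scale comparison against the per-coordinate classes $F_j$. First I would reduce to the centered case by subtracting a fixed $f^0 \in F$, which changes the supremum by a constant and leaves the Rademacher complexity unchanged, so that we may assume $\phi_t(0) = 0$ and $|\phi_t(f_t(w_t))|\le LB$ uniformly. By $L$-Lipschitzness in $\ell_\infty$, each pair $f,g \in F$ has sub-Gaussian pseudo-distance bounded by
\[
L \Bigl( \sum_t \| f_t(w_t) - g_t(w_t) \|_\infty^2 \Bigr)^{1/2},
\]
so the chaining will be carried out in the corresponding pseudo-metric $d_\infty$ on $F$.

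Next I would build covers of $F$ in $d_\infty$ out of covers of each coordinate class $F_j$ in the $\ell_2^n$ pseudo-metric $d_j(f,g) := (\sum_t (f_t(w_t)_j - g_t(w_t)_j)^2)^{1/2}$. Because $\|a-b\|_\infty = \max_j |a_j-b_j|$, a product cover at scale $\delta$ takes at most $\prod_j N(F_j, d_j, \delta)$ elements, so its metric entropy satisfies $\log N(F, d_\infty, \delta) \le \sum_j \log N(F_j, d_j, \delta) \le \ell \cdot \log N^*(\delta)$ where $N^*(\delta) := \max_j N(F_j, d_j, \delta)$. Dudley's entropy integral then yields
\[
R_n(\phi \circ F \mid \mathbf{w}) \;\le\; c\, L \int_0^{B} \sqrt{\ell\, \log N^*(\delta)} \, d\delta,
\]
which already exposes the factor $\sqrt{\ell}$. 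The remaining work is to convert this entropy integral back to the scalar Rademacher complexities $R_j^* := \sup_{\mathbf{a}} R_n(F_j\mid \mathbf{a})$: using Sudakov--Dudley duality one bounds $\log N^*(\delta) \lesssim (R_j^*/\delta)^2$ up to logarithmic factors, and performing the integral with a truncation at $\delta \asymp R_j^*/n$ produces the outer $\sqrt{\ell}\cdot \max_j R_n(F_j\mid\cdot)$ factor together with a polylog in $Bn/\max_j R_j^*$. Absorbing the iterated-log overhead through a standard stopping-time argument gives the $\log^{3/2+\alpha}$ factor, with the arbitrary $\alpha>0$ slack tuning the constant $C_\alpha$.

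\paragraph{Main obstacle.} The principal difficulty is that the $\ell_\infty$-Lipschitz hypothesis is genuinely non-separable across coordinates: naively applying scalar Ledoux--Talagrand contraction once per coordinate would yield a factor of $\ell$ rather than $\sqrt{\ell}$, which is hopelessly lossy precisely in the many-class regime of interest to DBAT. The delicate step is the two-level entropy estimate that exchanges an $\ell$-fold product cover (cost $\ell\log N^*$ inside the square root) against a single $\sqrt{\ell}$ in the final bound, rather than the trivial $\ell$ from a union bound outside the integral. Equally, the extra $\alpha$ in the $\log^{3/2+\alpha}$ factor is unavoidable with Dudley's integral alone: the entropy integral diverges logarithmically as $\delta\to 0$, and taming it without moving to full generic chaining \emph{à la} Talagrand costs exactly this slack. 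Replacing Dudley chaining with majorizing measures would in principle allow $\alpha = 0$ but at the price of substantially heavier machinery, so the $3/2+\alpha$ exponent is the natural price for a clean, Dudley-based proof.
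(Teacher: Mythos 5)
First, a point of reference: the paper does not prove this statement at all --- Theorem~\ref{thm:contraction} is imported verbatim from \cite{foster2019ell_} and used as a black box in Appendix~\ref{sec:rade}. So you are not reconstructing an argument the authors give; you are attempting to reprove the Foster--Rakhlin $\ell_\infty$ vector contraction inequality itself, which is a nontrivial standalone result.

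Your plan has a genuine gap at the entropy-comparison step, and it is exactly the step you flag as ``delicate.'' You chain in the pseudo-metric $d_\infty(f,g) = L\bigl(\sum_t \|f_t(w_t)-g_t(w_t)\|_\infty^2\bigr)^{1/2}$ and claim that a product of coordinate covers at scale $\delta$ in the empirical $\ell_2$ metrics $d_j$ is a $\delta$-cover of $F$ in $d_\infty$, giving $\log N(F,d_\infty,\delta) \le \sum_j \log N(F_j,d_j,\delta)$. This is false at matched scales: from $d_j(f,g)\le\delta$ for all $j$ one can only conclude $\sum_t \max_j (f_{t,j}-g_{t,j})^2 \le \sum_t\sum_j (f_{t,j}-g_{t,j})^2 \le \ell\delta^2$, i.e.\ the product cover lives at scale $\sqrt{\ell}\,\delta$, so the correct statement is $\log N(F,d_\infty,\delta) \le \sum_j \log N(F_j,d_j,\delta/\sqrt{\ell})$. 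Feeding this into Dudley's integral and changing variables produces $\ell \int \sqrt{\log N^*(u)}\,du$, not $\sqrt{\ell}\int\sqrt{\log N^*}$ --- precisely the ``hopelessly lossy'' linear-in-$\ell$ bound you set out to avoid. The reason Foster and Rakhlin recover $\sqrt{\ell}$ is that they chain with respect to \emph{sup-norm} covering numbers over the sample ($\max_t\max_j|f_{t,j}-g_{t,j}|$), for which product covers do compose at matched scale, and then they close the loop from coordinate $L_\infty$ covering numbers back to the worst-case Rademacher complexity via the fat-shattering dimension (Rudelson--Vershynin/Mendelson-type bounds). That detour is also the true origin of the $\log^{3/2+\alpha}$ factor; it does not arise from truncating a Dudley integral at $\delta \asymp R^*/n$, and your Sudakov step, stated for $\ell_2$ covering numbers, bounds the wrong quantity for this route. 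As written, the proposal does not yield the stated $\sqrt{\ell}$ dependence.
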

We observe, as in \cite[Theorem 5.1]{attias2021improved}, that $\max(x_1,\ldots,x_\ell)$ 
is
a
$1$-Lipschitz function
with respect to the $\ell_\infty$ norm. It follows (taking $\alpha=1/2$ and specializing the argument of \cite[Theorem 5.1]{attias2021improved} to our simpler case),
that if $H^{\cup \ell}$ is the {\em $\ell$-fold union} of $H$ --- that is, every $h'\in H^{\cup \ell}$ can be expressed as the union of some $\ell$ members of $H$ ---
then we have:
\begin{equation} 
\label{eq:Hcup}
    R_n(H^{\cup \ell};X_1,\ldots,X_n)
    \le
    C\sqrt \ell\max_{i\in[\ell]}\bar R_n(H)\log^2\frac{n}{\bar R_n(H)},
\end{equation}
where $C$ is a universal constant and $\bar R_n(H):=\sup_{(x_1,\ldots,x_n)\in\Omega^n}R_n(H;x_1,\ldots,x_n)$.
Since the Rademacher complexity of the $\ell$-fold union grows roughly as $\sqrt \ell$, it may often be advantageous to re-analyze complex hypotheses as
unions of simple ones --- just as we concluded for the VC-dimension in \cref{sec:theory}.
Thus, we can apply \eqref{eq:rade-mohri} to the two competing approaches: (i) when learning 
$\ell k$ ``simple'' classifiers from the class $H$,
and (ii) when learning
$k$ ``complex'' classifiers from the class $H^{\cup \ell}$.
Ignoring logarithmic factors, and treating $\delta$ as fixed,
we can compare the bound of order roughly 
$R_n(H)+\sqrt{\log(\ell k)/n}$
when splitting the classifiers
(the $\ell k$ inside the log is from the union bound)
and roughly
$\sqrt \ell R_n(H)+\sqrt{\log(k)/n}$
without splitting
(there are only $k$ classes but each incurred a $\sqrt\ell$ factor from \eqref{eq:Hcup}). 
For simple hypothesis classes $H$ (i.e., those with a low Rademacher complexity), this again demonstrates the advantage of splitting.

\mycolorgreen
\section{Distance to Decision Boundary}
\label{distance-decision}
In the following experiment, we wish to demonstrate that the conceptual illustration that we've drawn can exist in real datasets. To do so, we need to demonstrate that most clean examples are relatively close to the decision boundary, in a distance that is half of the perturbation size ($\epsilon/2=4/255$), so that it will support the way we've drawn the intuition in \ref{concept}, where examples "switch" places after the attack. Therefore, we conducted an experiment to calculate the distribution of distance from random clean examples to the decision boundary. We used SVHN with a naturally trained model using PreAct ResNet-18, and 1000 random examples from SVHN's test set. To calculate the distance, we followed the distance estimation suggested in \cite{he2018decision}. We estimate the distance
to a decision boundary in a sample of random directions in the model's input space, starting from a given input point. In each direction, we estimate the distance to a decision boundary by computing
the model's prediction on perturbed inputs at points along the direction and increase the random directions by a magnitude factor (0.002) if the prediction does not change in any of the directions. We perform this search over a set of 1,000 random orthogonal directions. Results are present in Table \ref{distance-hist}. As can be seen, the majority of the examples are very close to the decision boundary, which supports our initial intuition.

\begin{figure}[h]
\mycolorgreen
\mycolorgreen
\begin{center}
\centerline{\includegraphics[width=\columnwidth]{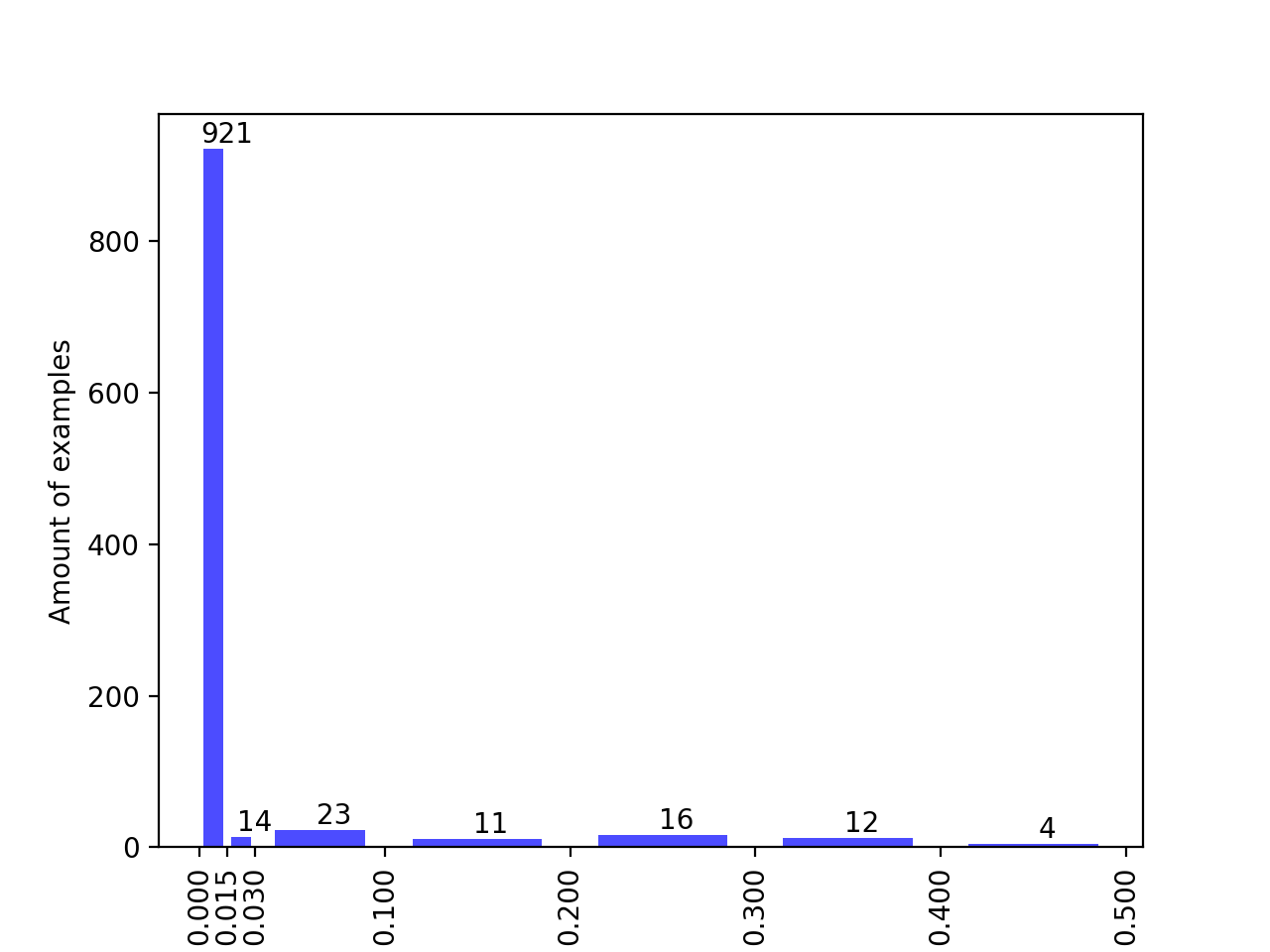}}
\caption{\mycolorgreen Histogram of the estimated distance from the decision boundary for 1000 random examples using SVHN dataset and a naturally trained model. The numbers on the different bins represent the amount of examples that fall into each bin.}
\label{distance-hist}
\end{center}
\normalcolor
\end{figure}

\normalcolor

\end{document}